%% file: main_arxiv.tex
\documentclass[10pt,twocolumn,letterpaper]{article}

\usepackage{cvpr}      % To produce the REVIEW version
\definecolor{cvprblue}{rgb}{0.21,0.49,0.74}
\usepackage[pagebackref,breaklinks,colorlinks,allcolors=cvprblue]{hyperref}
\usepackage{multicol}
\usepackage{multirow} 
\usepackage{mathastext}
\usepackage{amsthm} 
\usepackage{textgreek}
\usepackage{algorithm}
\usepackage{algpseudocode}
\usepackage{amsmath, amssymb}
\usepackage{bm}
\usepackage{overpic}
\usepackage{xspace}         
\usepackage{booktabs}       
\usepackage{tabularx}
\usepackage{utfsym}
\usepackage{microtype}
\usepackage{rotating}
\usepackage{pdflscape}

\usepackage[utf8]{inputenc}
\usepackage{tcolorbox}
\usepackage{lipsum} 
\usepackage{float}
\usepackage{placeins}

\definecolor{lightlightgray}{gray}{0.9}

\input{preamble}

\def\paperID{} % *** Enter the Paper ID here
\def\confName{CVPR}
\def\confYear{2026}

\title{\papertitle}

\author{
Zanxi Ruan$^{1}$ \quad
Songqun Gao$^{2}$\\\quad
Qiuyu Kong$^{1,3}$\thanks{Corresponding author.} \quad
Yiming Wang$^{4}$ \quad
Marco Cristani$^{1,5}$\\[0.5em]
$^{1}$University of Verona \quad
$^{3}$University of Trento\\
$^{2}$Sapienza University of Rome \quad
$^{4}$Fondazione Bruno Kessler \quad
$^{5}$Reykjavik University
}

\begin{document}
\maketitle
\input{sec/abstract}
\input{sec/intro}

\input{sec/related}
\input{sec/method}

%\input{sec/3bis_proof}
\input{sec/experiments}
%\input{sec/why}
\input{sec/conclusion}

\newpage
{
    \small
    \bibliographystyle{ieeenat_fullname}
    \bibliography{main_new}
}

\input{sec/X_suppl}

% WARNING: do not forget to delete the supplementary pages from your submission 
% \input{sec/X_suppl}

\end{document}

%% file: preamble.tex
\newcommand{\mc}[1]{{\color{teal}[MC: #1]}}

\newcommand{\ourmethod}{StructXLIP\xspace}
\newcommand{\ourmethodfull}{$\text{\ourmethod}_{\text{FULL}}$\xspace}

\newcommand{\ourmethodaux}{$\text{\ourmethod}_{\text{AUXL}}$\xspace}
\newcommand{\ft}{fine-tuning\xspace}
\usepackage{multirow}
\usepackage[table]{xcolor}
\usepackage{cuted}  % required by teaserfigure in some templates
\usepackage{wrapfig}
\definecolor{morandiblue}{RGB}{104,137,182}
\definecolor{morandigreen}{HTML}{CBBF9D}
\definecolor{morandipink}{HTML}{EFDAD9}

\renewcommand{\arraystretch}{1.12}
\definecolor{cvprblue}{rgb}{0.21,0.49,0.74}
\usepackage[pagebackref,breaklinks,colorlinks,allcolors=cvprblue]{hyperref}

\newtheorem{theorem}{Theorem}

\newtheorem{lemma}{Lemma}

\newcommand{\papertitle}{\spaceskip=0.18 em\mbox{StructXLIP: Enhancing Vision-language Models with Multimodal Structural Cues}}

\newwrite\titlefile
\definecolor{MySageGreen}{RGB}{188, 204, 176}

\definecolor{panelgray}{RGB}{242,244,248}
\definecolor{datasetbar}{RGB}{242, 242, 247}
\definecolor{deltabg}{RGB}{242, 242, 247}  
\definecolor{lightpink}{RGB}{251, 220, 235}
\definecolor{lightblue}{RGB}{203, 220, 235}
\definecolor{lightgreen}{RGB}{118,199,86}
\definecolor{lightred}{RGB}{255, 107, 107}
\definecolor{MyBlue}{HTML}{2271B6}
\definecolor{MyGreen}{HTML}{16AE11} \definecolor{myyellow}{HTML}{F7DB91}
\definecolor{MyRed}{HTML}{B85D89}   
\definecolor{freshgreen}{HTML}{A8DF8E}
\definecolor{lightlightgray}{gray}{0.9}
\newcommand{\gain}[1]{\textcolor{lightgreen}{$\uparrow$ \textbf{#1}}}
\newcommand{\loss}[1]{\textcolor{lightred}{$\downarrow$ \textbf{#1}}}
\newcommand{\neutral}[1]{#1}

\newcommand{\lightmidrule}{%
  \arrayrulecolor{gray!40}\midrule\arrayrulecolor{black}%
}

\usepackage{siunitx}

\newcommand{\inlineColorbox}[2]{\begingroup\setlength{\fboxsep}{1pt}\colorbox{#1}{\hspace*{2pt}\vphantom{Ay}#2\hspace*{2pt}}\endgroup}

\newcommand{\suppmat}{\textit{Supp. Mat.}}

\input{notations}

%% file: notations.tex
\newcommand{\Real}{\mathbb{R}}
\newcommand{\sharedspace}{\Real^d}%
\newcommand{\textspace}{\mathcal{L}}
\newcommand{\visualspace}{\mathcal{V}} 
\newcommand{\image}{I}
\newcommand{\desc}{T}
\newcommand{\imagefeat}{\mathbf{i}}
\newcommand{\textfeat}{\mathbf{t}}
\newcommand{\structimagefeat}{\mathbf{i'}}
\newcommand{\structtextfeat}{\mathbf{t'}}
\newcommand{\prompfiltering}{P}
\newcommand{\imagestruct}{I'}
\newcommand{\textstruct}{T'}
\newcommand{\tempstruct}{\tau'}
\newcommand{\vocappearance}{\mathcal{V}^{a}}
\newcommand{\structextractor}{\mathcal{E}}
\newcommand{\originalloss}{\mathcal{L}_{I,T}}
\newcommand{\structloss}{\mathcal{L}_{I',T'}}
\newcommand{\structlosslocal}{\mathcal{L}_{I',T'}^{local}}
\newcommand{\consistencyloss}{\mathcal{L}_{I,I'}}
\newcommand{\ourloss}{\mathcal{L}^{*}}
\newcommand{\visualencoder}{f_{\mathtt{img}}} % Visual encoder
\newcommand{\textencoder}{f_{\mathtt{txt}}}   % Text encoder
\newcommand{\structsimilarity}{s'}
\newcommand{\imageregion}{r}
\newcommand{\textchunk}{c}
\newcommand{\imageregionset}{\mathcal{R}}
\newcommand{\textchunkset}{\mathcal{C}}
\newcommand{\maskset}{\mathcal{K}}

\newcommand{\mutualinfo}{I_{\text{MI}}}

%% file: sec/abstract.tex
\begin{abstract}
Edge-based representations are fundamental cues for visual understanding, a principle rooted in early vision research and still central today. 
We extend this principle to vision-language alignment, showing that isolating and aligning structural cues across modalities can greatly benefit \ft on long, detail-rich captions, with a specific focus on improving cross modal retrieval.
We introduce \ourmethod,
a \ft alignment paradigm that extracts edge maps (e.g., Canny), treating them as proxies for the visual structure of an image, and filters the corresponding captions to emphasize structural cues, making them ``structure-centric''.
Fine-tuning augments the standard alignment loss with three structure-centric losses: (i) aligning edge maps with structural text, (ii) matching local edge regions to textual chunks, and (iii) connecting edge maps to color images to prevent representation drift.
From a theoretical standpoint, while standard CLIP maximizes the mutual information between visual and textual embeddings,
\ourmethod~additionally maximizes the mutual information between multimodal structural representations.
This auxiliary optimization is intrinsically harder, guiding the model toward more robust and semantically stable minima, enhancing vision-language alignment.
Beyond outperforming current competitors on cross-modal retrieval on both general and specialized domains, our method serves as a general boosting recipe that can be integrated into future approaches in a plug-and-play manner. Code and pretrained models are publicly available at: 
\url{https://github.com/intelligolabs/StructXLIP}.
\end{abstract}

%% file: sec/intro.tex
\section{Introduction}
\label{sec:intro}
Vision-language models (VLMs)~\cite{radford2021learning,zhai2023siglip}
have become foundational ingredients in modern computer vision, thanks to their aligned representations across visual and textual modalities.
While the pre-trained VLMs, such as CLIP~\cite{radford2021learning}, have demonstrated impressive generalization capability, \ft is often required to be competitive on downstream tasks in specific domains~\cite{zhang2024longclip,he2025double, xu2025fate,noori2025test,zhai2022lit,zhou2022coop,zhou2022cocoop,khattak2023maple,zhu2023prograd,zhang2022tipadapter,gao2023clipadapter,hu2022lora,liu2024dora,yang2024mma,guo2025mmrl,jiang2025ucdr}. 
%has become the primary means of interacting with and adapting these models to specific domains.
\begin{figure}[t!]
    \centering
    \includegraphics[width=0.95\linewidth]{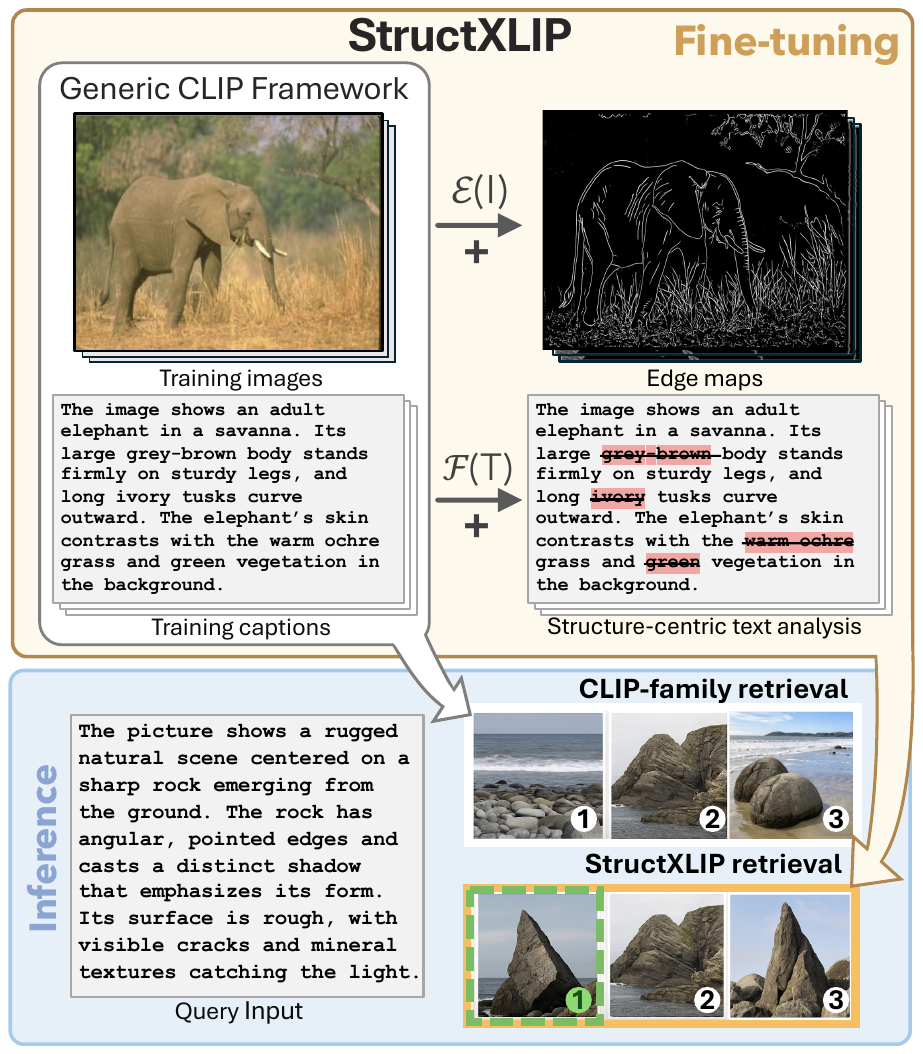}
    \caption{\ourmethod~performs \ft by adding to the standard image-text alignment with multimodal structural cue: edge maps and structure-centric captions. \ourmethod~consistently improves downstream vision–language retrieval inference.}
    %\caption{\ourmethod~can be seamlessly integrated into any CLIP-family model. It operates by generating edge maps using standard edge detectors and by refining captions to remove non-structural information such as color or material terms. This structure-centric fine-tuning consistently improves downstream vision–language retrieval performance.}
    \label{fig:example}
    \vspace{-10pt}
\end{figure}
Yet, \ft VLMs remains challenging when images contain rich visual structures and captions are long and semantically dense.
Pre-trained VLMs~\cite{radford2021learning,zhai2023siglip} are limited in such scenarios due to the token-length constraint~\cite{zhang2024longclip,goal2025,smartclip2025}. To overcome this limitation, Long-CLIP~\cite{zhang2024longclip} extends CLIP through positional interpolation to accommodate longer captions. More recent advances, such as GOAL~\cite{goal2025} and SmartCLIP~\cite{smartclip2025}, improve the efficiency of long-text alignment by reinforcing global-local correspondences and modular token-level interactions.
%enabling robust fine-tuning even with compact datasets and modest computational budgets.
Yet, these approaches rely on the semantic patterns learned from pre-training corpora, and their generalization can degrade in domains where visual or linguistic cues are scarce or poorly represented.

Departing from such dependency on
%model- or data-dependent 
semantic alignment, we introduce \ourmethod, a vision-language \ft framework that shifts the focus toward a more fundamental and accessible signal: the geometrical structure of the image, expressed through its \emph{edges}. In decades of computer vision research, edges have long been recognized as essential primitives for visual understanding~\cite{marr1980theory,survey2002edge}, encoding object boundaries and spatial layout.
Building on this foundation, \ourmethod extracts edge maps to capture visual structures (see Fig.~\ref{fig:example}). Crucially, for the first time, we extend this structural perspective to language counterpart. We filter structure-centric captions by removing tokens tied to appearance-driven cues as color and material, aiming to retain only linguistic elements that feature shape, geometry, and spatial relations. We pair edge maps with such structure-centric text, enforcing multimodal alignment on the basis of structural cues rather than appearance. The alignment process supplements existing text-image alignment
%a Long-CLIP-derived contrastive loss
with three structure-centric losses. The first aligns edge maps with the filtered text, the second establishes fine-grained correspondences between local edge regions and chunks of the filtered text, and the third links edge maps to the original color images to prevent representational drift. After fine-tuning, \ourmethod yields encoders that process simple images and text and produce embeddings enriched with structure-centered information, resulting in stronger alignment. Importantly, inference requires no additional processing and introduces no extra computational overhead. 

%From a theoretical standpoint, \ourmethod can be viewed as maximizing the mutual information between the visual and textual channels, approximated by the InfoNCE loss~\cite{math_lower_bound,poole2019variational}. \ourmethod extends this objective by introducing an additional maximization between the structure-centric multimodal representations. This objective serves as a correlated, information-reduced task which introduces gradient diversity~\cite{gradientSGDvariantNIPS20} without conflicting with the main optimization.
From a theoretical standpoint, \ourmethod can be interpreted through an information-theoretic lens. Standard alignment methods~\cite{radford2021learning,zhang2024longclip} minimize the InfoNCE contrastive loss, which corresponds to maximizing a mutual information lower bound between images and text~\cite{math_lower_bound,poole2019variational}. \ourmethod extends this objective by adding a second maximization composite term focused on structure-centric multimodal representations. This auxiliary objective acts as a correlated, information-reduced task that introduces gradient diversity~\cite{gradientSGDvariantNIPS20} without interfering with the main optimization.
In fact, information-theoretic analyses of multitask learning~\cite{MTLasMOO.NIPS18} show that auxiliary losses with lower mutual information act as an implicit regularizer, expanding the effective search space and improving convergence. 
\ourmethod sets new state-of-the-art cross-modal retrieval results on both general-purpose dense-caption benchmarks and domain-specific datasets dominated by fine visual details, such as fashion and fine-grained classification. Its three structure-centric losses also demonstrate potential as a ``universal'' booster: when plugged into existing fine-tuning frameworks, performance improves across the board while maintaining competitive generalization. Ablations show that any edge extractor is effective, with classical operators like Canny and LoG performing the best. Our explainability analyses also confirm that the model consistently attends to human-interpretable, shape-driven regions. %Mutual-information analyses further validate our theoretical claims by revealing more stable and informative optimization dynamics. 
Notably, \ourmethod excels even in low-data regimes, benefiting from a strong structure-centric inductive bias that enables robust learning with limited data and supervision.

%Contribution summary
%\noindent \textbf{Our main contributions} are summarized as follows:
%\begin{itemize}
%    \item We demonstrate that incorporating multimodal, structure-centric cues into standard contrastive learning significantly enhances vision–language alignment.
%    \item Our \ourmethod~is plug-and-play and effective, outperforming recent vision–language finetuning methods without introducing architectural complexity.
 %   \item Our theoretical motivation based on mutual information and optimization analysis, shows how structure-centric alignment enhances stability and convergence.
 %   \item Our extensive analysis demonstrates that \ourmethod achieves strong generalization and interpretability, remaining robust to the edge extraction choice.
%\end{itemize}

\noindent\textbf{Our main contributions} are as follows:
\begin{itemize}
\item We show that injecting structure-centric, multimodal cues into contrastive learning substantially improves long-text vision-language alignment.
\item \ourmethod outperforms recent fine-tuning methods without adding architectural complexity.
\item Our mutual-information and optimization analysis explains why structure-centric alignment stabilizes fine-tuning and accelerates convergence.
\item Our structure-centric alignment losses serve as a plug-and-play booster to diverse \ft frameworks, while maintaining competitive generalization.
\item Extensive experiments confirm the effectiveness of our loss designs, and the robustness to different edge/textual extraction choices and qualities.
\end{itemize}

%[MARCO this below can help anchoring our idea to the literature]
% \yiming{need to inject it somewhere}Recent studies show that introducing structural priors early in the learning pipeline can guide multimodal alignment without replacing the raw input. 
% Disentangled or geometry-aware representations have been used to improve compositional reasoning and retrieval performance \cite{li2024deal,zhou2022vdr,jin2023dicosa}.
%This unified treatment assumes that both modalities encode comparable semantic content, overlooking the fact that images and texts inherently capture different kinds of information. 

%Recent works such as DEAL~\cite{li2024deal}, DisCoCLIP~\cite{discoclip2025}, TaSe~\cite{tase2025}, and SmartCLIP~\cite{smartclip2025} have started to explore disentanglement in vision--language alignment, mainly by decomposing textual semantics or modularizing alignment mechanisms. 
%Yet these approaches still treat the visual and textual streams as if they carried the same type of information.

%% file: sec/related.tex
\section{Related Works}
\label{sec:sota}

\noindent\textbf{Vision-language alignment.}
Contrastive learning on image-text pairs remains the dominant paradigm for VLMs, with two main loss formulations: CLIP’s InfoNCE objective~\cite{radford2021learning} and SigLIP/SigLIP2’s pairwise sigmoid matching~\cite{zhai2023siglip,zhai2023siglip2}. Subsequent dual-encoder variants mostly improve this framework through debiasing, adaptive weighting, or geometric regularization (DeCLIP~\cite{li2021declip}, ALIP~\cite{dong2023alip}, GOAL~\cite{goal2025}, SmartCLIP~\cite{smartclip2025}, CoAPT~\cite{wanglearning}, AVSE~\cite{liu2025asymmetric}). 
Architectural scaling (EVA-CLIP~\cite{fang2023eva}, Florence-2~\cite{yuan2024florence}) and text-capacity extensions such as Long-CLIP~\cite{zhang2024longclip} to enable long-text handling.
Fine-tuning strategies span full-model adaptation to parameter-efficient tuning: Double-Filter, FATE, MLMP~\cite{he2025double,xu2025fate,noori2025test}, text-only tuning (LiT~\cite{zhai2022lit}), prompt learning (CoOp, CoCoOp, MaPLe, ProGrad~\cite{zhou2022coop,zhou2022cocoop,khattak2023maple,zhu2023prograd}), and lightweight adapters or low-rank modules (CLIP-Adapter, Tip-Adapter, LoRA, DoRA, Mma, Mmrl, UCDR-Adapter~\cite{zhang2022tipadapter,gao2023clipadapter,hu2022lora,liu2024dora,yang2024mma,guo2025mmrl,jiang2025ucdr}). These methods retain the standard contrastive setup and mainly enhance alignment efficiency or generalization.

\noindent\textbf{VLMs for long-text alignment.}
Despite recent advances in visual-textual alignment, most VLMs remain constrained by the $\sim$77-token limit of CLIP-style text encoders, hindering their ability to model rich detail descriptions. This has motivated a line of methods targeting long-text alignment in VLMs. Position-encoding extensions (Long-CLIP~\cite{zhang2024long}, TULIP~\cite{najdenkoska2024tulip}, V2PE~\cite{ge2025v2pe}, FineLIP~\cite{asokan2025finelip}) expand effective context length while preserving pretrained semantics. Caption decomposition approaches (DreamLIP~\cite{zheng2024dreamlip}, longAlign~\cite{liu2024improving}) align sub-captions with local regions for finer grounding. Dataset- or structure-driven strategies introduce long-text corpora and learnable structural tokens (LoTLIP~\cite{wu2024lotlip}) or region-level masked captioning~\cite{urbanek2024picture}. LLM-assisted pipelines such as MATE~\cite{jang2024mate} further project long texts into the VLM embedding space through multi-stage transformation.  
These methods operate on CLIP-based foundations and differ mainly in how rich semantics can be better \emph{aligned} via \ft among both modalities. 
Differently, \ourmethod augments standard contrastive fine-tuning with structural cues, serving as general booster to further enhance vision-language alignment. 

%% file: sec/method.tex
\section{\ourmethod}

\ourmethod performs \ft in two stages (Fig.~\ref{fig:overview}). In the \textbf{structure-centric multimodal extraction} stage, it produces complementary structural views by generating edge maps and filtering captions to retain object layout and geometric cues by removing appearance terms. In the \textbf{structure-centric alignment} stage, these structural representations are aligned at both global and local levels together with the original image-text pairs, using dedicated losses to strengthen alignment under semantically rich conditions. The structure-centric signals are used only during fine-tuning. 
Inference solely involves original images and texts, thus no additional cost compared to CLIP, and no need for edge extraction nor structure-centric text filtering. 
%\ourmethod learns unified image and text projectors, so downstream evaluation uses only the original inputs and runs at the same inference cost as CLIP.
%
\subsection{Structure-centric multimodal extraction}
\label{Structural_extraction}
%We first extract structure-centric cues from both visual and textual modalities. 
Let us consider a color image $\image \in \visualspace$ which is visually described by a textual description $\desc \in \textspace$. Each image $\image_i$ and its visual description are paired $(\image_i,~\desc_i)$. 

\noindent\textbf{Visual extraction.}
For each image $\image_i$, we extract its structural view as $\imagestruct_i = \structextractor(\image_i)$, where $\mathcal{E}(\cdot)$ denotes any filter-based~\cite{canny2009computational,marr1980theory} or learning-based~\cite{xie2015holistically,zhang2023adding,liu2021learn} edge detector. \ourmethod is edge-extractor-agnostic as shown in Sec.~\ref{sec:exp}.
\begin{figure}[!t]
    \centering
    \includegraphics[width=1.0\linewidth]{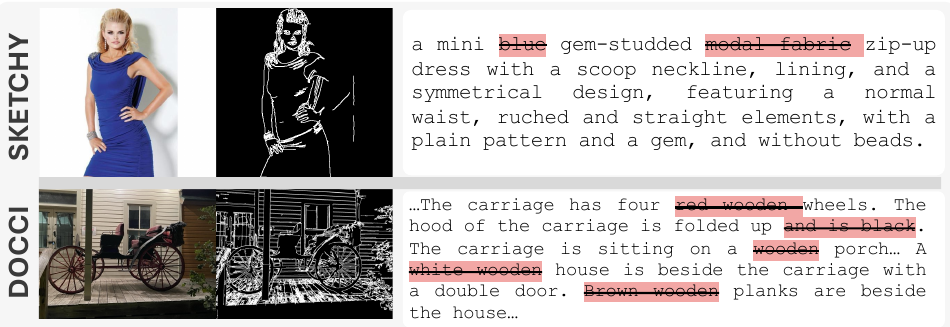} \vspace{-0.5cm}
    \caption{Examples of Visual (left) and Textual (right) extraction. %on the left, visual edges are extracted using Canny~\cite{gonzalez_digital_2018}, while on the right, words related to color and material are removed from the text without compromising its linguistic naturalness.
    }
    \label{fig:edge}
    \vspace{-4pt}
\end{figure}

%in such as learning-based edge detectors or sketch-based generation approaches.

\noindent\textbf{Textual extraction.}
Complementary to the edge-based visual cues, we extract a structure-centric textual description that emphasizes layout, geometric attributes, and spatial relations, without appearance-related terms.
%This choice is motivated by evidence that appearance-based language (specifically, color and material) often fails to align with visual discriminability, whereas spatial and shape-related semantics better capture grounded meaning in vision–language tasks~\cite{xu2023vgse}.
While directly prompting the LLM to perform rewriting of the original $\desc_i$ seems a straightforward solution, it is fragile to hallucinations~\cite{talon2025seeing}. Alternatively, we obtain the structure-centric textual description by \textit{lexicon filtering} to remove appearance terms, specifically, color and materials, from the original captions.
We leverage the strong semantic understanding capability of a large language model (LLM), prompting it to construct a set of appearance-related vocabulary $\vocappearance$ to be filtered from the original description.
%We opt not to directly prompt the LLM to perform rewriting of the original $\desc_i$ to remove appearance-related terms, as such rewriting process is difficult to control in terms of hallucinations or semantic drift. 
Specifically, 
%\begin{equation}
%\vocappearance = \mathcal{LLM}(\prompfiltering),
%\end{equation}
%where $\prompfiltering$ is the prompt that queries the LLM to produce appearance-centric semantic concepts that cannot be directly inferred from the structure-centric visual content $\imagestruct$. The resulted vocabulary $\vocappearance$ broadly covers colors, materials and textures. 
%\begin{equation}
$\vocappearance = \mathcal{LLM}(\prompfiltering)$, 
%\end{equation}
where $\prompfiltering$ is the prompt that queries the LLM to produce semantic concepts related to colors and materials, resulting in the vocabulary $\vocappearance$. %The resulted vocabulary $\vocappearance$ broadly covers colors, materials and textures.
We then obtain the structure-centric text $\textstruct_i$ by filtering out terms that are inside $\vocappearance$ from $\desc_i$:  
%\begin{equation}
$\textstruct_i = \mathcal{F}(\desc_i, \vocappearance)$,
%\end{equation}
where $\mathcal{F}(\cdot)$ is the filtering function based on regular expression matching to remove any semantic concept are within $\vocappearance$, we refer to this process as the \emph{Lexicon Filter}.  Please refer to \suppmat~for the prompt $\prompfiltering$ and vocabulary $\vocappearance$ used in the filtering processes. Fig.~\ref{fig:edge} illustrates examples of the extracted edge maps and the corresponding structure-centric captions.
%The mechanism is annotation-free and transferable to diverse domains.

\begin{figure*}[!ht]
    \centering
    \includegraphics[width=0.8\linewidth]{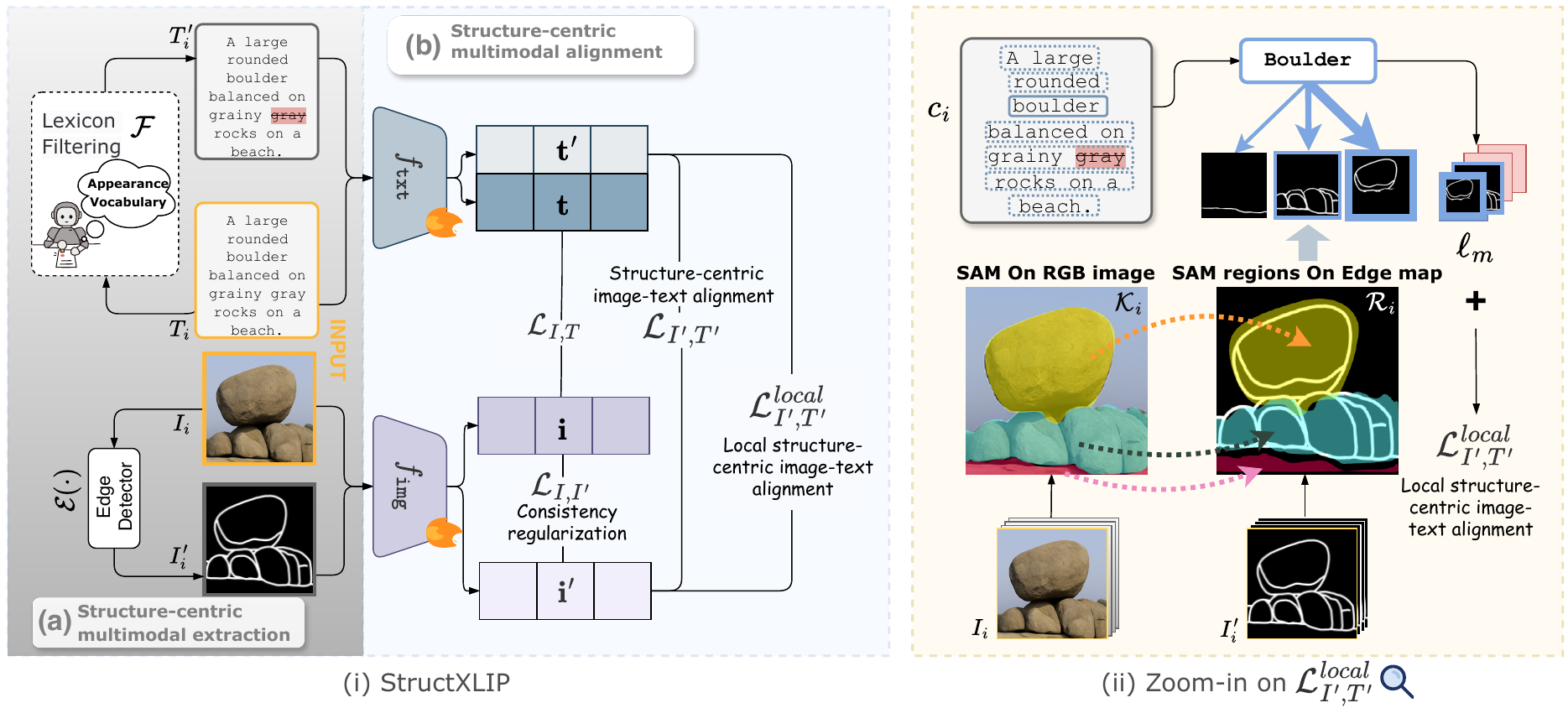}
    \vspace{-5pt}
    \caption{(i) Overview of \ourmethod fine-tuning which operates in two stages. The first \textbf{structure-centric multimodal extraction} stage extracts structural views by generating edge maps from original images via edge detector and performing lexicon filtering on original captions to remove appearance-related terms. The second \textbf{structure-centric multimodal alignment} stage fine-tunes the encoders $\visualencoder$ and $\textencoder$ by joining original image-text alignment $\originalloss$ with our newly introduced structure-centric alignment objectives: the \emph{structure-centric image-text alignment loss} $\structloss$ enforces global structural alignment, the \emph{Local structure-centric image-text alignment loss} $\structlosslocal$ captures local compositional semantics, and the \emph{Consistency regularization loss} $\consistencyloss$ aligns raw images and edge maps to prevent representation drift. At inference, only color image ($\image$) and captions ($\desc$) are input to the fine-tuned $\visualencoder$ and $\textencoder$, neither edge extraction nor lexicon filtering is required. (ii) Zoom-in on the Local structure-centric image-text alignment loss $\structlosslocal$.}
    %from the left, a generic type of CLIP could be adopted. In this paper, we adopt a Long-CLIP-derived contrastive loss. The three dots in parenthesis indicate this freedom of choice. LOSS 1 considers the embedding of the edge maps $\{I'_i\}$ and of the structure-centric textual descriptions $\{T'_i\}$, pairing them as an ordinary CLIP alignment. LOSS 2 introduces fine-grained cross-modal analysis associating every text chunk embedding with the $K$-most related edge regions. Finally the LOSS 3 is a general regularizer which enforce visual and structural visual embeddings to stay close in the embedding space.}
    \label{fig:overview}
    \vspace{-3pt}
\end{figure*}

\subsection{Structure-centric multimodal alignment}\label{Sec:alignment}
Let us consider a VLM $f_\mathtt{VLM}$ with two main elements: the visual encoder $\visualencoder$ and the text encoder $\textencoder$.
As shown in~\cref{fig:overview}, the encoders map their respective inputs into a shared $d$-dimensional embedding space, \ie, $\visualencoder:\visualspace \rightarrow \sharedspace$ and $\textencoder:\textspace \rightarrow \sharedspace$. The resulting features are $\imagefeat_i=\visualencoder(\image_i)$ and $\textfeat_i=\textencoder(\desc_i)$, with structure-centric counterparts $\structimagefeat_i$ and $\structtextfeat_i$.
The original image-text alignment loss $\originalloss$, serving as the alignment objective basis, is expressed as a symmetric InfoNCE loss~\cite{math_lower_bound} with cosine similarity as similarity measure.
In addition, we introduce three auxiliary learning objectives aligns structure-centric multimodal contents at multiple levels: the \emph{Structure-centric image-text alignment loss} $\structloss$ enforces global structural alignment, the \emph{Local structure-centric image-text alignment loss} $\structlosslocal$ captures part-level compositional semantics, and the \emph{Consistency regularization loss} $\consistencyloss$ aligns raw images and structure-centric views to prevent modality drift.
Note that our auxiliary structure-centric losses, could be flexibly integrated to any finetuning framework, as we show in Sec.~\ref{Sec:main.comparisons}.
%To maintain compatibility with the pre-trained VLM space, we keep the original image-text alignment loss $\originalloss$~\yiming{cite CLIP?}, preserving high-level semantic consistency.
%The three auxiliary learning objectives which align structure-centric visual and textual contents at multiple levels are: i) the primary \emph{Structure-centric image-text alignment loss} $\structloss$, featuring the structural-centric image-text alignment to promote the learning of structurally-aligned representation across both modalities; ii) the \emph{Local structure-centric image-text alignment loss}  $\structlosslocal$, which captures compositional semantics between local parts in both modalities; 
%iii) finally, the \emph{Consistency regularization loss} $\consistencyloss$ between the raw color image and the extracted structure-centric visual content, preventing modality drift.  
%Together, these objectives progressively refine the model’s ability to ground structural semantics with increasing granularity.

\noindent\textbf{Structure-centric image-text alignment $\structloss$.}
Using the structure-oriented visual and textual views, we introduce a contrastive objective that aligns $\imagestruct$ and $\textstruct$. We compute their cosine similarity with a dedicated temperature $\tempstruct$, and apply a symmetric InfoNCE loss over the batch. This loss has the same form as the standard image-text contrastive loss, but operates purely on structural cues to enforce multimodal alignment between the structure-preserving representations.
\iffalse
\noindent\textbf{Structure-centric image-text alignment $\structloss$.}
Building upon the extracted structure-centric visual and textual content,
we introduce a structural contrastive objective 
to explicitly align the representation between $\imagestruct$ with $\textstruct$. 
We quantify the similarity $\structsimilarity$ between  $\visualencoder(\imagestruct)$ and $\textencoder(\textstruct)$ as:
\begin{equation}
\structsimilarity = \tempstruct<\visualencoder(\imagestruct),~\textencoder(\textstruct)>,
\end{equation}
where $\tempstruct$ is a separate learnable temperature dedicated to the structure-centric image-text alignment, and $<\cdot,~\cdot>$ is the operator computing the cosine similarity of two vectors.
We aim to minimize the structure-centric cross-modal similarity via the symmetric InfoNCE loss, expressed as:
\begin{equation}
% \resizebox{0.99\linewidth}{!}{$
\structloss 
= \frac{1}{2N} \sum_{i=1}^{N}
\left(
-\log \frac{\exp(\structsimilarity_{i,i})}{\sum_{j=1}^{N}\exp(\structsimilarity_{i,j})}
-\log \frac{\exp(\structsimilarity_{i,i})}{\sum_{j=1}^{N}\exp(\structsimilarity_{j,i})}
\right),
% $}
\label{eq:structloss}
\end{equation}
where $N$ is the batch size. $\structloss$ follows the same loss expression as the original image-text alignment loss $\originalloss$, but focuses on capturing  the semantic alignment on the structural semantics on both modalities.
%$\mathcal{L}_{\text{ET}}$ focuses on the structural correspondence between modalities, 
%thereby decoupling geometric reasoning from appearance modeling.
\fi

\noindent\textbf{Local structure-centric image-text alignment $\structlosslocal$.} 
In addition to $\structloss$, which features the global alignment between between $\imagestruct$ with $\textstruct$, we further introduce a local-level alignment to capture the fine-grained correspondences of atomic structural semantic, as depicted in Fig.~\ref{fig:overview}. 
Inspire by Local Image–Sentence Matching (LISM)~\cite{goal2025}, we first segment both the structure-centric image $\imagestruct$ and the text $\textstruct$, where the segments from both modalities are then aligned during fine-tuning. 
Specifically, we apply the Segment Anything Model (SAM)~\cite{kirillov2023segment} on the original color image $\image_i$ to generate a set of visually and semantically coherent masks $\maskset_i$. With these visual masks, we then obtain the set of local structure-centric visual regions $\imageregionset_i$. On the textual side, we segment the textual counterpart $\textstruct$ into multiple phrases based on sentence delimiters (\eg, periods or semicolons), forming a set of text chunks $\textchunkset_i$ of size $M_i$ that varies based on each textual description. We employ a multi-positive contrastive learning strategy to jointly model the one-to-many correspondence between each local phrase $\textchunk_{m}\in \textchunkset_i$ and a total of $K$ structure-centric visual regions that are most semantically aligned $\imageregionset_m^K$~based on the pre-trained VLM encoders. %For each local phrase $\textchunk_{i,n}$, we compute cross-modal similarity between and all candidate regions using the basic CLIP model, and retain the top-$K$ most relevant structural embeddings as the local positive set:
% \begin{equation}
%     \mathcal{P}_i = \{\mathbf{v}_{i,k}\}_{k=1}^{K}.
% \end{equation}
%To achieve alignment within the structural modality, we crop the corresponding regions in the structural image $I^{\text{str}}$ based on the coordinates of the selected regions in the RGB image, thus obtaining matched structural-text local pairs. 
For each text chunk $\textchunk_{m}$ of structure-centric image $\imagestruct_i$, the local alignment loss $\ell_{m}$ is:
\begin{equation}
    \ell_{m}
    = -\log
    \frac{
        \sum\limits_{\imageregion_k \in \imageregionset_n^K}
        \exp~\bigl(\gamma\, \textencoder(\textchunk_{m})\cdot \visualencoder(\imageregion_k)\bigr)
    }{
        \sum\limits_{\imageregion_j \in \imageregionset^{B}}
        \exp~\bigl(\gamma\, \textencoder(\textchunk_{m})\cdot \visualencoder(\imageregion_j)\bigr)
    },
    \label{eq:local_loss}
\end{equation}
where $\gamma$ is the temperature and $\imageregionset^{B}$ represents the set of visual local regions within a batch.

The final loss $\structlosslocal$ is computed by averaging $\ell$ over the visual regions of all samples in the batch of size $N$:
%Finally, to obtain the overall local alignment objective we average the loss across all valid samples in the batch to obtain the overall local alignment objective:
\begin{equation}
\structlosslocal =
\frac{1}{N}
\sum_{i=1}^{N}
\frac{1}{M_i}
\sum_{m=1}^{M_i}
\ell_{m}.
\label{eq:chunkloss}
\end{equation}
$\structlosslocal$ does not require additional annotations, and its multi-positive formulation allows each text phrase to align with multiple structure-centric visual regions simultaneously, enhancing the model’s capacity to capture compositional semantics
and multi-part descriptions.

\noindent\textbf{Consistency regularization $\consistencyloss$.}
While the structure-centric alignment provides additional supervision, its representation may gradually deviate from the original semantic manifold during the VLM fine-tuning. To mitigate this issue, we introduce the consistency regularization loss $\consistencyloss$, using the visual representation of the original image to anchor the updated representation within the pre-trained vision-language latent space.
$\consistencyloss$ in the N-batch is computed as:
\begin{equation}
\consistencyloss =
\frac{1}{N} \sum_{i=1}^{N}
\left(
1 - <\visualencoder(\image_i),~\visualencoder(\imagestruct_i)>
% \frac{
% \visualencoder(\image_i) \cdot \visualencoder(\imagestruct_i)
% }{|\visualencoder(\image_i)|_2 , |\visualencoder(\imagestruct_i)|_2
% }
\right).
\end{equation}
$\consistencyloss$ serves as a regularization to ensure fine-tuning stability and semantic consistency.

\noindent\textbf{Full Learning Objective.}
%\ourmethod is designed as a structure-aware finetuning framework 
%built upon pretrained vision-language models, 
%targeting more stable and precise alignment under complex and long-text descriptions.
Our overall structure-centric losses $\ourloss$ can be combined in a weighted manner: 
\begin{equation}
    \ourloss
%= \lambda_{1}\originalloss
= \lambda_{1}\structloss
+ \lambda_{2}\consistencyloss
+ \lambda_{3}\structlosslocal,
\end{equation}
where $\lambda_{1\text{--}3}$ are weight coefficients that are empirically set. 
The full objective can then be expressed by $\mathcal{L}_{\text{total}} 
%= \lambda_{1}\originalloss
= \originalloss
+ \ourloss
$.
Note that $\ourloss$ are axillary and additional, which can be easily integrated into any fine-tuning framework and losses, substituting $\originalloss$.
% $\lambda_{1}=1.0$, $\lambda_{2}=0.25$, $\lambda_{3}=0.1$, and $\lambda_{4}=0.1$.
Through this fine-tuning objective, 
\ourmethod can effectively enhance the multimodal alignment in appearance and structural semantics, especially for long texts with rich semantic details.
\subsection{Theoretical view}
\label{sec:theory}
%We consider \ourmethod on visual-language alignment from an information-theoretic perspective~\cite{poole2019variational,wu2020mutualinformationcontrastivelearning}, following common practice in contrastive representation learning, supported by a numerical analysis conducted on a real \ft experiment (Sec.~\ref{sec:exp} Tab.~\ref{tab:main_results}).
We analyze \ourmethod from an information-theoretic perspective~\cite{poole2019variational,wu2020mutualinformationcontrastivelearning}, following common practice in contrastive representation learning. Our analysis is supported by numerical simulations performed across all four experimental datasets (see Sec.~\ref{sec:exp}), multiple training data regimes (Sec.~\ref{sec:exp}, \emph{Sample efficiency experiment}), and considering \ourmethod aa s plug-in to diverse alignment strategies (\emph{Finetuning-agnostic improvement} experiment). Here we report one representative study (Sec.~\ref{sec:exp} Tab.~\ref{tab:main_results}, \emph{Sketchy} dataset), with the remaining experiments provided in the \suppmat

The overall objective $\mathcal{L}_{\text{total}}$ %in Eq.~\ref{eq:total.loss} 
maximizes the mutual information $\mutualinfo(\image,~\desc)$ between the image representation $\image$ and its corresponding text $\desc$. Our analysis focuses on the three losses $\originalloss$, $\structloss$ and $\consistencyloss$, since the local loss $\structlosslocal$ can be seen as a specification of $\structloss$.
\begin{figure}[h!]
    \centering
    \includegraphics[width=1.0\linewidth]{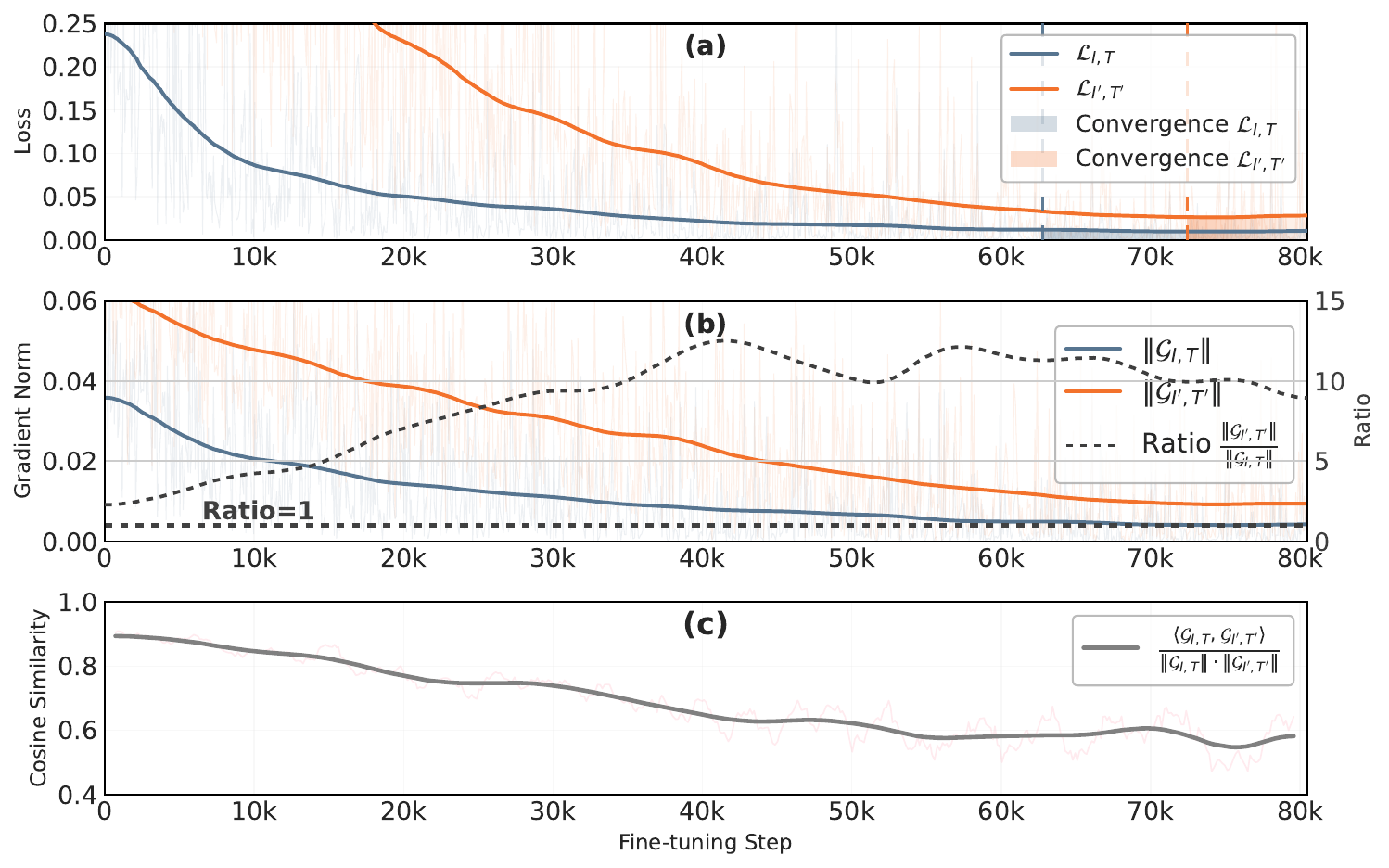}
    \caption{Empirical analysis confirming the information-theoretic view of \ourmethod. a) Loss convergence; b) Gradient norms and ratio evolution; c) Inter-task cosine similarity.}
    \label{fig:theory.exp}
    \vspace{-4pt}
\end{figure}
Let $\structextractor(\cdot)$ and $\mathcal{F}(\cdot)$ be deterministic mappings that produce information-reduced counterparts
$\imagestruct=\structextractor(\image)$ and $\textstruct = \mathcal{F}(\desc)$, respectively. In practice, $\imagestruct$ is the edge map contains binary ${0,1}$ values instead of RGB triplets, and $\textstruct$ is stripped of color and material terms. %Still,  preserving structure while discarding appearance cues.
By the Data Processing Inequality~\cite{tishby99information}, 
the compressed representations satisfy
$\mutualinfo(\imagestruct,~\textstruct) \le \mutualinfo(\image,~\desc)$.\footnote{We also verify numerically by KSG mutual information neural estimator~\cite{kraskov2004estimating} that $\mutualinfo(\imagestruct,~\textstruct)\simeq0.20$; $ \mutualinfo(\image,~\desc)\simeq 0.52$  on the DOCCI dataset~\cite{docci2024} in the experiment of Tab.~\ref{tab:main_results}.}
As commonly done in information-theoretic representation learning, we approximate the mutual information quantities using their InfoNCE lower bounds~\cite{poole2019variational,wu2020mutualinformationcontrastivelearning}: $\mutualinfo(\image,\desc) \ge \log N - \mathbb{E}[\originalloss]$ and
$\mutualinfo(\imagestruct,\textstruct) \ge \log N - \mathbb{E}[\structloss]$,
where $N$ is the batch size.
Fig.~\ref{fig:theory.exp}~a) shows the two losses $\originalloss$ and $\structloss$. We observe that the gradients of the two losses,
$\nabla_\theta \originalloss$ and $\nabla_\theta \structloss$,
are positively correlated in their direction (Fig.\ref{fig:theory.exp}~c)), considering cosine similarity ($\mu=0.6872, \sigma=0.1028$, min$=0.57$, max$=0.89$), indicating that the two optimization objectives
pursue compatible optima in the parameter space. Empirically, we observed that the consistency loss $\consistencyloss$ strengthens this correlation, while disabling it does not significantly reduce the correlation, indicating that the two problems are inherently related.

However, since $\imagestruct$ and $\textstruct$ encode less information, the auxiliary maximization $\mutualinfo(\imagestruct,\textstruct)$
is inherently more difficult, leading to a later convergence of $\structloss$, as evidenced from Fig.\ref{fig:theory.exp}~a) where we show the iteration after which we measured convergence of the two losses, and Fig.\ref{fig:theory.exp}~b) where we plot the magnitude of the gradients of $\nabla_\theta \originalloss$ and $\nabla_\theta \structloss$ and their ratio. 
Convergence has been evaluated by adopting the two-regime learning-curve method of Gaussian-process theory, which defines the onset of convergence as the point where model error becomes comparable to or lower than the estimated noise level~\cite{sollich2002learning,viering2022shape}.

As a result, $\structloss$ provides a persistent and informative gradient
even when $\originalloss$ flattens, effectively steering optimization toward
semantically coherent minima.
This mechanism can be interpreted via multitask optimization
and gradient diversity.
Information-theoretic analyses of multitask learning~\cite{MTLasMOO.NIPS18}, multi-objective optimization theory~\cite{gradientSGDvariantNIPS20} and contrastive representation learning~\cite{wu2020mutualinformationcontrastivelearning}
show that auxiliary objectives with lower mutual information act as implicit
regularizers, introducing controlled gradient diversity that expands the
effective search space and improves convergence stability.
Our structure-centric auxiliary alignment $\structloss$ plays exactly this role:
it introduces a correlated, information-reduced objective that contributes
complementary gradient directions, maintaining active updates when the main
contrastive gradients approach zero.
%This interpretation is consistent with multi-objective optimization theory~\cite{gradientSGDvariantNIPS20},
%where correlated auxiliary tasks help avoid premature convergence, thus promoting better generalization.

%% file: sec/experiments.tex
% =========================
% Experiments
% =========================
\section{Experiments}
\label{sec:exp}
\iffalse
We evaluate the complete framework \ourmethodfull, trained with all four losses of $\mathcal{L}_{\text{total}}$ of Sec.~\ref{Sec:alignment}, %Eq.~\ref{eq:total.loss},
referred to simply as \ourmethod unless stated otherwise, against state-of-the-art VLM fine-tuning methods on standard cross-modal retrieval benchmarks.
Our experiments span four datasets covering both general-domain and specific-domain scenarios with rich natural-language descriptions.
We also demonstrate the plug-and-play nature of our structure-centric alignment losses \ourmethodaux (the terms $\lambda_{1}\structloss + \lambda_{2}\consistencyloss + \lambda_{3}\structlosslocal$ of the total loss $\mathcal{L}_{\text{total}}$ in Sec.~\ref{Sec:alignment})
%Eq.~\ref{eq:total.loss}) 
by integrating them into several existing fine-tuning frameworks.
Finally, we analyze several aspects of \ourmethod, including its generalization, sensitivity to training set size, and the effect of different edge detectors and loss components, and conclude with qualitative evidence and a theoretical explanation for its strong empirical performance. 
\fi 

\noindent\textbf{Datasets.} We evaluate \ourmethod on both general-domain and specific-domain benchmarks featuring long, information-rich descriptions. For the general domain, we consider DCI~\cite{urbanek2024picture} and DOCCI~\cite{docci2024}, two human-annotated dense captioning datasets with long, detailed descriptions. DCI contains 7.8k images with mask-aligned captions exceeding 1000 words on average, while DOCCI provides 15k diverse scenes paired with highly discriminative descriptions averaging 136 words.
For specific-domain evaluation, we adopt SKETCHY~\cite{girella2025lots}, a 46k multimodal fashion dataset with fine visual attributes, and Insect~\cite{truong2025insect}, a 6k fine-grained biology dataset with expert-verified morphological descriptions (81 words on average). %We additionally remove duplicates from Insect for consistency.
For the dataset splits, we follow the protocols of~\cite{goal2025} for DCI and DOCCI, use the original split for SKETCHY, and adopt an 8:2 train/test split for Insect. Further dataset details in the \suppmat

\noindent\textbf{Compared methods.}
We consider \emph{sota} CLIP finetuning methods for long-text alignment: Long-CLIP \cite{zhang2024long} extends the text encoder via positional interpolation, FineLIP \cite{asokan2025finelip} improves long-text understanding through multi-stage fine-tuning with adaptive attention fusion, SmartCLIP \cite{smartclip2025} uses dynamic token routing and progressive context extension to emphasize salient segments, and GOAL \cite{goal2025} jointly models global/local correspondences for fine-grained alignment.
%We compare our method with a set of state-of-the-art CLIP finetuning methods that are proposed for long-text alignment:
%i) \textit{LongCLIP}~\cite{zhang2024long} extends CLIP’s text encoder via positional interpolation to handle longer sequences while maintaining pre-trained alignment;
%ii) \textit{FineLIP}~\cite{asokan2025finelip} enhances long-text understanding through multi-stage fine-tuning with adaptive attention fusion;
%iii) \textit{SmartCLIP}~\cite{smartclip2025} introduces dynamic token routing and progressive context extension to selectively encode semantically salient textual segments;
%iv) \textit{GOAL}~\cite{goal2025} jointly optimizes global and local image–text correspondences via Local Image-Sentence Matching (LISM) and Token Similarity-based Learning (TSL) for fine-grained alignment of detailed captions.

\noindent\textbf{Performance metrics.}
We report Recall@K (K=1/5/10) of the cross-modal retrieval task in both \inlineColorbox{lightpink}{\emph{Text$\rightarrow$Image} (\emph{T$\rightarrow$I})} and \inlineColorbox{lightblue}{\emph{Image$\rightarrow$Text} (\emph{I$\rightarrow$T})} settings. 
Recall@K measures the proportion of queries for which the corresponding ground-truth match appears within the top-K retrieved results.
Each image is unique and paired to one textual description.

\noindent\textbf{Implementation details.}
As visual encoder $\visualencoder$ we use ViTB/16~\cite{dosovitskiy2021vit} and as text encoder $\textencoder$, we employ the Knowledge Preserving Stretching (KPS) strategy from Long-CLIP~\cite{zhang2024long} to extend CLIP’s text encoder for long-text inputs. 
%KPS enlarges the positional embedding range beyond CLIP’s original 77-token limit by reusing the first 20 well-trained embeddings and interpolating the remaining ones according to a stretching ratio. 
We report in main comparison with edges extracted by the Canny detector.
Visual inputs are resized to $224 \times 224$.
For fair comparison, all methods share the same tokenizer, batch size, and fine-tuning budget. We empirically set $\lambda_{1}=0.25$, $\lambda_{2}=0.1$, and $\lambda_{3}=0.1$, as this configuration consistently provided strong overall performance. Both models are fine-tuned for 10 epochs on an \textit{NVIDIA RTX 5090} GPU, with each run taking approximately 1–3 hours depending on the dataset. Results are averaged over 3 random seeds; we report mean values (and $\pm$std in the appendix). For GOAL~\cite{goal2025}, the results on the DOCCI and DCI datasets reported in Tab.~\ref{tab:main_results} are taken directly from the original paper. 
For reference, the per-batch runtime of our method is 0.17s. 
The underlying CLIP variants used in our experiments span a similar range, 
with Long-CLIP running at 0.10s, GOAL at 0.18s, 
SmartCLIP at 0.07s, and FineLIP at 0.05s. Inference time is equivalent across all approaches, since it only involves projecting data into the embedding space. Additional implementation details and computational analysis in the \suppmat

\input{tables/table1}

\begin{figure}
    \centering
    \includegraphics[width=1.0\linewidth]{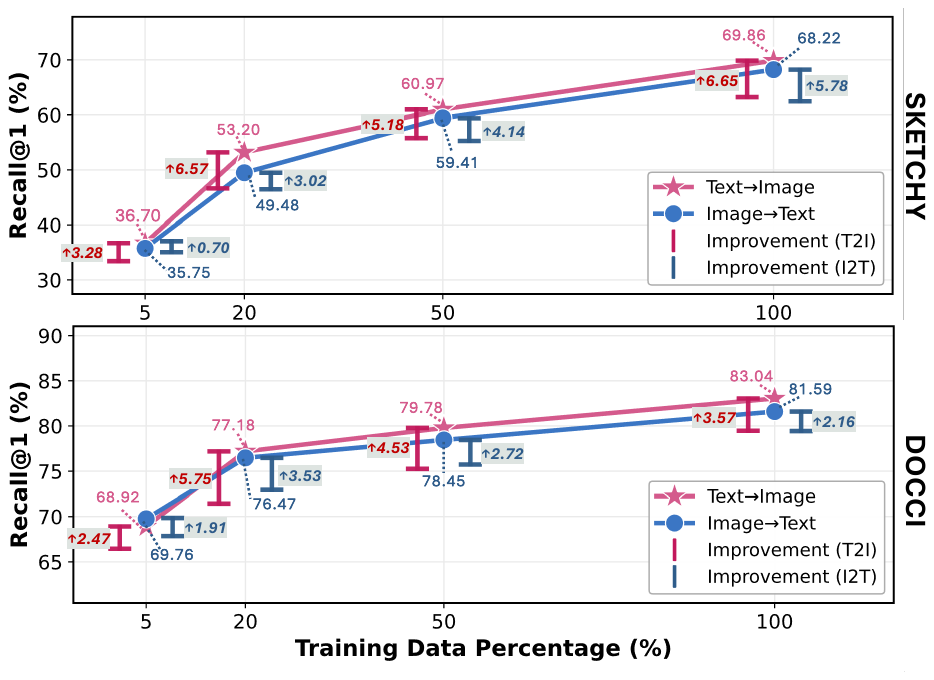}
    \caption{Data amount vs.~performance: Retrieval recall R@1 in case of 5\%, 20\%, 50\% and the standard 100\% of training data. The improvement line indicates the margin of \ourmethod against the second best approach, trained with the same amount of data.}
    \label{fig:data.amount}
    \vspace{-8pt}
\end{figure}

\subsection{Main comparisons}\label{Sec:main.comparisons}
\noindent\textbf{Best-performing on cross-modal retrieval.} As shown in Tab.~\ref{tab:main_results}, our \ourmethod~consistently achieves the best recall at every rank on both directions (T$\rightarrow$I / I$\rightarrow$T), among all compared methods, except on DCI R@5 I$\rightarrow$T.  On the SKETCHY dataset, the improvement obtained by \ourmethod~is most significant: R@1 on the increases by +6.65\% and +5.78\% compared to the second-best method GOAL, on the T$\rightarrow$I and I$\rightarrow$T task, respectively. Despite being with the shortest word length, SKETCHY dataset contains rich structure-centric multimodal content concerning clothing items (\eg, shape, layering, designs, contour, and outfit compositions), making our structure-centric multimodal alignment most effective to enhance the representation alignment on fine-grained details. 
%By explicitly isolating and aligning such structure-centric multimodal contents, \ourmethod~can effectively  the interference from appearance-related attributes, enabling the model to match hierarchical semantic structures and thus achieve higher retrieval precision, especially at top ranks. 
On the other hand, the performance of all models on the Insect dataset is relatively low as the dataset belongs to a rare domain, \ie specific families of insects, that is underrepresented in the pre-training dataset for VLMs.
Nevertheless, \ourmethod~still improves R@1 in the T$\rightarrow$I direction by +1.12\% (a relative gain of 12.7\%) with stable improvements for I$\rightarrow$T.
%This improvement mainly comes from the geometric constraints introduced by 
We believe this is attributed to the edge maps that provide a degree of morphological invariance across species highlighting subtle visual differences during the text-visual alignment.
Finally, \ourmethod~also shows consistent gains over compared methods on the general-domain datasets DOCCI and DCI, demonstrating its broad effectiveness.  We also perform strongly at deeper ranks, detailed results are provided in the \suppmat
%structural alignment not only benefits domain-specific understanding but also serves as a regularizing constraint in natural image domains, helping the model learn a more stable multimodal representation.

\begin{table}[t]
\caption{
\textbf{Plug-and-play enhancement of our $\ourloss$ on CLIP-based finetuning.}
Results on SKETCHY and DOCCI for \inlineColorbox{lightpink}{\emph{Text$\rightarrow$Image}} and \inlineColorbox{lightblue}{\emph{Image$\rightarrow$Text}} retrieval.
Upper: full-parameter finetuning; lower: parameter-efficient tuning.
Our method consistently boosts diverse CLIP variants. Best in \textbf{bold}, with gain in \gain{green}.
}
\vspace{-5pt}
\label{tab:finetuning_agnostic}
\centering
\footnotesize
\resizebox{\columnwidth}{!}{%
% {\fontsize{20}{24}\selectfont
% {\fontsize{14}{16}\selectfont
{\fontsize{24}{28}\selectfont
\begin{tabular}{@{} l*{5}{S[table-format=2.2] !{\color{lightlightgray}\vrule width 0.5pt}} S[table-format=2.2] | *{5}{S[table-format=2.2] !{\color{lightlightgray}\vrule width 0.5pt} } S[table-format=2.2] @{}}
% \begin{tabular*}{\textwidth}{@{\extracolsep{\fill}}lcccccccccccc@{}}
\toprule
\multirow{2}{*}{\textbf{Method}} &
\multicolumn{6}{c|}{\cellcolor{gray!15}\textbf{SKETCHY}} &
\multicolumn{6}{c}{\cellcolor{gray!15}\textbf{DOCCI}} \\
\cmidrule(lr){2-7}\cmidrule(lr){8-13}
& \multicolumn{1}{c}{\cellcolor{lightpink}{R@1}} & \multicolumn{1}{c}{\cellcolor{lightpink}{R@5}} & \multicolumn{1}{c}{\cellcolor{lightpink}{R@10}}
& \multicolumn{1}{c}{\cellcolor{lightblue}{R@1}} & \multicolumn{1}{c}{\cellcolor{lightblue}{R@5}} & \multicolumn{1}{c|}{\cellcolor{lightblue}{R@10}}
& \multicolumn{1}{c}{\cellcolor{lightpink}{R@1}} & \multicolumn{1}{c}{\cellcolor{lightpink}{R@5}} & \multicolumn{1}{c}{\cellcolor{lightpink}{R@10}}
& \multicolumn{1}{c}{\cellcolor{lightblue}{R@1}} & \multicolumn{1}{c}{\cellcolor{lightblue}{R@5}} & \multicolumn{1}{c}{\cellcolor{lightblue}{R@10}} \\
\midrule

Long-CLIP
& 54.32 & 80.14 & 88.43 & 52.76 & 80.31 & 88.08
& 64.49 & 87.67 & 93.43 & 63.08 & 87.45 & 93.14 \\
\textbf{+our $\ourloss$}
& \textbf{59.24} 
& \textbf{85.32} 
& \textbf{91.45} 
& \textbf{59.59} 
& \textbf{84.37} 
& \textbf{91.02} 
& \textbf{67.67} 
& \textbf{90.82} 
& \textbf{95.59} 
& \textbf{67.92} 
& \textbf{90.16} 
& \textbf{95.10}  \\

\textit{$\Delta$}
& \gain{4.92} & \gain{5.18} & \gain{3.02} 
& \gain{6.83} & \gain{4.06} & \gain{2.94}
& \gain{3.18} & \gain{3.15} & \gain{2.16} 
& \gain{4.84} & \gain{2.71} & \gain{1.96}
\\
\lightmidrule

FineLIP
& 40.59 & 71.16 & 81.78 & 40.33 & 72.11 & 82.38
& 67.80 & 90.22 & 94.84 & 66.39 & 89.12 & 94.47 \\
\textbf{+our $\ourloss$}
& \textbf{59.15} 
& \textbf{85.23} 
& \textbf{91.28} 
& \textbf{58.55} 
& \textbf{84.54} 
& \textbf{90.07} 
& \textbf{74.06}
& \textbf{94.24} 
& \textbf{97.35} 
& \textbf{72.94} 
& \textbf{93.27}
& \textbf{96.55} \\

\textit{$\Delta$}
& \gain{18.56} & \gain{14.07} & \gain{9.50} 
& \gain{18.22} & \gain{12.43} & \gain{7.69}
& \gain{6.26} & \gain{4.02} & \gain{2.51} 
& \gain{6.55} & \gain{4.15} & \gain{2.08}
\\
\lightmidrule

SmartCLIP
& 50.73 & 81.09 & 94.56 & 51.30 & 80.83 & \textbf{94.04}
    & 74.92 & 94.08 & 97.31 & 74.91 & 94.04 & 97.29 \\
\textbf{+our $\ourloss$}
& \textbf{52.94}
& \textbf{81.26} 
& \textbf{94.77} 
& \textbf{52.33} 
& \textbf{80.92} 
& \textbf{94.04} 
& \textbf{77.39} 
& \textbf{95.57} 
& \textbf{98.66} 
& \textbf{77.10} 
& \textbf{95.49} 
& \textbf{98.34}  \\

\textit{$\Delta$}
& \gain{2.21} & \gain{0.17} & \gain{0.21} 
& \gain{1.03} & \gain{0.09} & \neutral{0.00}
& \gain{2.47} & \gain{1.49} & \gain{1.35} 
& \gain{2.19} & \gain{1.45} & \gain{1.05}
\\
\lightmidrule

GOAL
& 63.21 & 87.13 & 93.44 & 62.44 & 87.82 & 92.31
& 79.47 & 96.65 & 98.69 & 79.43 & 96.14 & 98.51 \\
\textbf{+our $\ourloss$}
& \textbf{67.88}
& \textbf{90.33} 
& \textbf{95.16} 
& \textbf{68.48} 
& \textbf{89.81} 
& \textbf{94.82} 
& \textbf{80.96} 
& \textbf{96.90} 
& \textbf{98.96} 
& \textbf{80.31}
& \textbf{96.73} 
& \textbf{98.84} \\

\textit{$\Delta$}
& \gain{4.67} & \gain{3.20} & \gain{1.72} 
& \gain{6.04} & \gain{1.99} & \gain{2.51}
& \gain{1.49} & \gain{0.25} & \gain{0.27} 
& \gain{0.88} & \gain{0.59} & \gain{0.33}
\\
\lightmidrule
SigLIP2
& 68.91 & 90.85 & 95.16 & 66.75 & 90.24 & 93.96
& 71.80 & 92.53 & 95.88 & 71.51 & 92.41 & 96.06 \\
\textbf{+our $\ourloss$}
& \textbf{73.49}
& \textbf{91.97} 
& \textbf{95.77} 
& \textbf{70.38} 
& \textbf{91.54} 
& \textbf{95.77} 
& \textbf{75.47} 
& \textbf{94.82} 
& \textbf{97.67} 
& \textbf{73.59}
& \textbf{94.33} 
& \textbf{97.43} \\

\textit{$\Delta$}
& \gain{4.58} & \gain{1.12} & \gain{0.61} 
& \gain{3.63} & \gain{1.3} & \gain{1.81}
& \gain{3.67} & \gain{2.29} & \gain{1.79} 
& \gain{2.08} & \gain{1.92} & \gain{1.37}
\\

\midrule[2pt]
LoRA
& 57.08 & 84.72 & 91.80 & 56.74 & 85.32 & 91.71
& 77.80 & 96.45 & 98.55 & 77.00 & 96.02 & 98.05 \\\textbf{+our $\ourloss$}
& \textbf{62.09}
& \textbf{86.79} 
& \textbf{93.95} 
& \textbf{59.41} 
& \textbf{85.92} 
& \textbf{92.75} 
& \textbf{79.35} 
& \textbf{96.61} 
& \textbf{98.57} 
& \textbf{78.65} 
& \textbf{96.31} 
& \textbf{98.45} \\

\textit{$\Delta$}
& \gain{5.01} & \gain{2.07} & \gain{2.15} 
& \gain{2.67} & \gain{0.60} & \gain{1.04}
& \gain{1.55} & \gain{0.16} & \gain{0.02} 
& \gain{1.65} & \gain{0.29} & \gain{0.40}
\\
\lightmidrule

DoRA
& 61.77 & 86.18 & 91.88 & 60.94 & 87.33 & 92.31
& 66.27 & 90.50 & 95.18 & 65.65 & 89.55 & 94.78 \\
\textbf{+our $\ourloss$}
& \textbf{65.20}
& \textbf{90.26} 
& \textbf{94.91} 
& \textbf{64.94} 
& \textbf{88.35} 
& \textbf{93.52} 
& \textbf{70.65}
& \textbf{92.47} 
& \textbf{96.57} 
& \textbf{69.22} 
& \textbf{91.80}
& \textbf{95.98}  \\

\textit{$\Delta$}
& \gain{3.43} & \gain{4.08} & \gain{3.03}
& \gain{4.00} & \gain{1.02} & \gain{1.21}
& \gain{4.38} & \gain{1.97} & \gain{1.39} 
& \gain{3.57} & \gain{2.25} & \gain{1.20}
\\
\bottomrule
\end{tabular}
}
}
\vspace{-8pt}
\end{table}

\noindent\textbf{Finetuning-agnostic improvement.} 
Our introduced losses $\ourloss$ are additional auxiliary losses, which can be flexibly integrated on top of any CLIP-based finetuning methods. To investigate its effectiveness, we experiment with both classic parameter-efficient finetuning (PEFT) techniques, including LoRA~\cite{hu2022lora} and DoRA~\cite{liu2024dora}, and the state-of-the-art (sota) CLIP-based finetuning methods presented in the main comparison (Tab.~\ref{tab:main_results}).
As shown in Tab.~\ref{tab:finetuning_agnostic}, $\ourloss$ is plug-and-play on any CLIP-based finetuning techniques, demonstrating \textbf{X}LIP-agnostic improvements on both general and specific domains\footnote{Results on the other datasets will be shown in the \suppmat}. The benefits also extend to the adapter-based PEFT techniques, when finetuning models with a few additional learnable parameters. 
%In the full-parameter finetuning setting, it achieves the most notable gain on the fashion-oriented SKETCHY dataset, boosting R@1 (T$\rightarrow$I) by +18.6\% when combined with FineLIP. 
In PEFT, $\ourloss$ with LoRA and DoRA achieves about +4-5\% R@1 improvement, showing that it can serve as a lightweight and plug-and-play enhancement module to improve multimodal alignment with negligible additional computational cost. 
 Notably, on SKETCHY we get an average improvement of 6.20\% (relative improvement of 12.5\%) at R@1 and on DOCCI an average improvement of 3.28\% (relative improvement of 4.72\%). Also, it seems that in general the positive margin diminishes as the recall ranking does augment; nonetheless, it is important to say that no negative margins (=worsening) happen at higher recall ranks, as we show in the \suppmat. 
%Overall, \ourmethod~is finetuning-agnostic and can reliably inject structural priors under different finetuning strategies to enhance multimodal alignment quality.

\begin{table}[t]
\centering
\caption{
\textbf{Cross-domain generalization between DCI and DOCCI.}
Train on one dataset and test on another.
Values are Recall@K (\%), using \inlineColorbox{lightpink}{\emph{Text$\rightarrow$Image}} 
and \inlineColorbox{lightblue}{\emph{Image$\rightarrow$Text}} retrieval. In-domain best in \textit{\textbf{italic bold}}, cross-domain best in \textbf{bold}.
}
\vspace{-5pt}
\label{tab:cross_generalization}
\scriptsize
\setlength{\tabcolsep}{3.6pt}
\renewcommand{\arraystretch}{1.1}

\begin{tabular*}{\columnwidth}{@{\extracolsep{\fill}}lccc|ccc@{}}
\toprule
\textbf{Setting} &
\cellcolor{lightpink}\textbf{R@1} &
\cellcolor{lightpink}\textbf{R@5} &
\cellcolor{lightpink}\textbf{R@10} &
\cellcolor{lightblue}\textbf{R@1} &
\cellcolor{lightblue}\textbf{R@5} &
\cellcolor{lightblue}\textbf{R@10} \\
\midrule

\multicolumn{7}{c}{\cellcolor{gray!10}\textbf{Train on DCI → Test on DCI vs DOCCI}} \\
\midrule
Long-CLIP (DCI→DCI) & \textit{59.23} & \textit{80.89} & \textit{87.04} & \textit{60.13} & \textit{81.44} & \textit{87.54} \\
Long-CLIP (DCI→DOCCI) & 58.73 & 84.66 & 90.75 & 58.08 & 83.53 & 90.47 \\

\lightmidrule
GOAL (DCI→DCI) & \textit{72.64} & \textit{89.89} & \textit{93.70} & \textit{72.84} & \textbf{\textit{90.50}} & \textit{93.20} \\
GOAL (DCI→DOCCI) & 71.22 & 92.39 & 96.47 & 72.18 & 92.88 & 96.49 \\

\lightmidrule
\textbf{\ourmethod} (DCI→DCI) & \textbf{\textit{75.90}} & \textbf{\textit{90.00}} & \textbf{\textit{95.15}} & \textbf{\textit{74.39}} & \textit{89.99} & \textbf{\textit{94.30}} \\
\textbf{\ourmethod} (DCI→DOCCI) & \textbf{75.47} & \textbf{93.41} & \textbf{97.22} & \textbf{72.98} & \textbf{93.24} & \textbf{96.75} \\
\midrule

\multicolumn{7}{c}{\cellcolor{gray!10}\textbf{Train on DOCCI → Test on DOCCI vs DCI}} \\
\midrule
Long-CLIP (DOCCI→DOCCI) & \textit{64.49} & \textit{87.67} & \textit{93.43} & \textit{63.08} & \textit{87.45} & \textit{93.14} \\
Long-CLIP (DOCCI→DCI) & 51.23 & 73.39 & 80.09 & 50.73 & 72.89 & 80.29 \\

\lightmidrule
GOAL (DOCCI→DOCCI) & \textit{79.47} & \textit{96.65} & \textit{98.69} & \textit{79.43} & \textit{96.14} & \textit{97.25} \\
GOAL (DOCCI→DCI) & 64.13 & 82.69 & 87.29 & 65.88 & 83.44 & 87.89 \\
\lightmidrule
\textbf{\ourmethod} (DOCCI→DOCCI) & \textbf{\textit{83.04}} & \textbf{\textit{97.06}} & \textbf{\textit{98.96}} & \textbf{\textit{81.59}} & \textbf{\textit{96.94}} & \textbf{\textit{98.78}} \\
\textbf{\ourmethod} (DOCCI→DCI) & \textbf{65.18} & \textbf{83.89} & \textbf{88.04} & \textbf{66.13} & \textbf{83.84} & \textbf{89.09} \\
\bottomrule
\end{tabular*}
\vspace{-8pt}
\end{table}

\noindent\textbf{Generalization across general domains.}
Following the evaluation of~\cite{goal2025}, we also investigate the finetuned models in cross-domain settings on the two general-domain DCI and DOCCI dataset. Tab.~\ref{tab:cross_generalization} shows that \ourmethod~demonstrates decent cross-domain generalization capability between DCI and DOCCI, with only a marginal decrease compared to in-domain results.
Nonetheless, \ourmethod~still outperforms GOAL and Long-CLIP by a clear margin.
This indicates that the auxiliary structure-centric alignment does not bias the model toward domain-specific representation, but instead promotes generalizable representation that can transfer well across different image-text distributions. In the case of cross general-specific domain generalization (see \suppmat), the decrease in performance is evident, with \ourmethod still showing comparable if not better robustness w.r.t. the competitors.

\subsection{Ablation studies}

\begin{figure*}[h!]
    \centering
    \includegraphics[width=\linewidth]{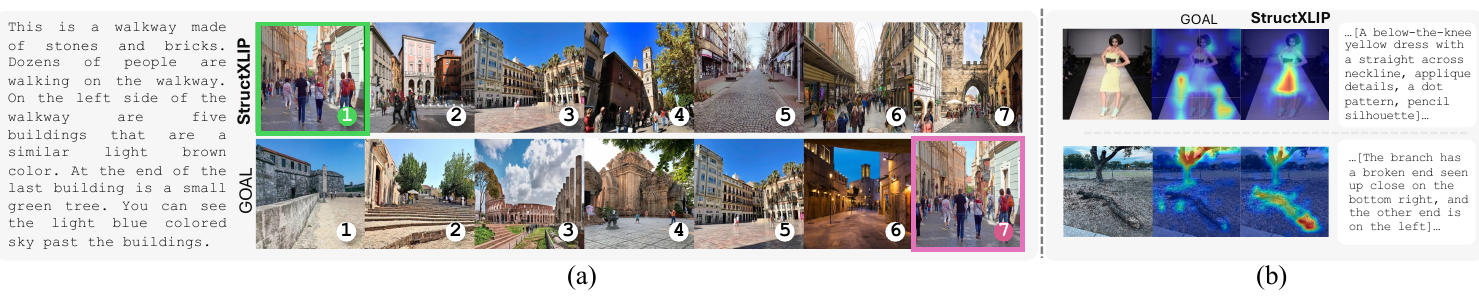}\vspace{-0.3cm}
    \caption{(a) Retrieval results; (b) GRAD-CAM~\cite{selvaraju2017grad} highlighting cross-modal attention between the object-centric text and image.}
    \label{Fig:qualitative}
\end{figure*}

\noindent\textbf{Structure-centric alignment losses.}  We ablate the impact of our introduced structure-centric alignment losses $\ourloss$, which consists of $\structloss$, $\consistencyloss$, and $\structlosslocal$. As shown in Tab.~\ref{tab:ablation_loss}, all three losses contribute positively to the retrieval improvement.
The global structure alignment loss $\structloss$ alone already yields a clear gain.
The consistency regularization $\consistencyloss$ further stabilizes learning while maintaining coherence with the original semantic space.
Finally, adding the local structure-centric alignment loss $\structlosslocal$ achieves the best results. Global alignment enforces holistic shape consistency, while local alignment strengthens fine-grained correspondences, leading to best multimodal alignment.

\noindent\textbf{Types of extraction methods.} We evaluate several visual and textual extraction strategies, all of which yield consistent gains, showing that the model adapts robustly to structure-centric extraction. On the visual extraction side, we consider two categories of structure extraction methods:
(i) classic \emph{filter-based}~(indicated by [F]) edge extraction methods (\eg, Canny~\cite{canny2009computational} and LoG~\cite{marr1980theory}), and
(ii) \emph{learning-based}~(indicated by [L]) methods (\eg, HED~\cite{xie2015holistically}, LineArtDetector (LAD)~\cite{zhang2023adding}, and Photo2Sketch (P2S)~\cite{liu2021learn}).
The filter-based approaches achieve the best overall performance, while learning-based extractors yield comparably strong results with only marginal differences. 
For the textual extraction, the Original Text approach retains appearance-related vocabulary, while the LLM-Extraction prompts a LLM to rewrite the description only with structure-centric semantics, but may introduce hallucinated attributes.
Our Lexicon Filter balances factual grounding and structural relevance, achieving the best results.

To further assess the model's robustness, we introduce a \emph{Noise Injection} setting.
On the visual side, Incorrect Match randomly pairs an image with edge maps from a different dataset, \ie matching DOCCI edges with SKETCHY images, vice versa.
On the textual side, Random Word Mask (RWM) randomly removes all non-structural and non-functional words outside the vocabulary $\vocappearance$,
and Word Reduction retains only about 10\% of the original words to simulate severe semantic loss. As shown in Tab.~\ref{tab:extraction_ablation}, both the visual and textual noise experiments cause only minor degradation, exhibiting strong robustnesss as the model remains stable and reliable even under structural noises and misalignment in $\ourloss$. The 90\% reduction variant underperforms on SKETCHY, as the short captions make it difficult for the remaining 10\% of words to cover most objects. 
%We present more robustness analysis in the \suppmat

\begin{table}[t]
\centering
\caption{
\textbf{Ablation on losses.}
Results on \inlineColorbox{lightpink}{\emph{Text$\rightarrow$Image}} and 
\inlineColorbox{lightblue}{\emph{Image$\rightarrow$Text}} retrieval for SKETCHY and DOCCI.
Each row adds one additional loss term to the previous configuration.}
\vspace{-5pt}
\label{tab:ablation_loss}
%\scriptsize
\setlength{\tabcolsep}{3.5pt}
\renewcommand{\arraystretch}{1.15}
\resizebox{\columnwidth}{!}{%
{\fontsize{23}{28}\selectfont
 % \begin{tabular}{@{} ccccccccc|cccccc @{}}
 \begin{tabular}{@{} ccc  *{5}{c!{\color{lightlightgray}\vrule width 0.5pt}}c| *{5}{c!{\color{lightlightgray}\vrule width 0.5pt}}c @{}}
\toprule
\multirow{2}{*}{$\structloss$} &  \multirow{2}{*}{$\consistencyloss$} &  \multirow{2}{*}{$\structlosslocal$} &
\multicolumn{6}{c|}{\cellcolor{gray!15}\textbf{SKETCHY}} &
\multicolumn{6}{c}{\cellcolor{gray!15}\textbf{DOCCI}} \\
\cmidrule(lr){4-9}\cmidrule(lr){10-15}
&&& \multicolumn{1}{c}{\cellcolor{lightpink}{R@1}} & \multicolumn{1}{c}{\cellcolor{lightpink}{R@5}} & \multicolumn{1}{c}{\cellcolor{lightpink}{R@10}}
& \multicolumn{1}{c}{\cellcolor{lightblue}{R@1}} & \multicolumn{1}{c}{\cellcolor{lightblue}{R@5}} & \multicolumn{1}{c|}{\cellcolor{lightblue}{R@10}}
& \multicolumn{1}{c}{\cellcolor{lightpink}{R@1}} & \multicolumn{1}{c}{\cellcolor{lightpink}{R@5}} & \multicolumn{1}{c}{\cellcolor{lightpink}{R@10}}
& \multicolumn{1}{c}{\cellcolor{lightblue}{R@1}} & \multicolumn{1}{c}{\cellcolor{lightblue}{R@5}} & \multicolumn{1}{c}{\cellcolor{lightblue}{R@10}} \\
\midrule

\textcolor{red!75}{\usym{2717}} & \textcolor{red!75}{\usym{2717}} & \textcolor{red!75}{\usym{2717}} & 
66.75 & 89.55 & 93.61 & 64.51 & 88.76 & 93.44 & 80.38 & 96.45 & 98.76 & 79.43 & 96.51 & 98.69 \\
\textcolor{lightgreen}{\usym{2713}} & \textcolor{red!75}{\usym{2717}} & \textcolor{red!75}{\usym{2717}} & 
67.88 & \textbf{90.85} & 95.34 & 67.18 & 89.21 & 94.91 & 81.20 & 96.83 & 98.95 & 80.69 & 97.04 & 98.73 \\
\textcolor{lightgreen}{\usym{2713}} & \textcolor{lightgreen}{\usym{2713}} & \textcolor{red!75}{\usym{2717}} & 
68.57 & 90.50 & \textbf{95.60} & \textbf{68.22} & 89.64 & 95.16 & 81.92 & 97.01 & \textbf{98.98} & 81.05 & \textbf{97.10} & 98.75 \\
\textcolor{lightgreen}{\usym{2713}} & \textcolor{lightgreen}{\usym{2713}} & \textcolor{lightgreen}{\usym{2713}} &
\textbf{69.86} & \textbf{90.85} & 95.42 & \textbf{68.22} & \textbf{90.67} & \textbf{95.68} & \textbf{83.04} & \textbf{97.06} & 98.96 & \textbf{81.59} & 96.94 & \textbf{98.78} \\

\bottomrule
\end{tabular}
}
}
\vspace{-8pt}
\end{table}

\noindent\textbf{Sample efficiency.} We present in Fig.~\ref{fig:data.amount} showing how \ourmethod performs with varying ratios of the fine-tuning dataset, in comparison with other sota methods. Our approach outperforms the competitors at every regime. Notably, the margin seems to be maximized around 20-50\% of the fine-tuning data, especially with the DOCCI dataset. %Notably, \ourmethod on SKETCHY has the steepest ascent, with an absolute gap of 16.5 in Text$\rightarrow$Image (relative increase of 45\%). This because SKETCHY is more fine-grained than DOCCI, and its statistics could explained with a larger amount of data, in percentage. 

%Our approach exhibits strong performance in the few-data fine-tuning regime because it focuses the optimization on the informative components of the signal. 
%In standard global fine-tuning, the model must adapt a large parameter space to account for every visual and textual variation, including nonsemantic noise, which requires extensive supervision. 
%In contrast, our disentangled extractions $X_E$ and $Y_E$ isolate structural cues such as edges and textual chunks that retain the core semantics while suppressing nuisance factors. 
%The accompanying objectives $\mathcal{L}_{\text{ST}}$, $\mathcal{L}_{\text{RS}}$, and $\mathcal{L}_{\text{CK}}$ further constrain the alignment toward consistent and interpretable relations across modalities. 
%This structural bias effectively reduces the hypothesis space and the variance of gradient updates, yielding faster convergence and improved generalization under limited data. 
%In simple terms, the model learns less but better, focusing its capacity on semantically stable features that global fine-tuning must otherwise relearn from larger datasets.

% \subsection{Interplay with GOAL}
% As shown in Table~\ref{tab:goal_interplay}, GOAL improves general-domain alignment but can degrade performance on structure-centric datasets; combining GOAL with our objectives mitigates this and brings further gains.

\subsection{Qualitative analysis}

Fig.~\ref{Fig:qualitative} (a) shows a qualitative retrieval comparison on the DOCCI dataset between \ourmethod and the second-best method GOAL~\cite{goal2025}. Apart from the better retrieval of \ourmethod, there is a substantial difference on the gist of the retrieved images: \ourmethod extracts images that match more structure details (lamps and people are more present). One may think we are forgetting colors due to our emphasis on structure, but this is not the case: the brown, green, and light azure patterns are also evident. 
%On the SKETCHY dataset, it appears that the shorts are more present in the \ourmethod retrieved results. This is probably due to the structure analysis which emphasis shape and contours of the garment item, absent in the competitor.  
In the attention map of Fig.~\ref{Fig:qualitative} (b), we visualize the cross-modal attention between two textual descriptions of an object and the image after fine-tuning. On the SKETCHY dataset (first row), it is evident how we match the dress with a very sharp attention map, in contrast to the one of GOAL. 
%This is also evident in the second row, where the attention is uniquely focused on the dress, differetnly from the baseline. 
The DOCCI example  
%(second line) the back of the cat's head is correctly highlight by \ourmethod. Surprisingly, even the corner of the white window is also salient, showing a good sensibility to colors. Finally 
(second row) showcases the \ourmethod's ability in capturing the correct layout of the image, with the GOAL completely ignoring the tree branch on the floor.

\begin{table}[t]
\centering
\caption{
\textbf{Ablation on extraction methods.}
\inlineColorbox{lightpink}{\emph{Text$\rightarrow$Image}} and 
\inlineColorbox{lightblue}{\emph{Image$\rightarrow$Text}} retrieval on SKETCHY and DOCCI datasets. 
[F] indicates \emph{filter-based} and [L] indicates \emph{learning-based} edge extraction methods. \textbf{Bold} numbers indicate the best results within each subtable. * denotes the option in our \ourmethod.
}
\label{tab:extraction_ablation}
\small
\vspace{-5pt}
\setlength{\tabcolsep}{1.5pt}
\renewcommand{\arraystretch}{1.05}

% -------------------- (a) VISUAL --------------------
\resizebox{\columnwidth}{!}{
\begin{tabular}{@{}l *{5}{c!{\color{lightlightgray}\vrule width 0.5pt}}c|*{5}{c!{\color{lightlightgray}\vrule width 0.5pt}}c @{}}
\toprule
\multirow{2}{*}{\shortstack[c]{\textbf{(a) Visual}\\\textbf{Extraction}}} &
\multicolumn{6}{c|}{\cellcolor{gray!15}\textbf{SKETCHY}} &
\multicolumn{6}{c}{\cellcolor{gray!15}\textbf{DOCCI}} \\
\cmidrule(lr){2-7}\cmidrule(lr){8-13}
& \multicolumn{1}{c}{\cellcolor{lightpink}{R@1}} & \multicolumn{1}{c}{\cellcolor{lightpink}{R@5}} & \multicolumn{1}{c}{\cellcolor{lightpink}{R@10}}
& \multicolumn{1}{c}{\cellcolor{lightblue}{R@1}} & \multicolumn{1}{c}{\cellcolor{lightblue}{R@5}} & \multicolumn{1}{c|}{\cellcolor{lightblue}{R@10}}
& \multicolumn{1}{c}{\cellcolor{lightpink}{R@1}} & \multicolumn{1}{c}{\cellcolor{lightpink}{R@5}} & \multicolumn{1}{c}{\cellcolor{lightpink}{R@10}}
& \multicolumn{1}{c}{\cellcolor{lightblue}{R@1}} & \multicolumn{1}{c}{\cellcolor{lightblue}{R@5}} & \multicolumn{1}{c}{\cellcolor{lightblue}{R@10}} \\
\midrule
\rowcolor{gray!8}
\multicolumn{13}{l}{\textit{Methods}} \\
Canny*~\cite{canny2009computational} [F] & \textbf{69.86} & 90.85 & 95.42 & \textbf{68.22} & \textbf{90.67} & \textbf{95.68} & 83.04 & 97.06 & 98.96 & 81.59 & 96.94 & 98.78 \\
LoG~\cite{marr1980theory} [F] & 68.48 & 89.64 & 94.91 & 67.36 & 90.24 & 94.47 & \textbf{83.59} & \textbf{97.36} & \textbf{99.20} & \textbf{81.98} & \textbf{97.41} & \textbf{99.04} \\
HED~\cite{xie2015holistically} [L] & 68.91 & 90.67 & \textbf{95.43} & 66.93 & 90.41 & 94.47 & 83.14 & 97.18 & 98.98 & 81.73 & 97.08 & 98.69 \\
LAD~\cite{zhang2023adding} [L] & 69.17 & \textbf{90.93} & 95.42 & 66.93 & 90.16 & 94.91 & 82.69 & 97.10 & 98.90 & 81.08 & 96.84 & 98.84 \\
P2S~\cite{liu2021learn} [L] & 68.14 & 90.76 & 94.99 & 67.53 & 90.16 & 94.04 & 81.49 & 97.02 & 98.73 & 80.20 & 96.43 & 98.47 \\
\midrule
\rowcolor{gray!8}
\multicolumn{13}{l}{\textit{Noise Injection}} \\
Incorrect match & 66.20 & 88.95 & 93.39 & 64.37 & 88.26 & 93.78 & 80.25 & 96.33 & 98.76 & 79.15 & 96.10 & 98.41 \\
\bottomrule
\end{tabular}
}

% -------------------- (b) TEXTUAL --------------------
\resizebox{\columnwidth}{!}{
\begin{tabular}{@{}l *{5}{c!{\color{lightlightgray}\vrule width 0.5pt}}c|*{5}{c!{\color{lightlightgray}\vrule width 0.5pt}}c @{}}
\toprule
\multirow{2}{*}{\shortstack[c]{\textbf{(b) Textual}\\\textbf{Extraction}}} &
\multicolumn{6}{c|}{\cellcolor{gray!15}\textbf{SKETCHY}} &
\multicolumn{6}{c}{\cellcolor{gray!15}\textbf{DOCCI}} \\
\cmidrule(lr){2-7}\cmidrule(lr){8-13}
& \multicolumn{1}{c}{\cellcolor{lightpink}{R@1}} & \multicolumn{1}{c}{\cellcolor{lightpink}{R@5}} & \multicolumn{1}{c}{\cellcolor{lightpink}{R@10}}
& \multicolumn{1}{c}{\cellcolor{lightblue}{R@1}} & \multicolumn{1}{c}{\cellcolor{lightblue}{R@5}} & \multicolumn{1}{c|}{\cellcolor{lightblue}{R@10}}
& \multicolumn{1}{c}{\cellcolor{lightpink}{R@1}} & \multicolumn{1}{c}{\cellcolor{lightpink}{R@5}} & \multicolumn{1}{c}{\cellcolor{lightpink}{R@10}}
& \multicolumn{1}{c}{\cellcolor{lightblue}{R@1}} & \multicolumn{1}{c}{\cellcolor{lightblue}{R@5}} & \multicolumn{1}{c}{\cellcolor{lightblue}{R@10}} \\
\midrule
\rowcolor{gray!8}
\multicolumn{13}{l}{\textit{Methods}} \\
Original Text & 66.93 & 89.38 & 93.35 & 64.51 & 88.00 & 93.01 & 81.02 & 96.88 & 98.67 & 80.29 & 96.61 & 98.73 \\
LLM-Extraction  & 67.36 & 90.24 & 94.65 & 67.70 & 89.12 & 94.30 & 81.08 & 97.02 & 98.59 & 80.22 & 96.61 & 98.65 \\
Lexicon Filter*   & \textbf{69.86} & \textbf{90.85} & \textbf{95.42} & \textbf{68.22} & \textbf{90.67} & \textbf{95.68} & \textbf{83.04} & \textbf{97.06} & \textbf{98.96} & \textbf{81.59} & \textbf{96.94} & \textbf{98.78} \\
\midrule
\rowcolor{gray!8}
\multicolumn{13}{l}{\textit{Noise Injection}} \\
RWM & 68.91 & 89.98 & 93.96 & 67.19 & 88.95 & 94.04 & 81.49 & 96.53 & 98.74 & 80.49 & 96.55 & 98.62 \\
Word Reduction & 62.00 & 86.96 & 92.66 & 60.62 & 86.01 & 92.57 & 81.00 & 96.48 & 98.76 & 79.61 & 96.53 & 98.62 \\
\bottomrule
\end{tabular}
}
\vspace{-6pt}
\end{table}

%The good retrieval performance obtained by \ourmethod thanks to the better semantic alignment especially along the structural direction. When visualizing the cross-modal attention between the textual description of each object on the image, as shown Fig.~\ref{fig:attentionmap}, we can clearly observe that \ourmethod~demonstrates a better semantic alignment especially on the structural descriptions, \eg shape, positioning, spatial relations. For instance, \yiming{referring to the images to give more grounded discussion}.

%% file: tables/table1.tex
\begin{table*}[t!]
\centering
\caption{
\textbf{Crossmodal retrieval performance of CLIP-based finetuning methods on multiple datasets}. We report Recall@K (\%) on both \inlineColorbox{lightpink}{\emph{Text$\rightarrow$Image}} and \inlineColorbox{lightblue}{\emph{Image$\rightarrow$Text}} settings.
Each row represents one method evaluated on two datasets (left--right paired).
Best results in \textbf{bold}; second best \underline{underlined}.
$\Delta$ denotes the margin over our best competitor, with gain in \gain{green}.
}
\label{tab:main_results}
% \tighttable
\vspace{-5pt}
\footnotesize
\resizebox{\textwidth}{!}{
{\fontsize{20}{24}\selectfont
 \begin{tabular}{@{} l *{6}{S[table-format=2.2]} | *{6}{S[table-format=2.2]} | *{6}{S[table-format=2.2]}| *{6}{S[table-format=2.2]} @{}}
 % \begin{tabular}{cccccc:cccccc:cccccc:cccccc}
\toprule

\multirow{2}{*}{\textbf{Method}} &
\multicolumn{6}{c|}{\cellcolor{panelgray}\textbf{SKETCHY}} &
\multicolumn{6}{c|}{\cellcolor{panelgray}\textbf{INSECT}}  &
\multicolumn{6}{c|}{\cellcolor{panelgray}\textbf{DOCCI}} &
\multicolumn{6}{c}{\cellcolor{panelgray}\textbf{DCI}} \\
\cmidrule(lr){2-7}\cmidrule(lr){8-13}\cmidrule(lr){14-19}\cmidrule(lr){20-25}

& \multicolumn{1}{c}{\cellcolor{lightpink}{R@1}} & \multicolumn{1}{c}{\cellcolor{lightpink}{R@5}} & \multicolumn{1}{c}{\cellcolor{lightpink}{R@10}}
& \multicolumn{1}{c}{\cellcolor{lightblue}{R@1}} & \multicolumn{1}{c}{\cellcolor{lightblue}{R@5}} & \multicolumn{1}{c|}{\cellcolor{lightblue}{R@10}}
& \multicolumn{1}{c}{\cellcolor{lightpink}{R@1}} & \multicolumn{1}{c}{\cellcolor{lightpink}{R@5}} & \multicolumn{1}{c}{\cellcolor{lightpink}{R@10}}
& \multicolumn{1}{c}{\cellcolor{lightblue}{R@1}} & \multicolumn{1}{c}{\cellcolor{lightblue}{R@5}} & \multicolumn{1}{c|}{\cellcolor{lightblue}{R@10}}
& \multicolumn{1}{c}{\cellcolor{lightpink}{R@1}} & \multicolumn{1}{c}{\cellcolor{lightpink}{R@5}} & \multicolumn{1}{c}{\cellcolor{lightpink}{R@10}}
& \multicolumn{1}{c}{\cellcolor{lightblue}{R@1}} & \multicolumn{1}{c}{\cellcolor{lightblue}{R@5}} & \multicolumn{1}{c|}{\cellcolor{lightblue}{R@10}}
& \multicolumn{1}{c}{\cellcolor{lightpink}{R@1}} & \multicolumn{1}{c}{\cellcolor{lightpink}{R@5}} & \multicolumn{1}{c}{\cellcolor{lightpink}{R@10}}
& \multicolumn{1}{c}{\cellcolor{lightblue}{R@1}} & \multicolumn{1}{c}{\cellcolor{lightblue}{R@5}} & \multicolumn{1}{c}{\cellcolor{lightblue}{R@10}}\\

\midrule
Long-CLIP{\Large\textcolor{gray}{[ECCV'24]} } & 54.32 & 80.14 & 88.43 & 52.76 & 80.31 & 88.08 & 8.20 & 23.83 & 34.97 & \underline{9.41} & 24.78 & \underline{37.31} & 64.49 & 87.67 & 93.43 & 63.08 & 87.45 & 93.14 & 59.23 & 80.89 & 87.04 & 60.13 & 81.44 & 87.54 \\
FineLIP{\Large\textcolor{gray}{[CVPR'25]} } & 40.59 & 71.16 & 81.78 & 40.33 & 72.11 & 82.38 & 8.46 & 23.32 & 33.59 & 6.86 & 23.75 & 34.46 & 67.80 & 90.22 & 94.84 & 66.39 & 89.12 & 94.47 & 66.13 & 85.34 & 89.79 & 64.58 & 84.59 & 89.54 \\
SmartCLIP{\Large\textcolor{gray}{[CVPR'25]} } & 50.73 & 81.09 & \underline{94.56} & 51.30 & 80.83 & \underline{94.04} & 4.84 & 16.84 & 34.63 & 4.66 & 15.46 & 34.02 & 74.92 & 94.08 & 97.31 & 74.91 & 94.04 & \underline{97.29} & 69.88 & 86.64 & \underline{94.05} & 70.94 & 87.04 & 92.77 \\
GOAL{\Large\textcolor{gray}{[CVPR'25]} }  & \underline{63.21} & \underline{87.13} & 93.44 & \underline{62.44} & \underline{87.82} & 92.31 & \underline{8.81} & \underline{24.35} & \underline{35.84} & 8.55 & \underline{25.91} & 36.18 & \underline{79.47} & \underline{96.65} & \underline{98.69} & \underline{79.43} & \underline{96.14} & {97.25} & \underline{72.64} & \underline{89.89} & {93.70} & \underline{72.84} & \textbf{90.50} & \underline{93.20} \\
\textbf{\ourmethod~} & \textbf{69.86} & \textbf{90.85} & \textbf{95.42} & \textbf{68.22} & \textbf{90.67} & \textbf{95.68} & \textbf{9.93} & \textbf{26.60} & \textbf{38.34} & \textbf{9.50} & \textbf{26.60} & \textbf{39.64} & \textbf{83.04} & \textbf{97.06} & \textbf{98.96} & \textbf{81.59} & \textbf{96.94} & \textbf{98.78} & \textbf{75.90} & \textbf{90.00} & \textbf{95.15} & \textbf{74.39} & \underline{89.90} & \textbf{94.30} \\
\cmidrule(lr){1-25}
\textit{$\Delta$} & \gain{6.65} & \gain{3.72} & \gain{0.86} & \gain{5.78} & \gain{2.85} & \gain{1.64} & \gain{1.12} & \gain{2.25} & \gain{2.50} & \gain{0.09} & \gain{0.69} & \gain{2.33} & \gain{3.57} & \gain{0.41} & \gain{0.27} & \gain{2.16} & \gain{0.80} & \gain{1.49} & \gain{3.26} & \gain{0.11} & \gain{1.10} & \gain{1.55} & \loss{0.60} & \gain{1.10} \\
\bottomrule
\end{tabular}
}
}
\vspace{0.5em}

\end{table*}

%% file: sec/conclusion.tex
\section{Conclusion}\label{sec:conclusion}
We introduced \ourmethod, a structure-centric fine-tuning framework that leverages edge-based visual cues and their textual counterparts to strengthen vision-language alignment for long, detail-rich captions.
Our mutual information analysis provides theoretical insight into its stabilizing effect on optimization. Extensive experiments across general and specialized domains demonstrate consistent gains by \ourmethod, its robustness to variations in structural extraction, and plug-and-play compatibility with diverse fine-tuning frameworks. 
\ourmethod highlights the value of introducing structural cues at VLM fine-tuning, which gives rise to an inviting future direction: \textit{What would emerge if we extend our framework to train a VLM from scratch?}. However, we acknowledge that exploring this direction is very computationally demanding.

%% file: sec/X_suppl.tex
\clearpage
\setcounter{page}{1}
\maketitlesupplementary
\setcounter{figure}{0}
\renewcommand{\thefigure}{\arabic{figure}}
\setcounter{table}{0}
\renewcommand{\thetable}{\arabic{table}}

\setcounter{section}{0}
\renewcommand{\thesection}{\Alph{section}}

In this supplementary material, we provide a comprehensive analysis and additional details to support the main paper. The content is organized as follows:

\begin{itemize}
    \item \textbf{Dataset details} (\cref{supp:dataset_details}): We provide visual samples and statistical breakdowns for all datasets (\cref{supp:dataset_details}). Specifically, we detail the construction and curation process of the specific-domain INSECT dataset~(\cref{insect:dataset}).
    
    \item \textbf{Lexicon filtering} (\cref{supp:lexion}): We present the details about lexicon filtering  in terms of the used prompts, the appearance-related vocabulary and the filtering process (\cref{supp:VocabularyGeneration}). We also provide a statistical analysis on the filtered texts (\cref{supp:filter_stats}).
    
    \item \textbf{Extended information-theoretic analysis} (\cref{supp:theory_analysis}): We provide detailed theoretical proof (\cref{supp:theory_analysis:infotheory}) and its full supporting empirical analyses~(\cref{supp:theory_analysis:empirical}) across all four datasets.
    
    \item \textbf{Additional implementation details} (\cref{supp:extended_method}): We extend the method description with details on text token extension (\cref{supp:extended_method:length}) and the fine-tuning setup (\cref{supp:finetuningdetails}). Furthermore, we provide computational analysis (\cref{supp:computationalanalysis}).
    
    \item \textbf{Additional experimental analyses} (\cref{supp:more_exp}): We present
    additional ablation analysis on the lexicon filtering (\cref{supp:more_domainspecificlexicon}), additional cross-domain evaluation (\cref{supp:more_exp:cross_domain}), the extended experimental results at deeper ranks (\cref{supp:more_exp:deeprank}) and more qualitative results (\cref{supp:more_exp:qualitatives}) to complement the experimental evaluation in the main paper.
\end{itemize}

\section{Dataset Details}
\label{supp:dataset_details}
%In this section, we provide extended details for all datasets used in our experiments, including statistics, data split protocols, and construction procedures. 
Figs.~\ref{fig:SKETCHY_edge_map}-\ref{fig:DCI_edge_map} shows sample images with their extracted edge maps, associated original textual descriptions and the filtered textual descriptions from each dataset.

\begin{figure*}[t]
\centering
\footnotesize
\captionsetup{type=figure}

\begin{tikzpicture}
    \node[anchor=south west,inner sep=0] (image) at (0,0)
        {\includegraphics[width=\linewidth]{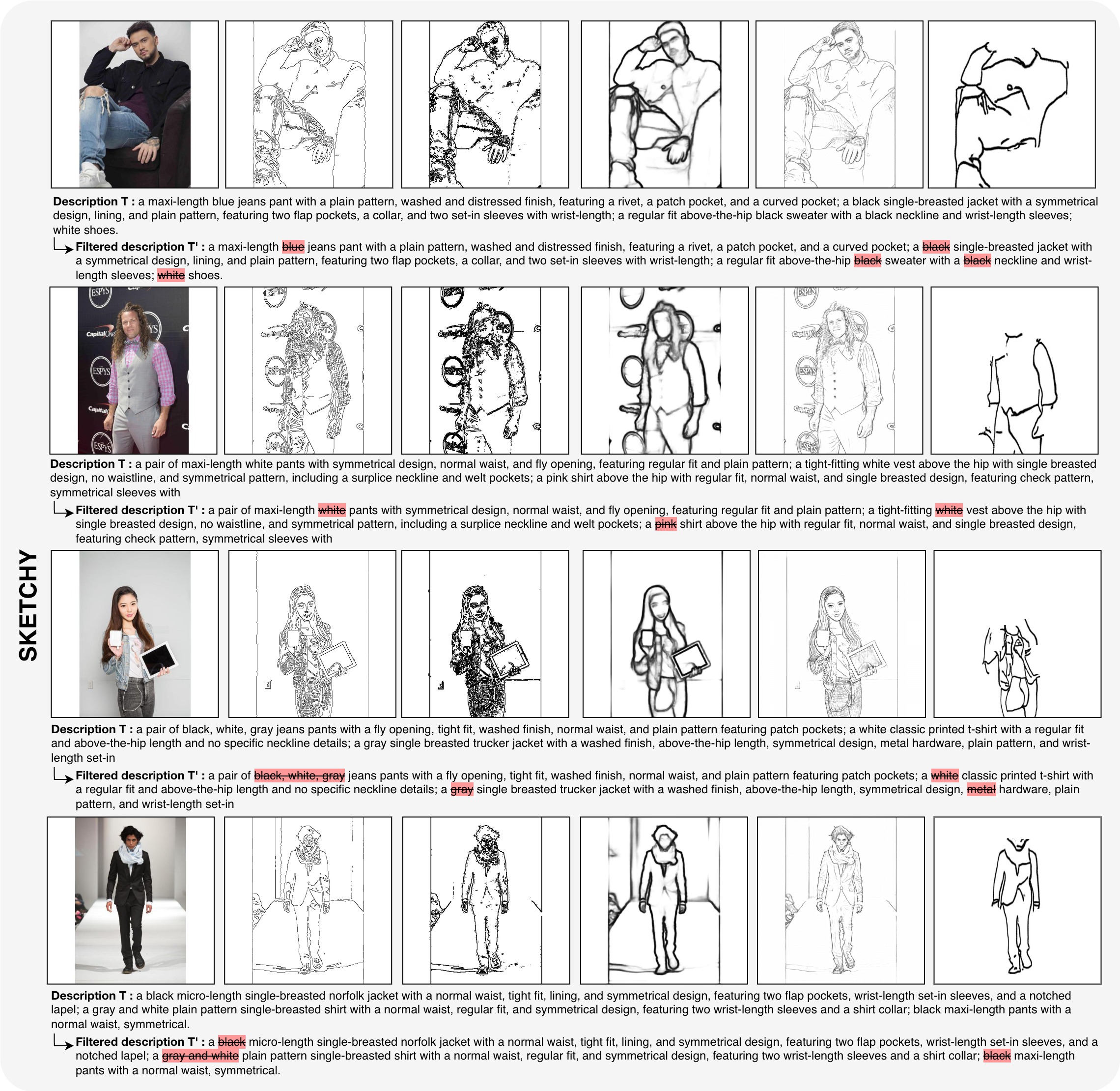}};

    \begin{scope}[x={(image.south east)},y={(image.north west)}]
        % Position words above each image - adjust coordinates as needed
        \node at (0.135,1.02) {\textbf{Original}};
        \node at (0.287,1.02) {\textbf{Canny}\cite{canny2009computational}};
        \node at (0.432,1.02) {\textbf{LoG}\cite{marr1980theory}};
        \node at (0.6,1.02)   {\textbf{HED}\cite{xie2015holistically}};
        \node at (0.752,1.02) {\textbf{LAD}\cite{zhang2023adding}};
        \node at (0.897,1.02) {\textbf{P2S}\cite{liu2021learn} };
    \end{scope}
\end{tikzpicture}

\caption{Illustration of SKETCHY dataset samples across different edge-map representations. For each example, we show the original RGB image together with its caption, followed by Canny, LoG, HED, LAD, and P2S edge maps, as well as the filtered caption $\textstruct$ that preserves only structure-centric information.}
\label{fig:SKETCHY_edge_map}
\end{figure*}

\begin{figure*}[t]
\centering
\footnotesize
\captionsetup{type=figure}

\begin{tikzpicture}
    \node[anchor=south west,inner sep=0] (image) at (0,0)
        {\includegraphics[width=\linewidth]{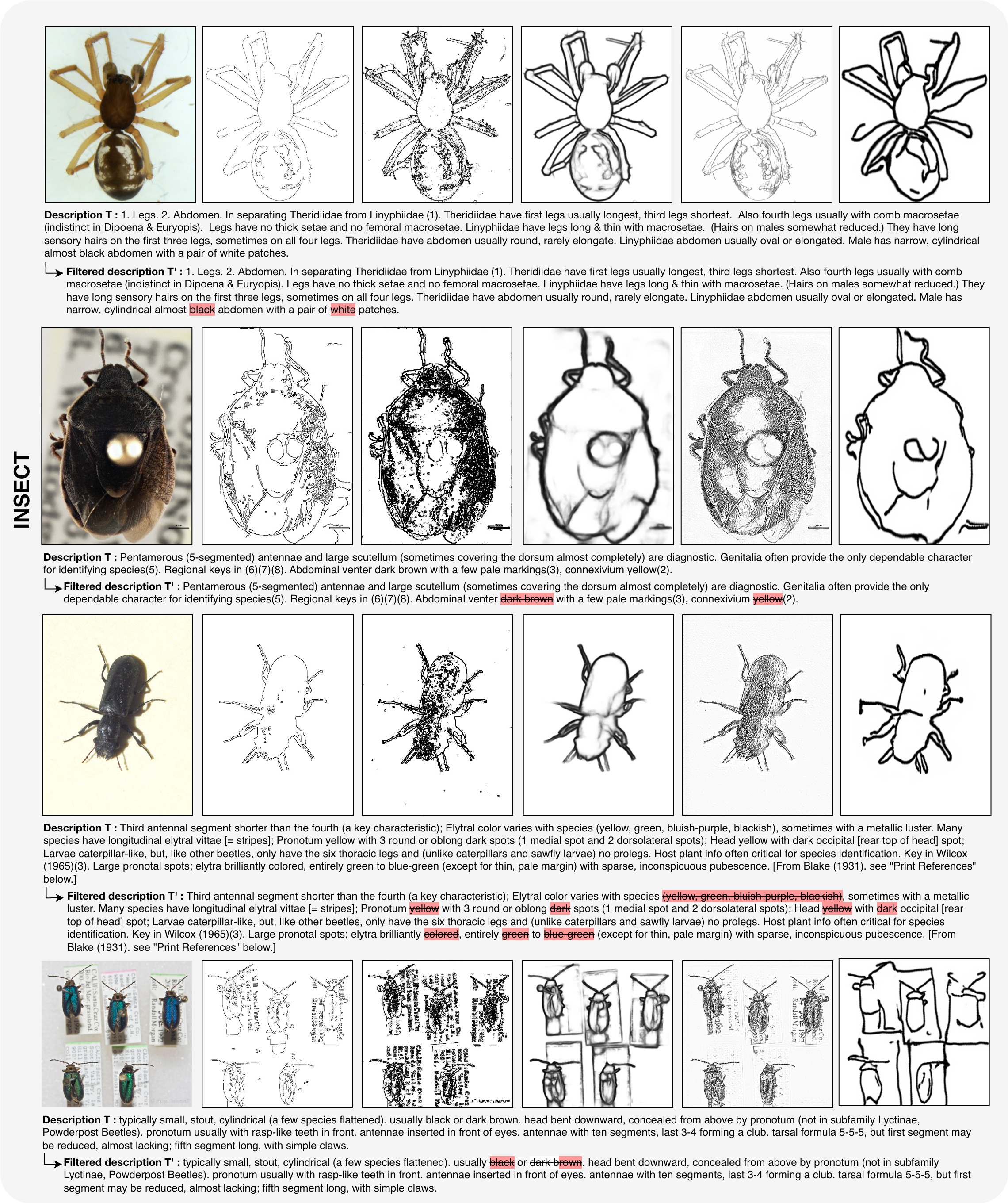}};

    \begin{scope}[x={(image.south east)},y={(image.north west)}]
        % Position words above each image - adjust coordinates as needed
        \node at (0.135,1.02) {\textbf{Original}};
        \node at (0.287,1.02) {\textbf{Canny}\cite{canny2009computational}};
        \node at (0.432,1.02) {\textbf{LoG}\cite{marr1980theory}};
        \node at (0.6,1.02)   {\textbf{HED}\cite{xie2015holistically}};
        \node at (0.752,1.02) {\textbf{LAD}\cite{zhang2023adding}};
        \node at (0.897,1.02) {\textbf{P2S}\cite{liu2021learn} };
    \end{scope}
\end{tikzpicture}

\caption{Illustration of INSECT dataset samples across different edge-map representations.}
\label{fig:INSECT_edge_map}
\end{figure*}

\begin{figure*}[t]
\centering
\footnotesize
\captionsetup{type=figure}

\begin{tikzpicture}
    \node[anchor=south west,inner sep=0] (image) at (0,0)
        {\includegraphics[width=\linewidth]{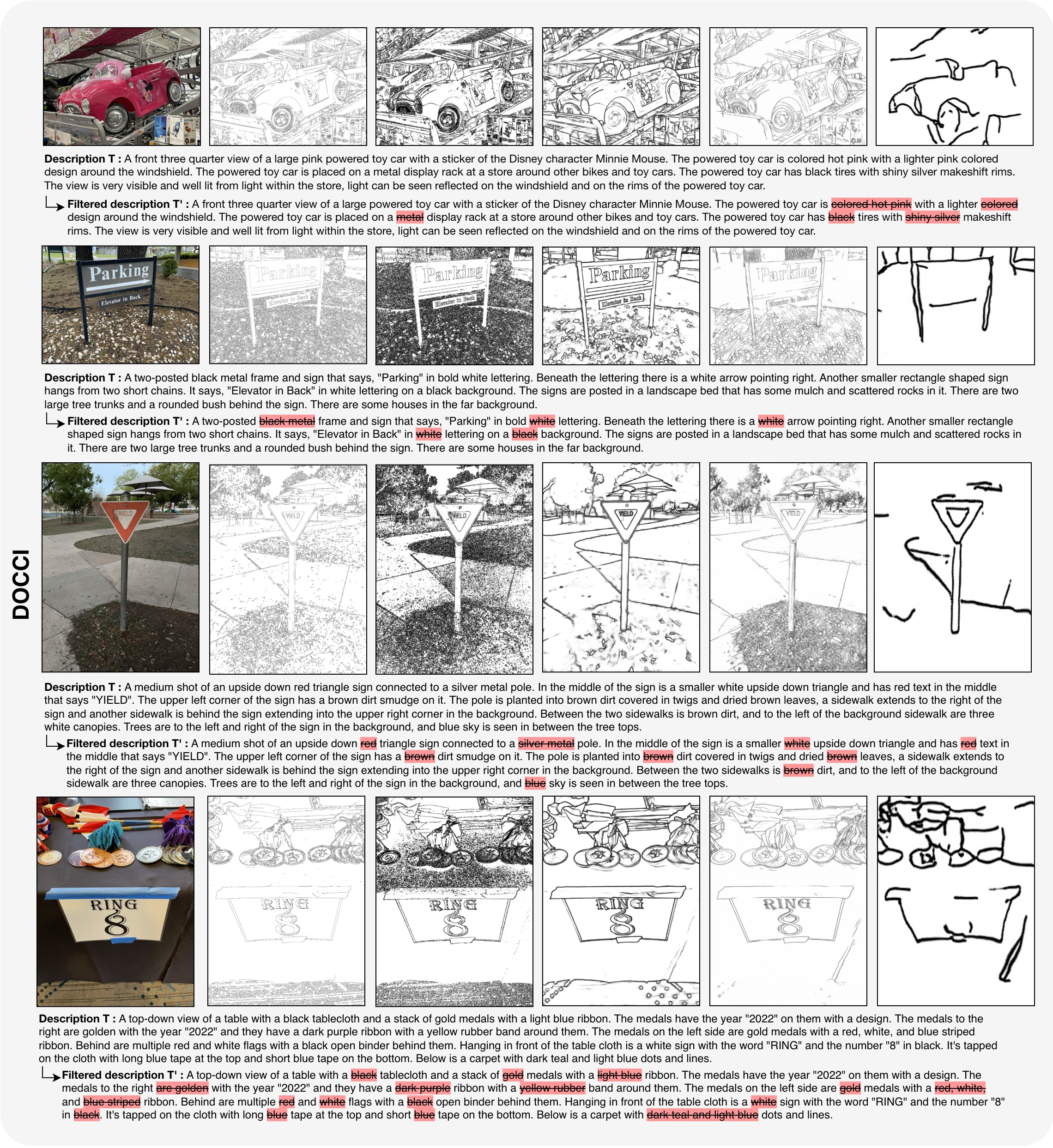}};

    \begin{scope}[x={(image.south east)},y={(image.north west)}]
        % Position words above each image - adjust coordinates as needed
        \node at (0.135,1.02) {\textbf{Original}};
        \node at (0.287,1.02) {\textbf{Canny}\cite{canny2009computational}};
        \node at (0.432,1.02) {\textbf{LoG}\cite{marr1980theory}};
        \node at (0.6,1.02)   {\textbf{HED}\cite{xie2015holistically}};
        \node at (0.752,1.02) {\textbf{LAD}\cite{zhang2023adding}};
        \node at (0.897,1.02) {\textbf{P2S}\cite{liu2021learn} };
    \end{scope}
\end{tikzpicture}

\caption{Illustration of DOCCI dataset samples across different edge-map representations.}
\label{fig:DOCCI_edge_map}
\end{figure*}

\begin{figure*}[t]
\centering
\footnotesize
\captionsetup{type=figure}

\begin{tikzpicture}
    \node[anchor=south west,inner sep=0] (image) at (0,0)
        {\includegraphics[width=\linewidth]{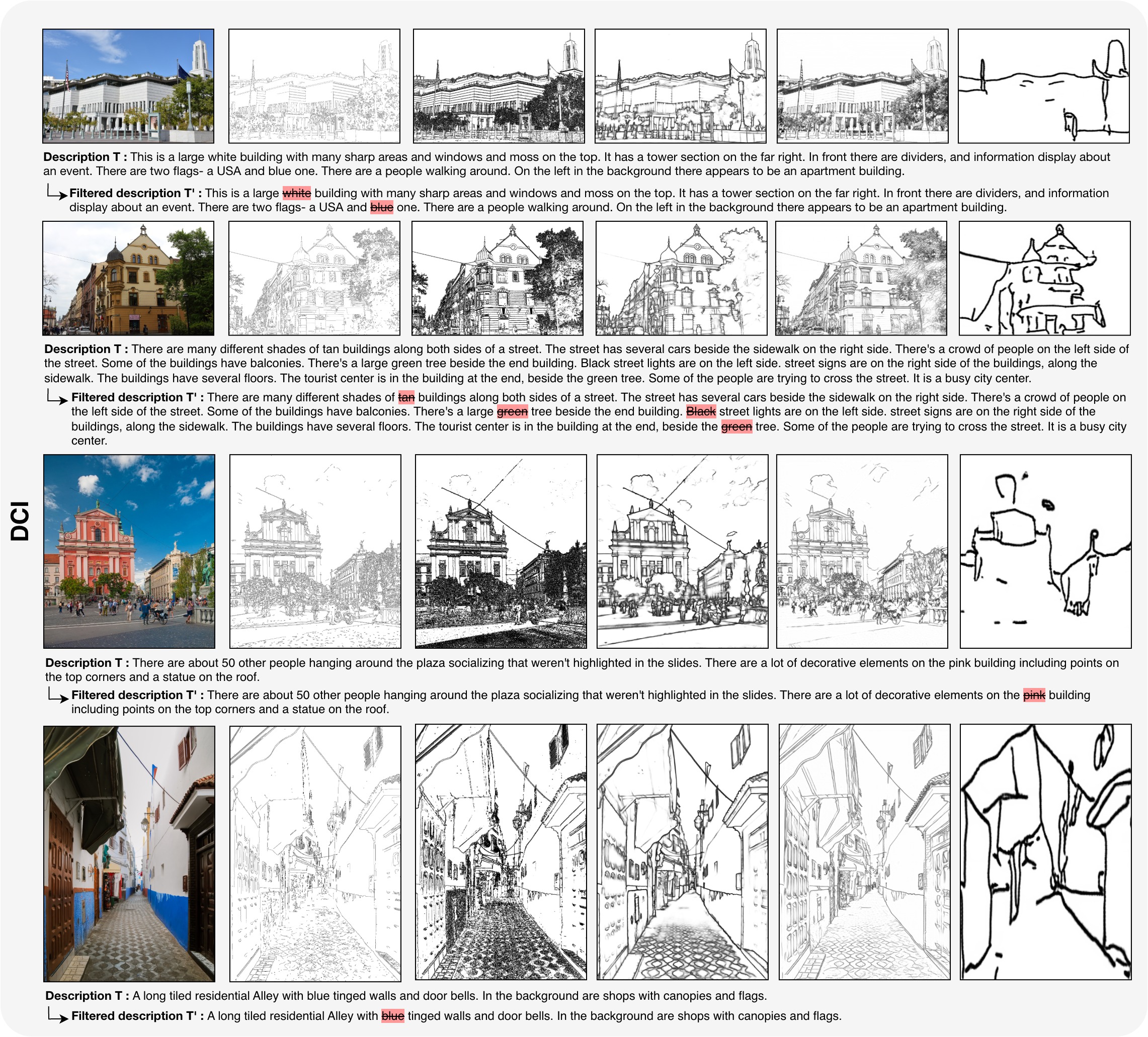}};

    \begin{scope}[x={(image.south east)},y={(image.north west)}]
        % Position words above each image - adjust coordinates as needed
        \node at (0.135,1.02) {\textbf{Original}};
        \node at (0.287,1.02) {\textbf{Canny}\cite{canny2009computational}};
        \node at (0.432,1.02) {\textbf{LoG}\cite{marr1980theory}};
        \node at (0.6,1.02)   {\textbf{HED}\cite{xie2015holistically}};
        \node at (0.752,1.02) {\textbf{LAD}\cite{zhang2023adding}};
        \node at (0.897,1.02) {\textbf{P2S}\cite{liu2021learn} };
    \end{scope}
\end{tikzpicture}

\caption{Illustration of DCI dataset samples across different edge-map representations.}
\label{fig:DCI_edge_map}
\end{figure*}
Our experiments are conducted on four datasets spanning different domains and levels of semantic granularity: DCI \cite{urbanek2024picture}, DOCCI \cite{docci2024}, SKETCHY \cite{girella2025lots}, and INSECT \cite{truong2025insect}. For the general-domain setting, we follow \cite{goal2025}, utilize two human-annotated datasets, DCI and DOCCI, which are originally designed for dense image captioning.
DCI contains 7,805 natural images, each paired with highly detailed and information-dense descriptions.
DOCCI consists of approximately 15k natural scene images collected across diverse geographic regions. Each image is annotated with long, highly compositional and discriminative descriptions, with an average length of 136 words.

In the domain-specific setting, we use SKETCHY and INSECT, two datasets characterized by fine-grained visual concepts and prominent structural properties.
SKETCHY is a large-scale fashion dataset containing roughly 46k outfit images, each paired with detailed descriptions covering garment components, fabric, pattern shapes, and spatial relations between parts. Although the average text length is only about 56 words, the descriptions are semantically dense and rich in visual detail.
INSECT contains 6k insect images with highly fine-grained and biologically rare categories that are underrepresented in general-purpose pretrained VLMs. Each image is paired with expert-verified biological descriptions covering coloration, wing structures, body-segment proportions, and species-level morphological traits, with an average length of 81 words. 
Since the original Insect-1M dataset contains substantial redundancy in both images and text, we construct a refined version suitable for fine-tuning on long-text understanding (see in the following). 

Regarding dataset splits, we follow the standard protocol for DOCCI and DCI as in~\cite{goal2025}, with 5,100 and 2,000 images in their test sets, respectively.
For SKETCHY, we adopt the official test split of 1.2k images.
For INSECT, since the original dataset provides no official split, we construct a split by dividing the curated dataset with an 8:2 train-test ratio.

\subsection{Condensed INSECT Dataset Construction}
\label{insect:dataset}
The INSECT dataset is derived from the large-scale insect image repository Insect-1M, which contains over one million images covering approximately 34,000 species, along with hierarchical textual annotations ranging from Phylum down to Species. Despite its scale and richness, the original dataset is not readily suitable for image–text retrieval or cross-modal fine-tuning. 
First, the dataset does not assign unique descriptions to individual images; instead, it uses an indexed-description mechanism in which large numbers of visually different images share the same short list of text tokens (\eg, descriptionA = [1,2,3]), resulting in \textit{extensive duplication of textual annotations}. Then, the high-level textual descriptions (\eg, Order, Class) are shared across 
thousands of samples, leading to large textual overhead that are not very meaningful for fine-grained discrimination compared to those description on finer granularity concerning, \eg Genus and Species.
% \textit{iii)} the fine-grained and biologically meaningful descriptions (Genus 
% and Species), which are crucial for fine-tuning, are overshadowed by higher-level
% textual noise.
 
We therefore construct a more suitable version of INSECT for long-text fine-tuning. First, to reduce duplication and increase representational diversity, we design a greedy selection algorithm based on description-ID overlap and select a \textit{core sample set} using an overlap threshold of 6, effectively removing redundant images and repeated descriptions.
Second, to create a more challenging cross-family generalization scenario, we apply a strict Family-level out-of-domain split, ensuring that the families in the test set are completely disjoint from those in the training set.
Then, to ensure that the textual side focuses on biologically discriminative features, we retain only the Genus- and Species-level descriptions and discard higher-level labels, yielding more compact and semantically fine-grained image–text pairs. Through these steps, we obtain a curated 6,000-pair high-quality INSECT dataset.
\textit{Both the dataset and the code used for its construction will be released publicly.}

\begin{figure*}[t]
\centering
\begin{tcolorbox}[
    width=0.95\linewidth,
    colback=gray!3,
    colframe=gray,
    boxrule=0.3pt,
    title={(1): General Appearance Lexicon Prompt–Response}
]
\small

% ------------------ USER QUERY ------------------
\begin{minipage}{0.12\linewidth}
\centering
\includegraphics[width=0.8\linewidth]{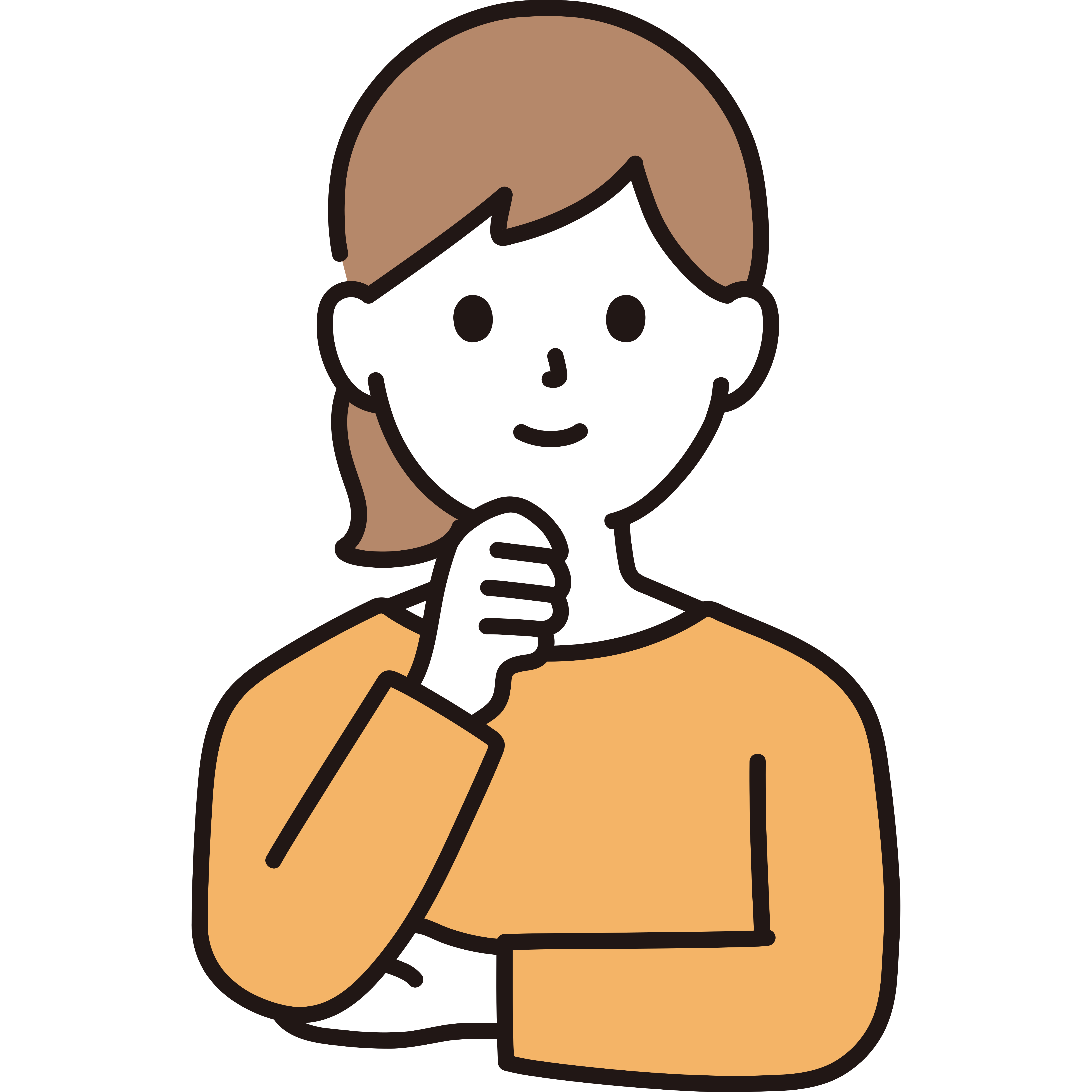}  % robot icon  % user icon
\end{minipage}
\hfill
\begin{minipage}{0.83\linewidth}
\textbf{USER:} You are a vision-language expert. Consider an image representation that contains only edge information 
(i.e., contours, boundaries, structure, and geometric layout), with all 
color, shading, material, and surface cues removed. Please produce a comprehensive list of English words and short phrases 
whose meanings cannot be inferred from such an edge-only representation. Return the list as comma-separated items.
\end{minipage}

\vspace{0.55em}
\rule{0.9\linewidth}{0.3pt}  % visual separator
\vspace{0.55em}

% ------------------ LLM RESPONSE ------------------
\begin{minipage}{0.80\linewidth}
\textbf{LLM:}  [red, blue, green, yellow, black, blackish, white, gray, grey, orange, purple, pink, brown, beige, cyan,  magenta, turquoise, teal, maroon, navy, violet, indigo, gold, silver, ivory, cream, olive, tan,  peach, mint, burgundy, crimson, scarlet, lavender, lilac, azure, teal, aqua, aquamarine, navy blue, 
    sky blue, baby blue, light blue, dark, dark blue, light green, dark green, forest green, lime green, 
    light red, dark red, rose red, wine red, light pink, hot pink, dark gray, light gray, dark grey, light grey, colored, cotton, wool, silk, linen, denim, leather, suede, velvet, satin, chiffon,
    polyester, nylon, spandex, acrylic, rayon, cashmere, fleece, corduroy, lace, mesh,
    canvas, tweed, felt, rubber, plastic, metal, steel, iron, aluminum, bronze, brass,
    ceramic, glass, wood, bamboo, stone, marble, granite, concrete, clay, paper,
    fur, shearling, down, feather, denier, foam]
\end{minipage}
\hfill
\begin{minipage}{0.18\linewidth}
\centering
\includegraphics[width=0.8\linewidth]{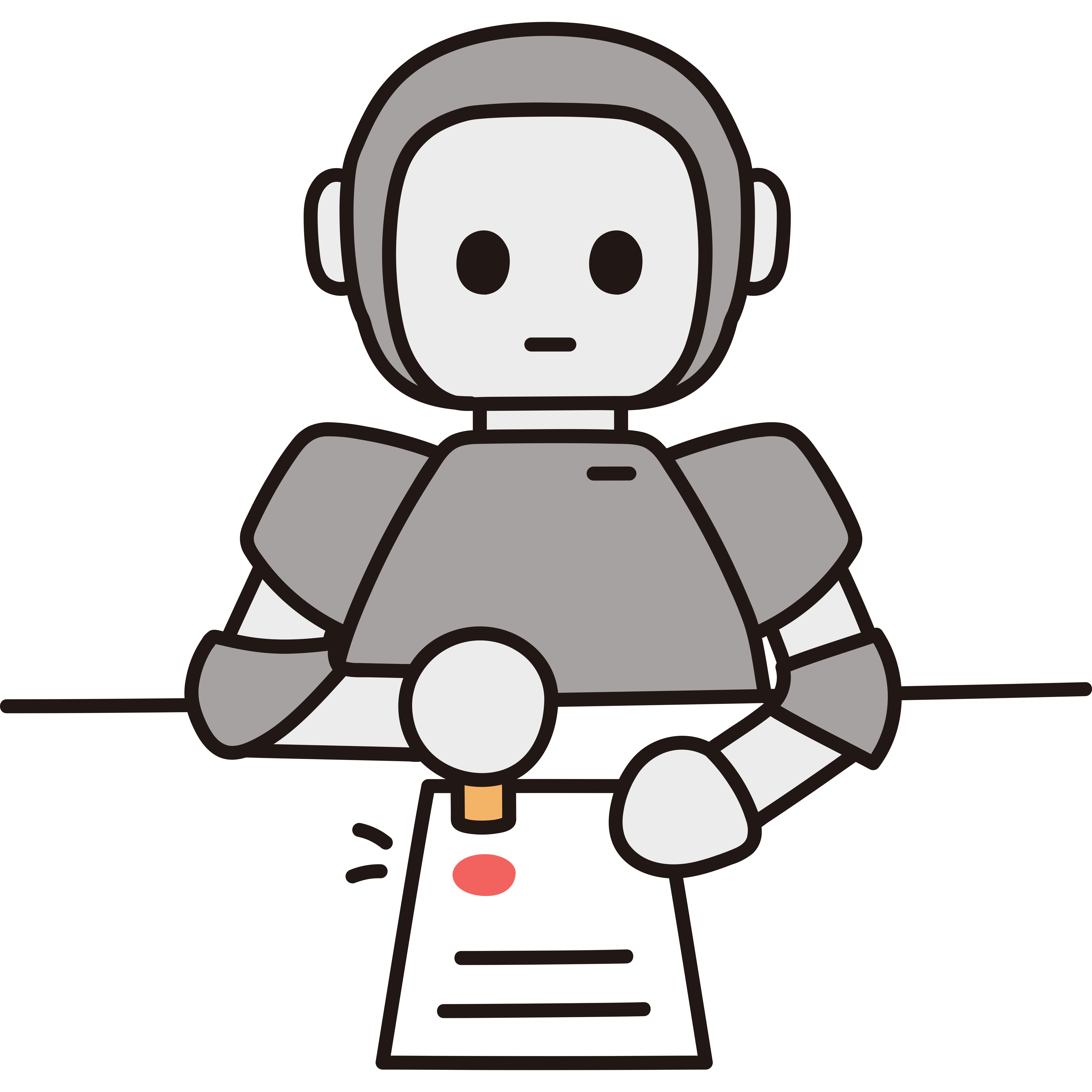}  % robot icon  % robot icon
\end{minipage}

\end{tcolorbox}

\caption{Prompt and response used in our LLM-based construction of the general appearance vocabulary for lexicon filtering. This lexicon is generated only once during preprocessing; the pipeline consistently uses this fixed lexicon thereafter, ensuring full reproducibility.}
\label{fig:general-lexion}
\end{figure*}

\section{Lexicon Filtering}
\label{supp:lexion}
This section provides additional details on the lexicon filtering process described in the main paper (Sec.~\ref{Structural_extraction}), including: i) how we use an LLM to construct the appearance vocabulary; ii) the implementation of our filtering function; and iii) statistical analyses of the filtering effect across the four datasets.

\subsection{Appearance Vocabulary Generation} 
\label{supp:VocabularyGeneration}
To systematically remove appearance-related terms such as color and material from the original text descriptions, we first construct a general-domain appearance lexicon, denoted as $\vocappearance$, which is later used for our lexicon filtering procedure. In the main paper, this lexicon is defined as $\vocappearance = \mathcal{LLM}(\prompfiltering)$. 
Here we provide additional details on how it is obtained.

As shown in Fig.~\ref{fig:general-lexion}, we use a large language model (in practice, ChatGPT-5.1) and provide it with a structured prompt $\prompfiltering$. The prompt instructs the model to consider a structure-centric representation and to produce a list of appearance-related terms that cannot be inferred from such structural information. The LLM returns a list of 600–800 appearance-only terms, including basic colors, color variants, material categories. The set of returned terms is used to form the final appearance lexicon $\vocappearance$. The complete vocabulary is listed in Fig.~\ref{fig:general-lexion}. In all experiments, we use the same general-domain appearance lexicon $\vocappearance$ as the filtering template for both general-domain and domain-specific datasets. The lexicon is built only once during pre-processing and is subsequently kept unchanged, enabling full reproducibility of the pipeline.

\subsection{Lexicon Filtering Function} 
\label{supp:lexiconfilteringfunction}
In the main paper, we denote the filtering function as  $\mathcal{F}(\cdot)$, whose purpose is to remove all appearance-related terms contained in $\vocappearance$ from the original textual description. Here we provide a detailed explanation of how this function is implemented in practice.
We first convert all entries in $\vocappearance$ into regular-expression matchers. During filtering, these expressions are applied to the input text to identify and remove any appearance terms found in the $\vocappearance$. The matching process is case-insensitive and respects word boundaries to avoid unintended partial matches. After removing appearance terms, we apply a lightweight grammatical cleanup. Since such removal can produce unnatural or fragmented text, for example, “a blue and white pattern” may temporarily become “a and pattern”. We first eliminate redundant spaces and punctuation to prevent repeated whitespace or stray commas. We then remove isolated conjunctions such as “and” or “or,” which lose their function once their associated tokens are deleted. Finally, we check whether the filtered sentence still contains sufficient semantic content. If too little meaningful text remains, we revert to the original description to avoid excessive information loss.

\subsection{Statistical Analysis on Lexicon Filtering} 
\label{supp:filter_stats}
We provide a statistical analysis of the effect of lexicon filtering across the four datasets. For consistency and ease of comparison, we coarsely categorize $\vocappearance$ into two classes: color words and material words, and analyze filtering coverage (``\textit{What proportion of captions were modified?}''), target specificity (``\textit{What types of words were removed?}''), and modification intensity (``\textit{How many words were removed on average?}'') based on this grouping.

As shown in Fig.~\ref{fig:statistic},  A reports the proportion of captions that were modified at least once. SKETCHY and DOCCI exhibit nearly 100\% intervention coverage, and DCI reaches 92.8\%, indicating that captions in these datasets commonly contain identifiable color or material descriptors. In contrast, INSECT shows only 58.9\% modified captions, reflecting that its descriptions inherently emphasize morphology and structure rather than appearance attribute, consistent with the style of biological taxonomic text. B summarizes the composition of removed words (computed over the modified captions). In SKETCHY and INSECT, approximately 97\% of removed terms are color-related, with material terms contributing only a negligible fraction. In DOCCI and DCI, material words account for 20\% and 17\%, respectively, aligning with the fact that captions in these datasets often mention building materials or object surface composition. C shows the average number of removed words per caption and the proportion relative to the original caption length. For the shorter captions of SKETCHY and INSECT, only 2.6-2.9 words are removed on average (corresponding to 3.1\% and 10.3\% of total caption length). For the longer captions in DOCCI and DCI, 7.1-7.2 words are removed on average, but this corresponds to only 5-6\% of the overall text. Our lexicon filtering removes appearance attributes effectively, without disrupting the structural or semantic core of the original caption. Overall, these three analyses demonstrate that lexicon filtering achieves high coverage, strong target specificity, and mild editing intensity, making it both effective and stable across domains.

\begin{figure*}[t]
    \centering
    \includegraphics[width=1.0\linewidth]{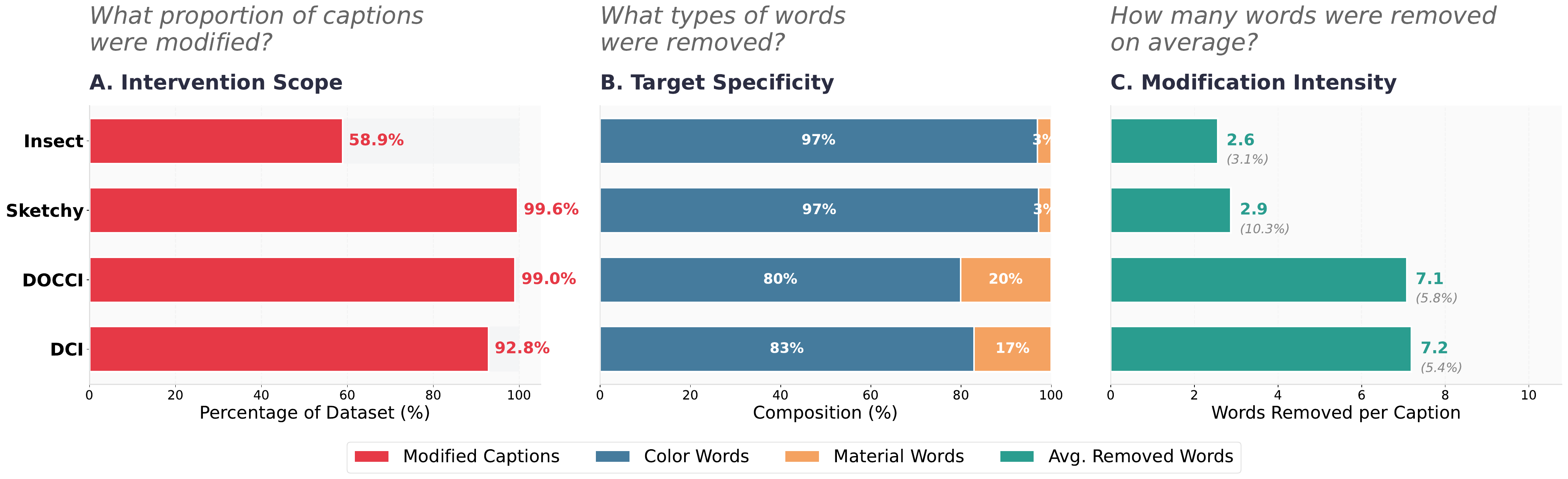}
    \caption{
\textbf{Statistical analysis of the lexicon filtering effect across the four datasets.}
For analysis, removed tokens are coarsely grouped into \emph{color words} and \emph{material words}.
(A) \textbf{Intervention scope:} percentage of captions in which at least one word was removed. 
(B) \textbf{Target specificity:} composition of removed tokens (computed over the modified captions in Panel~A), showing the proportion of color vs.\ material words.
(C) \textbf{Modification intensity:} average number of words removed per caption, with the percentage relative to the original caption length shown in parentheses.}

    \label{fig:statistic}
\end{figure*}
% TODO: full LLM prompt, Va construction pipeline, examples, statistics.

\section{Extended Information-Theoretic Analysis}
\label{supp:theory_analysis}
\subsection{Information-Theoretic Analysis}
\label{supp:theory_analysis:infotheory}
In this section we expand the theoretical view of Sec.~3.3 of the main paper. Specifically, we analyze the effect of the structure-centric objectives
$\mathcal{L}_{I',T'}$ and $\mathcal{L}_{I,I'}$ under the information-theoretic
and optimization lens giving three lemmas and a theorem.

\begin{lemma}[Information ordering for structure-centric views]
Let $I$ and $T$ denote the random variables associated with images and texts,
and let $I' = \mathcal{E}(I)$ and $T' = \mathcal{F}(T)$ be their structure-centric counterparts,
where $\mathcal{E}(\cdot)$ and $\mathcal{F}(\cdot)$ are deterministic maps that remove
appearance-related information and are not invertible.
Then the following inequality holds:
\[
I_{\mathrm{MI}}(I',T') < I_{\mathrm{MI}}(I,T),
\]
where $I_{\mathrm{MI}}(\cdot,\cdot)$ is the mutual information operator.
\end{lemma}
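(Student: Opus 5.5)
The non-strict inequality will follow from two applications of the Data Processing Inequality (DPI), already invoked in Sec.~\ref{sec:theory}. For the strict inequality I will state the extra assumption it needs and show it suffices. Since $I'=\mathcal{E}(I)$ and $T'=\mathcal{F}(T)$ are deterministic functions of $I$ and $T$ respectively, the variables form the Markov chain
\[
I' \;-\; I \;-\; T \;-\; T'.
\]
Applying the DPI first to $I'-I-T$ gives $I_{\mathrm{MI}}(I',T)\le I_{\mathrm{MI}}(I,T)$. Applying it then to $I'-T-T'$, which holds because $T'$ depends on $(I',T)$ only through $T$, gives $I_{\mathrm{MI}}(I',T')\le I_{\mathrm{MI}}(I',T)$. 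Chaining the two yields $I_{\mathrm{MI}}(I',T')\le I_{\mathrm{MI}}(I,T)$.

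To make each gap explicit I will use the chain rule. Because $I'$ is a function of $I$, we have $I_{\mathrm{MI}}(I,T)=I_{\mathrm{MI}}(I,I';T)$, and therefore
\[
I_{\mathrm{MI}}(I,T)=I_{\mathrm{MI}}(I',T)+I_{\mathrm{MI}}(I;T\mid I').
\]
Likewise, since $T'$ is a function of $T$,
\[
I_{\mathrm{MI}}(I',T)=I_{\mathrm{MI}}(I',T')+I_{\mathrm{MI}}(I';T\mid T').
\]
Adding the two identities,
\[
I_{\mathrm{MI}}(I,T)-I_{\mathrm{MI}}(I',T')=I_{\mathrm{MI}}(I;T\mid I')+I_{\mathrm{MI}}(I';T\mid T').
\]
Both terms on the right are nonnegative, so strictness holds exactly when at least one of them is positive.

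The main obstacle is that the stated hypothesis, non-invertibility of $\mathcal{E}$ and $\mathcal{F}$, does not by itself force either term to be positive. A counterexample: let $\mathcal{E}$ discard a component of $I$ that is independent of $T$ and of the rest of $I$. Then the map is non-invertible, yet $I_{\mathrm{MI}}(I;T\mid I')=0$. The same construction works on the text side, so equality can hold.

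I would therefore make explicit the assumption the paper's setting implicitly relies on: the discarded appearance information is \emph{relevant} to the paired modality. Concretely, I would assume $I_{\mathrm{MI}}(I;T\mid I')>0$, meaning that given the edge map, the color image still tells us something about the caption (for instance, the color words it contains). This assumption is plausible here because the removed color and material tokens are precisely predictable from the RGB image but not from $I'$. Under it, the decomposition immediately gives the strict inequality.

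Finally, I would note that the numerical KSG estimates in Sec.~\ref{sec:theory} ($0.20$ vs.\ $0.52$) provide empirical support for this relevance condition.
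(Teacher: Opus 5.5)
Your non-strict inequality is the paper's argument made explicit. The paper only says that $(I',T')$ arises from $(I,T)$ through the deterministic channel $(\mathcal{E},\mathcal{F})$ and invokes the Data Processing Inequality; you write down the Markov chain $I'-I-T-T'$ and apply the DPI twice.

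The real difference is strictness, and there you are right and the paper's sketch is not. The paper asserts that the inequality is strict whenever $\mathcal{E}$ or $\mathcal{F}$ is not injective. Your counterexample refutes this, and so does the simpler case where $I$ and $T$ are independent, so both sides equal $0$. Non-injectivity on the support only gives a strict entropy drop $H(I')<H(I)$. It says nothing about whether the collapsed information was informative about the \emph{other} modality.

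Your identity
\[
I_{\mathrm{MI}}(I,T)-I_{\mathrm{MI}}(I',T')=I_{\mathrm{MI}}(I;T\mid I')+I_{\mathrm{MI}}(I';T\mid T')
\]
isolates exactly what is needed and matches the standard equality condition of the DPI. The first term vanishes iff $I-I'-T$ is a Markov chain, i.e.\ iff $\mathcal{E}(I)$ is a sufficient statistic of $I$ for $T$. The second vanishes iff $\mathcal{F}(T)$ is a sufficient statistic of $T$ for $I'$. The exact hypothesis for strictness is therefore that at least one of these sufficiency conditions fails. Your image-side condition is the natural one to impose, since the text-side term only measures what the deleted color and material words reveal about the edge map, which is plausibly small.

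Your route thus yields a correct lemma with the hypothesis it actually needs, whereas the statement as literally posed (non-invertibility alone) cannot be proved. One small addition: strictness also requires $I_{\mathrm{MI}}(I',T')<\infty$ so that the subtraction is legitimate. This is automatic here, because pixel arrays and bounded-length token sequences take values in finite alphabets.
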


\noindent\textit{Proof sketch.}
The pair $(I',T')$ is obtained from $(I,T)$ through the
deterministic channel $(\mathcal{E},\mathcal{F})$. By the Data Processing Inequality~\cite{tishby99information},
any such transformation cannot increase mutual information, hence
$ I_{\mathrm{MI}}(I',T') \le I_{\mathrm{MI}}(I,T) $.
The inequality is strict whenever either $E$ or $F$ is not injective,
which holds in our setting since edge extraction and lexicon filtering
discard appearance cues, rather than introducing new information.

\medskip

\begin{lemma}[InfoNCE lower bounds for the two objectives]
Let $\imagefeat = \visualencoder(I)$ and $\textfeat = \textencoder(T)$ be the image and
text embeddings, and let $\imagefeat' = \visualencoder(I')$ and
$\textfeat' = \textencoder(T')$ be the structure-centric embeddings.
Assume both $\mathcal{L}_{I,T}$ and $\mathcal{L}_{I',T'}$ are symmetric
InfoNCE losses with batch size $N$. Then
\[
I_{\mathrm{MI}}(\imagefeat,\textfeat) \ge \log N - \mathbb{E}[\mathcal{L}_{I,T}],\quad
I_{\mathrm{MI}}(\imagefeat',\textfeat') \ge \log N - \mathbb{E}[\mathcal{L}_{I',T'}].
\]
\end{lemma}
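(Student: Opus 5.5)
The plan is to prove Lemma 2 with the standard variational argument of Oord et al.\ and Poole et al., applied once to each pair: $(\imagefeat,\textfeat)$ and $(\imagefeat',\textfeat')$. The two cases are identical, because $\structloss$ uses exactly the symmetric InfoNCE form of $\originalloss$; only the temperature differs, $\tempstruct$ instead of the original one. So I argue once for a generic pair $(X,Y)$ with critic $g(x,y)=\tau\langle x,y\rangle$. Here $X$ and $Y$ are the embeddings, and $\langle\cdot,\cdot\rangle$ is cosine similarity.

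I read the batch in the usual way: $(X_1,Y_1),\dots,(X_N,Y_N)$ are i.i.d.\ from the joint $p(x,y)$. Fix the text-to-image direction, with anchor $Y_1$, positive $X_1$ and negatives $X_2,\dots,X_N$. The one-directional loss is $\ell=-\log\frac{e^{g(X_1,Y_1)}}{\sum_j e^{g(X_j,Y_1)}}$.

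\textbf{Step 1 (classification view).} Introduce a uniform index $c\in\{1,\dots,N\}$ marking which $X_j$ is paired with $Y_1$. Every $g$ induces a predictor $q(c=i\mid X_{1:N},Y_1)=\frac{e^{g(X_i,Y_1)}}{\sum_j e^{g(X_j,Y_1)}}$, and $\mathbb{E}[\ell]$ is its cross-entropy. Cross-entropy is bounded below by the entropy of the true posterior, so it suffices to bound that entropy. This reduces the claim to $\log N-\mathbb{E}[\ell^*]\le I_{\mathrm{MI}}(X;Y)$, where $\ell^*$ is the loss under the Bayes-optimal critic. The true posterior has the form of $q$ with $e^{g}$ replaced by the density ratio $p(x\mid y)/p(x)$.

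\textbf{Step 2 (the main computation, and the expected obstacle).} Here I need to show
\[
\mathbb{E}\Big[\log\frac{p(X_1\mid Y_1)/p(X_1)}{\frac1N\sum_j p(X_j\mid Y_1)/p(X_j)}\Big]\le I_{\mathrm{MI}}(X;Y).
\]
The numerator's expectation is exactly $I_{\mathrm{MI}}(X;Y)$. The denominator term is $-\mathbb{E}\log\frac1N\sum_j r_j$, where $r_j$ is the ratio for the $j$-th sample.

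This is the delicate point. I would not try a direct Jensen argument on the denominator; its sign is the wrong way. Instead I would use Poole et al.'s derivation of the lower bound on $I_{\mathrm{MI}}(X_1;Y_1\mid X_{2:N})$. Write InfoNCE as a multi-sample NWJ/TUBA bound with the baseline $a(y,x_{2:N})=\frac1N\sum_j e^{g(x_j,y)}$. TUBA is a valid lower bound for any baseline, which gives
\[
I_{\mathrm{MI}}(X_1;Y_1)\ge \mathbb{E}\big[g(X_1,Y_1)-\log a\big]+1-\mathbb{E}\big[e^{g}/a\big].
\]
The last two terms cancel by exchangeability of the $N$ samples. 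The remaining term equals $\log N-\mathbb{E}[\ell]$. This route is cleaner than Step 1's Bayes-optimal argument, so I would use it as the actual proof and keep Step 1 as intuition.

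\textbf{Step 3 (symmetrize and conclude).} The image-to-text direction gives the same bound with the roles of $X$ and $Y$ swapped. Averaging the two directions yields the bound for the symmetric loss, since the right-hand side $\log N-\mathbb{E}[\cdot]$ is affine in the loss. Applying this to $(\imagefeat,\textfeat)$ with $\originalloss$, and to $(\imagefeat',\textfeat')$ with $\structloss$, gives both inequalities of the lemma.

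Combining the result with Lemma 1 is not needed here, since the statement concerns only the embeddings. I would note one caveat in the text: the bound treats the encoders as fixed at evaluation time and the batch as i.i.d.\ from the data distribution.
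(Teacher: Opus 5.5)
Your proposal is correct and takes the paper's route. The paper's proof simply invokes the variational bound of Poole et al.\ for the pairs $(\imagefeat,\textfeat)$ and $(\imagefeat',\textfeat')$; your Step~2 (the multi-sample NWJ bound with the batch-mean baseline, where the $1-\mathbb{E}[e^{g}/a]$ terms cancel by exchangeability under the product of marginals) and the symmetrization in Step~3 spell out exactly what that citation contains.

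One small fix: the baseline $\frac1N\sum_j e^{g(x_j,y)}$ depends on $x_1$ too, so write it as $a(y,x_{1:N})$. Then justify the bound as NWJ applied to the pair $(X_{1:N},Y)$, using $I_{\mathrm{MI}}(X_{1:N};Y)=I_{\mathrm{MI}}(X_1;Y)$, rather than as TUBA with an $x$-independent baseline.
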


\noindent\textit{Proof.}
This is the mere application of~\cite{poole2019variational}, stating that the InfoNCE loss is a standard variational lower bound on mutual information.
Applying the result to the pairs $(\imagefeat,\textfeat)$ and $(\imagefeat',\textfeat')$ yields the two
inequalities. The assumptions on symmetry and batch size match the
formulation in Sec. 3.2 of the main paper. In the following, to the sake of clarity, we assume $I_{\mathrm{MI}}(\imagefeat,\textfeat)\approx I_{\mathrm{MI}}(I,T) $ and $I_{\mathrm{MI}}(\imagefeat',\textfeat')\approx I_{\mathrm{MI}}(I',T') $.

\medskip

\begin{lemma}[Directional compatibility of gradients]
Let $\theta$ denote the parameters shared by the vision and text encoders.
Consider the gradients $\nabla_{\theta}\mathcal{L}_{I,T}$ and
$\nabla_{\theta}\mathcal{L}_{I',T'}$.
If the positive pairs in the two losses correspond to the same image–text
instances and the encoders are shared, then the expected cosine similarity
between the two gradients satisfies
\[
\mathbb{E}\big[\cos(\nabla_{\theta}\mathcal{L}_{I,T},
\nabla_{\theta}\mathcal{L}_{I',T'})\big] > 0.
\]
\end{lemma}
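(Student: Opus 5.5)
The plan is to write both gradients explicitly via the chain rule through the shared encoders and show that, in expectation, they decompose into a dominant common component plus a remainder that is small relative to it. For a symmetric InfoNCE loss with cosine similarity $s_{ij}$ and temperature $\tau$, the gradient with respect to $\theta$ is
\[
\nabla_\theta \mathcal{L}_{I,T} = \frac{1}{2N}\sum_{i,j}\bigl(p_{ij}+q_{ij}-2\delta_{ij}\bigr)\,\tau\,\nabla_\theta s_{ij},
\]
where $p_{ij}$ and $q_{ij}$ are the row- and column-softmax probabilities. The gradient $\nabla_\theta \mathcal{L}_{I',T'}$ has the same expression, with $s'_{ij}$, $\tau'$, $p'_{ij}$ and $q'_{ij}$ computed on $(\imagefeat'_i,\textfeat'_j)$.

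Because the positive pairs are indexed by the same instances, the coefficient matrices $C_{ij}=p_{ij}+q_{ij}-2\delta_{ij}$ and $C'_{ij}$ share the same sign pattern: negative on the diagonal, positive off it. This is the first step.

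The second step links $\nabla_\theta s_{ij}$ to $\nabla_\theta s'_{ij}$. I would use the consistency term $\mathcal{L}_{I,I'}$ as justification. Writing $\imagefeat'_i=\imagefeat_i+\Delta_i$ and $\textfeat'_j=\textfeat_j+\Delta^t_j$, I expand $\nabla_\theta s'_{ij}=\nabla_\theta s_{ij}+R_{ij}$ to first order, where $R_{ij}$ is controlled by $\|\Delta_i\|$, $\|\Delta^t_j\|$ and the Lipschitz constants of the encoder Jacobians. The inner product then becomes
\[
\langle \nabla_\theta\mathcal{L}_{I,T},\nabla_\theta\mathcal{L}_{I',T'}\rangle = \frac{\tau\tau'}{4N^2}\Bigl\langle \sum_{ij}C_{ij}\nabla s_{ij},\,\sum_{ij}C'_{ij}\nabla s_{ij}\Bigr\rangle + \text{(terms in }R\text{)}.
\]
I would treat the main term by viewing the map $C\mapsto\sum C_{ij}\nabla s_{ij}$ as a linear operator $A$. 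The main term is then $\langle AC,AC'\rangle = C^\top A^\top A\,C'$.

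The third step shows this is positive in expectation. The argument runs in three parts. First, $A^\top A$ is positive semidefinite. Second, the diagonal entries of $C$ and $C'$ both carry the dominant $-2(1-p_{ii})$ mass, so $C$ and $C'$ are positively aligned, meaning $\langle C,C'\rangle>0$. Third, the diagonal pulls dominate under the standard assumption that positive-pair similarity gradients are not cancelled by the negatives. Together these give a strictly positive expectation of the main term once it exceeds the $R$ terms, and dividing by the gradient norms yields the cosine statement.

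The main obstacle is this third step. Positivity of $\langle C,C'\rangle$ does not imply positivity of $C^\top A^\top A C'$ unless $A^\top A$ is close to a scaled identity on the relevant subspace, or $C'$ is close to a positive multiple of $C$. I would first try to bound $\|C'-\alpha C\|$ for some $\alpha>0$ using the fact that both softmax matrices are sharply concentrated on the diagonal after pre-training. If that fails, I would fall back on an explicit assumption, namely a bounded condition number of $A^\top A$ restricted to the span of $C$ and $C'$, together with small representation drift $\|\Delta\|$. Under that assumption I would state the lemma, and note that the empirically measured cosine ($\mu\approx0.69$) supports it.
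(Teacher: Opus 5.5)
Your Step~1 is correct, and it is essentially all that the paper's own sketch contains. Using the same positive instances gives $C$ and $C'$ the same sign pattern. The paper then simply asserts ``aligned update directions'' under unspecified ``mild regularity assumptions on the encoders,'' and cites the measured cosines of Fig.~4(c). You are also right that Step~3 cannot close without an extra assumption. The genuine gap is Step~2, and it makes your fallback assumption the wrong one.

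A small drift $\Delta_i=\structimagefeat_i-\imagefeat_i$ controls the \emph{outputs} of $\visualencoder$ at $I_i$ and $I'_i$. But $R_{ij}$ is driven by the difference of parameter Jacobians $\partial_\theta\visualencoder(I'_i)-\partial_\theta\visualencoder(I_i)$, and output proximity does not constrain that difference. A Lipschitz bound on the Jacobian is a bound in the \emph{input} distance $\|I'_i-I_i\|$, and a binary edge map is far from its RGB image. Already the patch-embedding gradient is a sum of outer products with the raw input patches, which differ drastically between the two views. In addition, $\consistencyloss$ does not touch the text side ($\structtextfeat_j$ vs.\ $\textfeat_j$). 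It is also not a hypothesis of the lemma; the paper even reports that the correlation persists when it is disabled.

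Consequently the main term is not $C^\top A^\top A\,C'$ but $C^\top A^\top A'\,C'$, where $A'$ is built from the $\nabla_\theta s'_{ij}$. The cross-view Gram matrix $A^\top A'$ is an empirical-NTK kernel between RGB/edge and full/filtered inputs, and it is in general neither symmetric nor positive semidefinite. A condition-number assumption on $A^\top A$ therefore constrains the wrong operator.

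Some assumption of this kind is unavoidable, because the lemma's hypotheses alone cannot give the strict inequality. If the shared encoders happened to process the two views through disjoint parameter blocks, the two gradients would have disjoint supports and the cosine would be exactly $0$. The hypothesis that makes your argument close is a cross-view alignment condition. For example, assume $x^\top A^\top A' x\ge\mu\|x\|^2$ and $\|A^\top A'\|\le L$ on the zero-sum subspace; both $C$ and $C'$ lie there, since rows of $p$ and columns of $q$ sum to one. Assume also that some $\alpha>0$ satisfies $\|C'-\alpha C\|<(\mu/L)\,\alpha\|C\|$. Then $C^\top A^\top A'C'\ge\alpha\mu\|C\|^2-L\|C\|\,\|C'-\alpha C\|>0$ for every batch.

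Three smaller corrections:
\begin{enumerate}
\item $\langle C,C'\rangle>0$ follows from the sign pattern alone. Conversely, $\sum_{ij}C_{ij}=0$, so there is no ``diagonal dominance'' in coefficient mass; any dominance must come from the Jacobians.
\item Positivity has to hold per batch, or the random norms must be handled separately. A positive expected inner product does not yield a positive expected cosine after dividing by norms.
\item Obtaining $C'\approx\alpha C$ from diagonal concentration is unlikely. The hard negatives differ between views by design: pairs differing only in color or material nearly coincide in the structural view. The paper itself also claims that $\structloss$ stays unsaturated after $\originalloss$ flattens. The burden therefore falls on the conditioning $\mu/L$ of the cross-view kernel.
\end{enumerate}
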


\noindent\textit{Proof sketch.}
Both losses use the same positive pairs and differ only in the views
fed to the encoders (full images and captions vs structure-centric
counterparts). The corresponding positive logits are maximized in
both objectives, while negatives are pushed apart.
This induces aligned update directions for parameters that affect
shared features. Under mild regularity assumptions on the encoders,
the expected cosine similarity between the two gradients is strictly
positive. In practice, this is confirmed by the empirical measurements
reported in Fig. 4(c).

\medskip

\begin{theorem}[Effect of the structure-centric auxiliary losses]
Consider the joint objective
\[
\mathcal{L}_{\mathrm{total}} = \mathcal{L}_{I,T}
+ \lambda_{1}\mathcal{L}_{I',T'} + \lambda_{2}\mathcal{L}_{I,I'},
\quad \lambda_{1},\lambda_{2} > 0.
\]
Under the assumptions of the lemmas above, the following properties hold:
\begin{enumerate}
\item The auxiliary alignment task between the pair $(\imagefeat',\textfeat')$ is information-reduced
compared to $(\imagefeat,\textfeat)$, hence $\mathcal{L}_{I',T'}$ optimizes a
harder objective in the sense of Lemma 1 and 2.
\item The gradients of $\mathcal{L}_{I,T}$ and $\mathcal{L}_{I',T'}$
are directionally compatible, so the auxiliary loss does not
conflict with the main alignment.
\item Due to the lower mutual information of $(\imagefeat',\textfeat')$,
the gradient norm of $\nabla_{\theta}\mathcal{L}_{I',T'}$
remains non-negligible even when $\nabla_{\theta}\mathcal{L}_{I,T}$
starts to vanish, which provides persistent optimization signal.
\item The consistency term $\mathcal{L}_{I,I'}$ bounds the drift
between $\imagefeat$ and $\imagefeat'$, so the structure-centric space stays
anchored to the semantic manifold of $\imagefeat$ and fine-tuning remains stable.
\end{enumerate}
\end{theorem}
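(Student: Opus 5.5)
The plan is to prove the four items separately, each by reduction to one of the three lemmas or to an elementary bound on the loss.

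\textbf{Item 1.} This follows directly by chaining Lemma 1 and Lemma 2. Lemma 1 gives $I_{\mathrm{MI}}(I',T') < I_{\mathrm{MI}}(I,T)$. Under the approximation adopted after Lemma 2, this transfers to the embeddings: $I_{\mathrm{MI}}(\imagefeat',\textfeat') < I_{\mathrm{MI}}(\imagefeat,\textfeat)$.

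Since any InfoNCE estimate satisfies $\log N - \mathbb{E}[\mathcal{L}] \le I_{\mathrm{MI}}$, the best attainable value of $\mathbb{E}[\mathcal{L}_{I',T'}]$ is bounded below by $\log N - I_{\mathrm{MI}}(\imagefeat',\textfeat')$. The analogous floor for $\mathcal{L}_{I,T}$ is $\log N - I_{\mathrm{MI}}(\imagefeat,\textfeat)$, which is strictly smaller. This is the precise sense in which the auxiliary objective is "harder": its irreducible floor is higher.

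\textbf{Item 2.} This is a restatement of Lemma 3. Expand
\[
\nabla_\theta \mathcal{L}_{\mathrm{total}} = \nabla_\theta\mathcal{L}_{I,T} + \lambda_1\nabla_\theta\mathcal{L}_{I',T'} + \lambda_2\nabla_\theta\mathcal{L}_{I,I'} .
\]
Positive expected cosine means the inner product $\langle \nabla_\theta\mathcal{L}_{I,T},\nabla_\theta\mathcal{L}_{I',T'}\rangle$ is positive in expectation. Hence a small step along $-\nabla_\theta\mathcal{L}_{\mathrm{total}}$ decreases $\mathcal{L}_{I,T}$ to first order, at least when the $\lambda_2$ term is small. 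I will state this as the "no conflict" conclusion.

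\textbf{Item 3.} I expect this item to be the main obstacle, because a lower mutual-information floor does not by itself force a nonzero gradient. The approach is to use the softmax structure of InfoNCE. The gradient with respect to a positive logit is $-(1-p_{ii})$, where $p_{ii}$ is the softmax weight on the positive. A loss bounded away from zero forces $p_{ii}$ bounded away from $1$ for a nonvanishing fraction of samples, and this yields a lower bound on the logit-gradient norm.

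Concretely, at a point where $\nabla_\theta\mathcal{L}_{I,T}\approx 0$, I would argue that $\mathcal{L}_{I,T}$ is near its floor. By Item 1, $\mathcal{L}_{I',T'}$ is then still above its own optimum, so its logit gradients are nonzero. To pass from logit gradients to parameter gradients, I would add a nondegeneracy assumption: the Jacobian of the embeddings with respect to $\theta$ has singular values bounded below on the relevant directions. This falls under the "mild regularity assumptions" already invoked in Lemma 3.

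I expect this step to give only a conditional statement, relying on that assumption together with the gap $I_{\mathrm{MI}}(I,T)-I_{\mathrm{MI}}(I',T')>0$. It will then be supported by the gradient-norm measurements of Fig.~\ref{fig:theory.exp}~b).

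\textbf{Item 4.} This is elementary. For unit-normalized embeddings,
\[
\|\hat{\imagefeat}_i-\hat{\imagefeat}'_i\|_2^2 = 2\bigl(1-\langle \imagefeat_i,\imagefeat'_i\rangle\bigr).
\]
Averaging over the batch gives $\frac1N\sum_i\|\hat{\imagefeat}_i-\hat{\imagefeat}'_i\|^2 = 2\,\mathcal{L}_{I,I'}$.

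Since $\mathcal{L}_{\mathrm{total}}$ is nonincreasing along a descent trajectory and every term is nonnegative, we get $\lambda_2\mathcal{L}_{I,I'}\le \mathcal{L}_{\mathrm{total}}(\theta_0)$. This bounds the mean squared drift by $2\mathcal{L}_{\mathrm{total}}(\theta_0)/\lambda_2$. Therefore $\imagefeat'$ stays within a controlled neighborhood of $\imagefeat$ on the sphere, which is the anchoring claim.
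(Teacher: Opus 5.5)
Your Items 1, 2 and 4 broadly follow the paper's sketch, which also just chains Lemmas 1--2, invokes Lemma 3, and appeals to the angular penalty in $\mathcal{L}_{I,I'}$; you make each of them more explicit. The genuine gap is Item 3.

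The chain ``$\nabla_\theta\mathcal{L}_{I,T}\approx 0$, so $\mathcal{L}_{I,T}$ is near its floor, so by Item 1 $\mathcal{L}_{I',T'}$ is still above its own optimum'' is not valid, for two reasons:
\begin{itemize}
\item Stationarity does not place a loss near $\log N-I_{\mathrm{MI}}$, since that bound is generally loose.
\item Item 1 only orders the two \emph{lower bounds}. It says nothing about where the stationary points of $\mathcal{L}_{I',T'}$ lie relative to those of $\mathcal{L}_{I,T}$. A strictly higher floor is compatible with a vanishing gradient at the same point: $\theta^2$ and $\theta^2+c$ with $c>0$ have ordered infima and identical stationary points.
\end{itemize}

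The softmax step does not repair this. The logit-space gradient (softmax matrix minus identity, up to scaling) is nonzero at \emph{every} $\theta$, because $p_{ii}<1$ always. That includes an exact minimizer of $\mathcal{L}_{I',T'}$, where this gradient is orthogonal to the range of the Jacobian of the logits with respect to $\theta$. A lower bound on the singular values of the embedding Jacobian does not help either, because the gradient with respect to the embeddings can vanish while the logit gradient does not.

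What you actually need is that the logit gradient has a nonzero component in that range, which is non-stationarity itself. Imposed uniformly, such an assumption would also forbid $\nabla_\theta\mathcal{L}_{I,T}\approx 0$, whose logit gradient is likewise never zero, and so contradict your premise. Imposed only on $\mathcal{L}_{I',T'}$, it is the conclusion restated. Either way, the MI gap does no work.

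Item 3 is really a claim about relative convergence speed, and the paper does not derive it either. It offers a heuristic (larger relative variance of the score function for the information-reduced pair) and defers to Fig.~\ref{fig:theory.exp}(a,b). To make it a conditional statement you need an explicit dynamical hypothesis. One option is a Polyak--\L{}ojasiewicz inequality $\|\nabla_\theta\mathcal{L}_{I',T'}\|^2\ge\mu\,(\mathcal{L}_{I',T'}-\inf_\theta\mathcal{L}_{I',T'})$, together with the assumption that $\mathcal{L}_{I',T'}$ is still a fixed gap above its \emph{own} infimum when $\mathcal{L}_{I,T}$ flattens.

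There are also smaller points:
\begin{itemize}
\item In Item 1, the floors carry content only while $I_{\mathrm{MI}}<\log N$, since InfoNCE saturates at $\log N$.
\item In Item 2, $\mathbb{E}[\cos(g,g')]>0$ does not imply $\mathbb{E}\langle g,g'\rangle>0$, because the norms may be large exactly when the cosine is negative. Also, first-order descent of $\mathcal{L}_{I,T}$ needs the sign of $\langle g,g'\rangle$ at the current iterate, not in expectation.
\item In Item 4, monotonicity of $\mathcal{L}_{\mathrm{total}}$ holds for gradient flow or small-step full-batch descent, not for minibatch AdamW.
\item Your drift bound $2\mathcal{L}_{\mathrm{total}}(\theta_0)/\lambda_2$ beats the trivial bound $4$ for unit vectors only if $\mathcal{L}_{\mathrm{total}}(\theta_0)<2\lambda_2$. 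For the paper's $\lambda_2=0.1$ that means below $0.2$.
\end{itemize}
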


\noindent\textit{Proof sketch.}

\textbf{Item 1} follows directly from Lemma 1 and the InfoNCE bounds in Lemma 2,
which place the pair $(\imagefeat',\textfeat')$ at a lower mutual information level than $(\imagefeat,\textfeat)$.
More in the detail, the auxiliary alignment task on $(\imagefeat',\textfeat')$ is strictly information-reduced compared to
$(\imagefeat,\textfeat)$, and therefore $\mathcal{L}_{I',T'}$ optimizes a harder objective.
More precisely, since $I' = \mathcal{E}(I)$ and $T' = \mathcal{F}(T)$ are obtained by applying
non-invertible, deterministic maps that remove appearance-related variability,
the entropy $H(\cdot)$ of both variables is reduced:
\[
H(I') < H(I), \qquad H(T') < H(T).
\]
By the Data Processing Inequality, this implies
\[
I_{\mathrm{MI}}(I',T') \le I_{\mathrm{MI}}(I,T).
\]
In addition, the structure-centric views induce a contraction of the positive
pair distribution: the space of valid matches becomes smaller, the intra-class
variability is suppressed, and the negative samples become less separable in
the embedding space. These effects lower the effective signal-to-noise ratio
of the InfoNCE objective, which increases the difficulty of the optimization
landscape associated with $\mathcal{L}_{I',T'}$.
Consequently, the gradients generated by $\mathcal{L}_{I',T'}$ tend to persist
longer during fine-tuning, since reaching its minimum requires modeling more
subtle, geometry-driven correspondences that remain unresolved after the
full-information objective $\mathcal{L}_{I,T}$ has already saturated.

\textbf{Item 2} follows from Lemma~3 and concerns the directional
compatibility of the gradients.  
Since both $\mathcal{L}_{I,T}$ and $\mathcal{L}_{I',T'}$ operate on the same
positive image–text instances and update the same encoder parameters,
their contrastive objectives induce aligned update rules:  
both maximize the positive logits and suppress the negative logits
associated with the same underlying pairs, even though the views differ.
Let the gradients be:  
\[
g = \nabla_{\theta}\mathcal{L}_{I,T}, \qquad
g' = \nabla_{\theta}\mathcal{L}_{I',T'}.
\]
Lemma~3 ensures that their expected cosine similarity satisfies  
\[
\mathbb{E}\big[\cos(g,g')\big] > 0.
\]
Therefore, the auxiliary gradient does not conflict with the semantic
direction promoted by the main loss.  
Instead, it expands the set of admissible descent directions within a
compatibility cone, enriching the optimization trajectory without
introducing destructive interference.

\textbf{Item 3} relies on the fact that, owing to the information-reduced nature of $(I',T')$,
the loss $\mathcal{L}_{I',T'}$ converges more slowly than
$\mathcal{L}_{I,T}$ and produces non-vanishing gradients
even when the main loss has already flattened.
Indeed, $\mathcal{L}_{I,T}$ optimizes an InfoNCE objective
associated with the higher mutual information quantity
$I_{\mathrm{MI}}(I,T)$, whose landscape typically admits a
faster descent: the positive pair $(\imagefeat,\textfeat)$ carries rich
appearance and structural cues, which help discriminate
positives from negatives early in fine-tuning.
In contrast, $\mathcal{L}_{I',T'}$ maximizes the lower
$I_{\mathrm{MI}}(I',T')$, where both $\imagefeat'$ and $\textfeat'$ have reduced
entropy because appearance information has been removed by
$\mathcal{E}(\cdot)$ and $\mathcal{F}(\cdot)$.  This contraction of the feature
space has two effects:
\begin{itemize}
\item the separation between positives and negatives becomes
smaller, making the contrastive objective harder to optimize;

\item the gradients associated with hard positives and hard
negatives decay more slowly, since the model must rely purely
on geometric and structural cues to improve the logits.
\end{itemize}
Formally, since the InfoNCE gradients satisfy
\[
\nabla_{\theta}\mathcal{L}_{I,T}
= \mathbb{E}\big[g(\imagefeat,\textfeat)\big],\qquad
\nabla_{\theta}\mathcal{L}_{I',T'}
= \mathbb{E}\big[g(\imagefeat',\textfeat')\big],
\]
and the score function $g(\cdot)$ for $(\imagefeat',\textfeat')$ has larger
relative variance due to the reduced mutual information,
the expected magnitude of $\nabla_{\theta}\mathcal{L}_{I',T'}$
remains positive for a longer portion of fine-tuning.
Empirically, this manifests in a later convergence time for
$\mathcal{L}_{I',T'}$ compared to $\mathcal{L}_{I,T}$, and in a
sustained gradient norm that continues to provide meaningful
updates even after the main contrastive gradients approach
zero. This behavior is consistent with the dynamics shown
in Fig.~4(a,b) of the main paper, where $\mathcal{L}_{I',T'}$ displays a slower flattening and a more persistent gradient profile.

Finally, \textbf{Item 4} exploits the form of $\mathcal{L}_{I,I'}$, which penalizes large
angular deviations between $\imagefeat$ and $\imagefeat'$ and therefore constrains
the structure-centric representations to remain close to their
full-information counterparts.
Specifically, since $\mathcal{L}_{I,T}$ and $\mathcal{L}_{I',T'}$ correspond to two
correlated but not identical contrastive objectives, their
gradients span a larger subspace than either loss alone.  
Formally, let
\[
g = \nabla_{\theta}\mathcal{L}_{I,T}, \qquad
g' = \nabla_{\theta}\mathcal{L}_{I',T'}.
\]
From Item 2 we know that $\cos(g,g')>0$ in expectation,
which guarantees compatibility.  However, because
$(\imagefeat',\textfeat')$ belongs to an information-reduced space and
responds differently to hard negatives and subtle geometric
patterns, one has
\[
g' \notin \mathrm{span}\{g\},
\]
so the matrix $[g,\;g']$ has rank $2$ almost everywhere.
This implies that the combined update
\[
g_{\mathrm{tot}}
= g + \lambda_{1}g' + \lambda_{2}\nabla_{\theta}\mathcal{L}_{I,I'},
\]
explores descent directions that the main loss alone cannot
access. This additional set of directions reduces the risk of
premature convergence to shallow minima (a common issue
in contrastive objectives), improves escape from flat regions
of the loss surface, and increases the robustness of SGD
trajectories.  In the language of multi-objective optimization,
$\mathcal{L}_{I',T'}$ introduces a complementary gradient
component that expands the effective feasible region of
updates while maintaining alignment with the main semantic
objective. The consistency loss $\mathcal{L}_{I,I'}$ ensures that this
expanded search space remains stable by preventing $\imagefeat'$ from
drifting too far from $\imagefeat$.  As a result, the combined gradient
retains diversity without diverging from the semantic
manifold defined by the original embeddings, yielding a
balanced and stable optimization process.

Taken together, these properties explain why the auxiliary
structure-centric alignment acts as a beneficial regularizer:
\textbf{\textit{it introduces correlated but information-reduced gradients that
improve convergence stability and lead to more robust alignment.}}

\begin{figure*}
    \centering
    \includegraphics[width=0.8\linewidth]{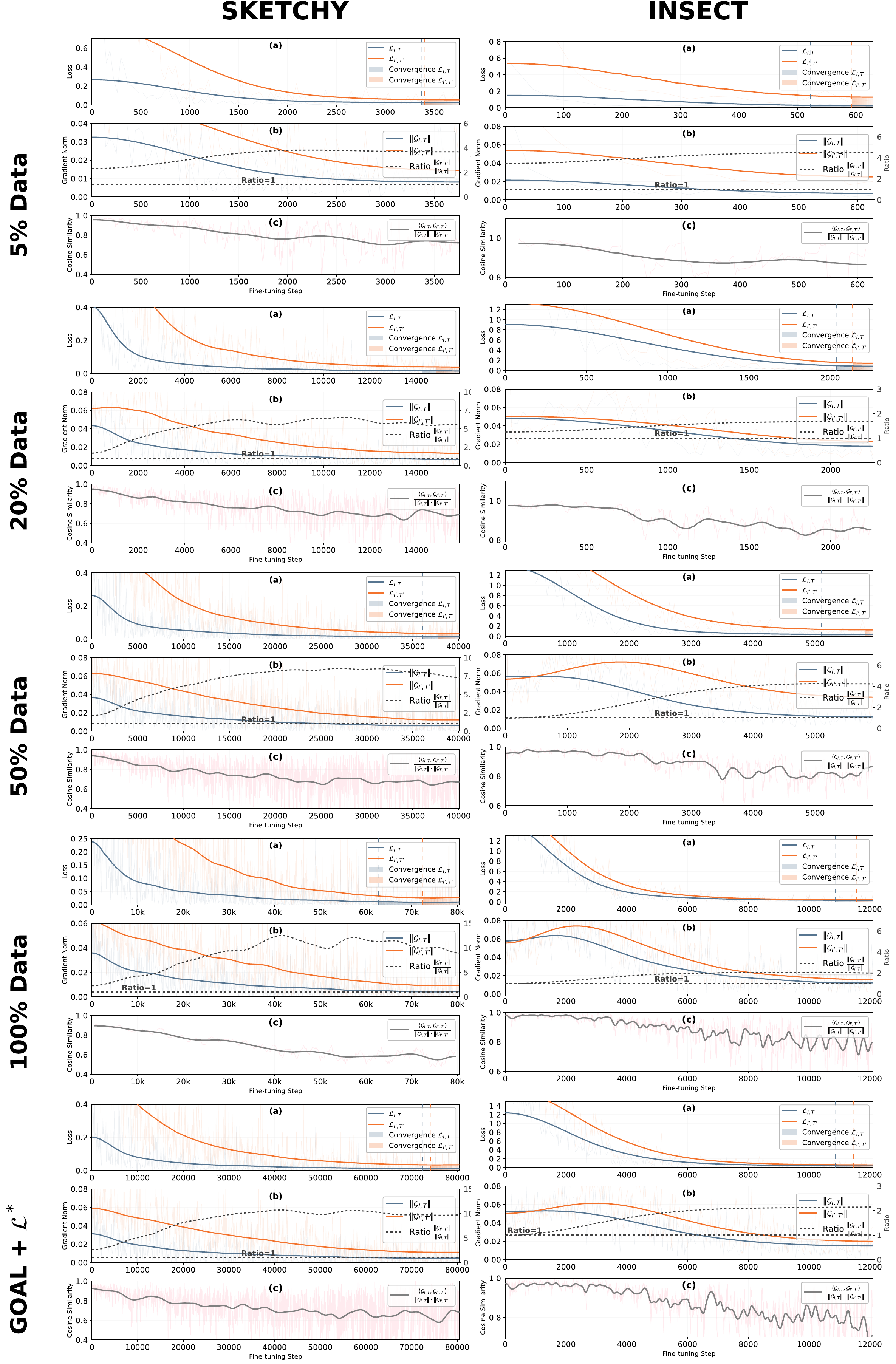}
    \caption{The convergence behavior of the two contrastive losses $\originalloss$ and $\structloss$ (a), the evolution of their gradient norms and their ratio (b), and the cosine similarity between the corresponding gradients (c), evaluated on all two datasets under different percentages of dataset size.}
    \label{fig:theory.exp1}
\end{figure*}

\begin{figure*}
    \centering
    \includegraphics[width=0.8\linewidth]{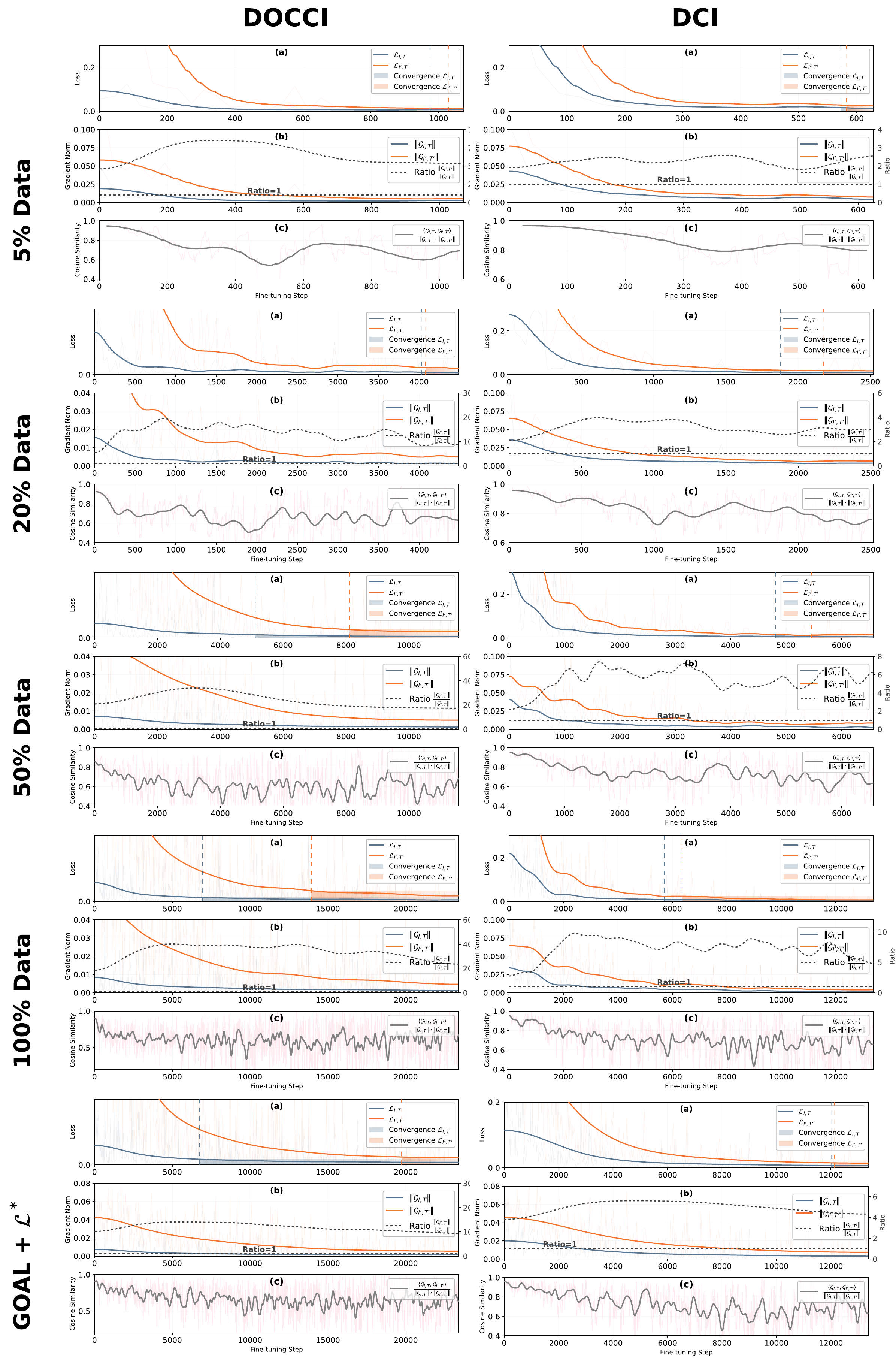}
    \caption{The convergence behavior of the two contrastive losses $\originalloss$ and $\structloss$ (a), the evolution of their gradient norms and their ratio (b), and the cosine similarity between the corresponding gradients (c), evaluated on all two datasets under different percentages of dataset size.}
    \label{fig:theory.exp2}
\end{figure*}

\subsection{Empirical Analyses}
\label{supp:theory_analysis:empirical}
In Sec.~\ref{sec:theory} of the main paper, we reported one representative
study of our information-theoretic analysis using the SKETCHY dataset.
Here we provide the remaining experiments conducted
on the other three datasets used in our evaluation: INSECT, DOCCI and DCI. 
To verify that the information-theoretic analysis holds consistently across different data scales, we conduct the same experiments on reduced-scale versions of each dataset containing 5\%, 20\%, and 50\% of the original data.
We further include experiments performed under the finetuning-agnostic improvement setting combined with the second-best method GOAL \cite{goal2025}, to confirm that the same behaviors arise consistently across different fine-tuning configurations.
For each experiment, we show the same three quantities presented in Fig.~\ref{fig:theory.exp1} and Fig.~\ref{fig:theory.exp2}: 
(a) the convergence behavior of the two contrastive losses
$\originalloss$ and $\structloss$, (b) the evolution of their gradient norms and their ratio, and (c) the cosine similarity between the corresponding gradients.

Across all datasets and settings, we observe the same consistent patterns as reported in
Sec.~\ref{sec:theory}. The auxiliary objective $\structloss$ which maximizes the mutual information $\mutualinfo(\imagestruct,\textstruct)$ under the
information-reduction mappings $\structextractor(\cdot)$ and $\mathcal{F}(\cdot)$, exhibits a later convergence than the main objective $\originalloss$. This is consistent with the
Data Processing Inequality~\cite{tishby99information}, which implies
$\mutualinfo(\imagestruct,\textstruct)\le \mutualinfo(\image,\desc)$, and makes the auxiliary alignment problem inherently more difficult.
The gradient of $\structloss$ remains informative even after the gradient of
$\originalloss$ flattens, providing persistent optimization signals that steer the model toward semantically coherent and structurally consistent minima.
This behavior is further reflected in the positive cosine similarity between
$\nabla_\theta \originalloss$ and $\nabla_\theta \structloss$ throughout the optimization trajectory, confirming that the two tasks pursue compatible optima in the parameter space.

The consistent observation across all datasets and settings provides a strong empirical support to our theoretical interpretation in Sec.~\ref{sec:theory}: information-reduced auxiliary objectives act as implicit regularizers that introduce controlled gradient diversity, as suggested by analyses in multitask learning, multi-objective optimization, and contrastive representation learning~\cite{MTLasMOO.NIPS18,gradientSGDvariantNIPS20,wu2020mutualinformationcontrastivelearning}. 
Such diversity expands the effective search space and
stabilizes convergence, while preserving optimization compatibility through aligned gradient directions. The behavior of \ourmethod is agnostic to the dataset, and the findings drawn from the representative Sketchy
experiment generalizes to other evaluated datasets.

%%%%%%%%%%%%%%%%%%%%%%%%%%%%%%%%%%%%%%%%%%%%%%%%%%%%%%%%%%%%
% 2. Extended Method Details

\section{Additional implementation details}
\label{supp:extended_method}

%-----------------------------
\subsection{Text Token Extension}
\label{supp:extended_method:length}

To enable base models like CLIP and SigLIP-2 to handle text sequences longer than their original 77 tokens limit, we fully follow the method proposed in Long-CLIP~\cite{zhang2024longclip}. Specifically, we implement a positional encoding extension that expands the maximum sequence length of all models to 248 tokens. This process is completed before any fine-tuning and follows a preserve-interpolate-extrapolate strategy.
First, we keep the first 20 original position vectors unchanged. Then, we expand the middle part by pairwise linear interpolation. Finally, we use linear extrapolation based on the last two original encodings to fill the remaining positions.

It is worth noting that Long-CLIP is originally designed and optimized for the CLIP architecture. We applied the same text token extension mechanism to all models in this study, including both the CLIP and SigLIP-2 baselines and their variants using our proposed losses. Nonetheless, we do acknowledge that directly applying this extension to SigLIP-2 might not be the optimal way, as SigLIP-2 is trained differently from CLIP. Yet, as there exists no prior work with established techniques for extending SigLIP-2 to long-text scenarios, we opt to this option to ensure experimental consistency and fair comparison conditions.

% \subsection{Adaptive Sliding Window for Text Chunking}
% \label{subsec:adaptive-window}

\subsection{Fine-tuning Details}
\label{supp:finetuningdetails}
We fine-tune the model for $10$ epochs using the AdamW optimizer with a batch size of 16. The initial learning rate is set to $5 \times 10^{-6}$, and we employ a Cosine Annealing scheduler that decays the learning rate to $0$ over the course of fine-tuning, without restarts. The weight decay is set to $0.05$.
For the contrastive objectives, the global temperature parameter $\tau$ is learnable (initialized from the pre-trained CLIP and clamped to a maximum logit scale of 3.5), while the local structure alignment temperature $\gamma$ is fixed at $0.07$ to encourage sharp local correspondences.

\subsection{Computational Analysis}
\label{supp:computationalanalysis}
To evaluate the additional computational overhead introduced by our method, we measure two key preprocessing steps: the local-region segmentation used in $\structlosslocal$ and the edge-map extraction. In practice, we use FastSAM \cite{fastsam} for local segmentation. Due to the large variation in image resolution across datasets, the average per-image inference time also varies: 3.6 ms/image on the lower-resolution Sketchy dataset, 17.1 ms/image on INSECT, and 17.3 ms / 15.9 ms on the higher-resolution DOCCI and DCI datasets, respectively.
For edge extraction, we use the Canny detector, which is lightweight, with timings of 3.77 ms/image on Sketchy, 4.59 ms/image on INSECT, and 19.05 ms/16.18 ms on DOCCI and DCI. Higher image resolutions lead to more processing time. Importantly, both pre-processing steps are executed \textbf{only once} before fine-tuning. 

During fine-tuning, the overall per-batch runtime of our method is approximately 0.17 s, which is comparable to existing CLIP fine-tuning approaches such as Long-CLIP (0.10 s), GOAL (0.18 s), SmartCLIP (0.07 s), and FineLIP (0.05 s).
The inference time is the same as as standard CLIP-based methods.
Overall, \ourmethod introduces very lightweight (and affordable) additional computation, and maintains the same inference efficiency as standard CLIP-based methods.
\begin{table}[t]
\centering
\caption{
\textbf{Comparison of general vs. domain-specific appearance lexicons.}
\inlineColorbox{lightpink}{\emph{Text$\rightarrow$Image}} and 
\inlineColorbox{lightblue}{\emph{Image$\rightarrow$Text}} retrieval on SKETCHY and INSECT.
\textbf{Bold} denotes the best performance.
}

\label{tab:domain-specific-lexion}
\small
\vspace{-4pt}
\setlength{\tabcolsep}{4pt}
\renewcommand{\arraystretch}{1.08}
\resizebox{\columnwidth}{!}{
\begin{tabular}{l ccccc ccccc}
\toprule
\textbf{Setting} &
\cellcolor{lightpink}\textbf{R@1} &
\cellcolor{lightpink}\textbf{R@5} &
\cellcolor{lightpink}\textbf{R@10} &
\cellcolor{lightpink}\textbf{R@25} &
\cellcolor{lightpink}\textbf{R@50} &
\cellcolor{lightblue}\textbf{R@1} &
\cellcolor{lightblue}\textbf{R@5} &
\cellcolor{lightblue}\textbf{R@10} &
\cellcolor{lightblue}\textbf{R@20} &
\cellcolor{lightblue}\textbf{R@50} \\
\midrule
\multicolumn{11}{c}{\cellcolor{gray!15}\textbf{SKETCHY}} \\
\midrule
General Lexicon Filter
& \textbf{69.86} & 90.85 & 95.42 & \textbf{98.36} & \textbf{99.22}
& \textbf{68.22} & \textbf{90.67} & \textbf{95.68} & \textbf{97.75} & \textbf{99.22} \\
Domain-Specific Lexicon Filter 
& 69.00 & \textbf{90.87} & \textbf{95.60} & 98.33 & \textbf{99.22}
& 68.14 & 90.53 & 95.21 & 97.40 & 99.10 \\
\midrule
\multicolumn{11}{c}{\cellcolor{gray!15}\textbf{INSECT}} \\
\midrule
General Lexicon Filter
& \textbf{9.93} & 26.60 & 38.34 & \textbf{56.99} & \textbf{69.34}
& \textbf{9.50} & 26.60 & 39.64 & \textbf{54.92} & \textbf{68.65} \\
Domain-Specific Lexicon Filter 
& 9.67 & \textbf{28.50} & \textbf{39.46} & 55.09 & 67.70
& 9.07 & \textbf{28.07} & \textbf{39.81} & 54.32 & 68.48 \\
\bottomrule
\end{tabular}
}
\vspace{-6pt}
\end{table}
\section{Additional Experimental Analyses}
\label{supp:more_exp}
\subsection{Ablation on Appearance Lexicons}
\label{supp:more_domainspecificlexicon}
We also conducted the ablation investigating \textit{whether constructing dataset-specific appearance vocabularies would lead to better performance, compared to a general lexicon.} To this end, we built two domain-specific appearance lexicons for the fashion dataset SKETCHY and the biological dataset INSECT. For each dataset, examples shown in Fig.~\ref{fig:domain-specific}, instructing the LLM to generate appearance terms particularly relevant to that domain. We then substituted the general $\vocappearance$ with these domain-specific vocabularies and re-trained our method under exactly the same fine-tuning configuration.

From Tab.~\ref{tab:domain-specific-lexion}, we observe that across both the fashion domain (SKETCHY) and the biological domain (INSECT), the domain-specific lexicons yields on-par performance, compared to the general appearance lexicon on most retrieval metrics. This is because the general lexicon already has a \textit{broad coverage} over appearance-related attributes. Instead, the domain-specific lexicon, despite being more specialized, adds only a small number of additional terms into the vocabulary, offering negligible marginal benefit in practice. 
For this reason, in all our experiments, the appearance vocabulary $\vocappearance$ used in the lexicon filter is obtained from the general prompt described in Sec.~\ref{supp:lexion}.1, and the same $\vocappearance$ is applied across all datasets. 
%In summary, the general lexicon demonstrates robust and reliable performance across domains, making it a preferable choice for all our experiments.

\subsection{Additional Cross-domain Evaluation}\label{supp:more_exp:cross_domain}
Complementing to the cross-domain evaluation in the main paper, \Cref{tab:general_specific_generalization} reports the cross-domain evaluation from General (DOCCI)
to Specific (SKETCHY). When models are fine-tuned on the general-domain dataset DOCCI and tested on the fashion-focused Sketchy dataset, all methods exhibit a clear performance drop due to the large domain gap. DOCCI contains diverse real-world scenes with dense captions, whereas Sketchy mainly features fashion items. Despite this strong domain shift, \ourmethod consistently achieves the best cross-domain performance across all Recall@K metrics.
\input{tables/supp_crossdomain}

\subsection{Results at Deeper Ranks}\label{supp:more_exp:deeprank}
\input{tables/supp_crossmodal}
\input{tables/supp_plugandplay}
\input{tables/supp_dataefficiency}

We report the full experimental analyses with Recall@K also at deeper ranks (K = 25, 50) on all four datasets. 

\Cref{tab:deep_recall_all} shows the cross-modal retrieval at full ranks up to K=50. Our method consistently maintains positive margins over the strongest competitors, even at deeper ranks.

\Cref{tab:finetuning_agnostic_supp} reports the Recall@K at full ranks for the plug-and-play effectiveness of $\ourloss$ onto different finetuning methods. At top ranks (R@1/R@5/R@10), $\ourloss$ yields consistent and often substantial improvements, with particularly large gains for lightweight or parameter-constrained methods such as FineLIP, LoRA, and DoRA (improvements ranging from 5\% to 18\%). On deeper ranks (R@25 and R@50), $\ourloss$ continues to provide stable and uniform benefits across nearly all model–dataset combinations. For DOCCI and DCI, the baselines already achieve extremely high R@50 ($>$97\%), yet $\ourloss$ still contributes an additional 0.1–1.4\% improvement. 

Finally, \cref{tab:data_efficiency_5} reports the data efficiency analysis at full ranks. We observe that \ourmethod consistently shows strong data efficiency across all fine-tuning dataset sizes (5\%, 20\%, 50\%). In the most challenging 5\% low-data regime, \ourmethod achieves the best overall results. When the data size increases to 20\%, all methods improve, yet \ourmethod remains the best performer with noticeable gains on most metrics. At 50\% data, although the baselines begin to saturate, \ourmethod still delivers the best results, providing 1–5\% improvements on R@1 and R@5, and enhancing deeper-rank performance (R@25 / R@50) on DOCCI and DCI. Overall, \ourmethod demonstrates strong generalization and high data efficiency across all four datasets and all data-scale settings.

\subsection{More Qualitative Results}\label{supp:more_exp:qualitatives}

\Cref{fig:qualitative_result} presents additional qualitative results of \ourmethod and the second-best method GOAL on all four datasets. Specifically, we highlight with color some parts of the long texts that are describing some visual objects, and showcase the attention maps between such object-centric texts on the image. It is clear that \ourmethod, compared to the second-best method GOAL, demonstrates a better correspondence between the visual objects and their rich textual descriptions. For instance, in the DOCCI example, given the textual description regarding ``the daisy'', \ourmethod produces more visible attention map on the daisy petals compared to GOAL, and it also yields more localized attention map regarding the ``tree line''. Similar patterns can be observed in the DCI samples too, where \ourmethod is more capable in capturing accurate and localized attention map close to the visual object being described. On the specific-domain Sketchy dataset, while \ourmethod exhibits a better coverage on the described visual part, overall both \ourmethod and GOAL are able to capture the correct object. We hypothesize that this might be due to the fact Sketchy dataset contains mostly object-centric, where the model wearing outfits are dominant and mostly centered, which can be a bias that models can easily capture. On the other hand, the INSECT dataset is less represented by pre-trained VLMs, thus their attention map are generally less accurate and localized. Yet, \ourmethod still demonstrates a better alignment between the rich text and the visual counterpart, as evidenced by the more localized attention map on the ``antennae of females'' and the ``thorax'' compared to GOAL.

\begin{figure*}
    \centering
    \includegraphics[width=1.0\linewidth]{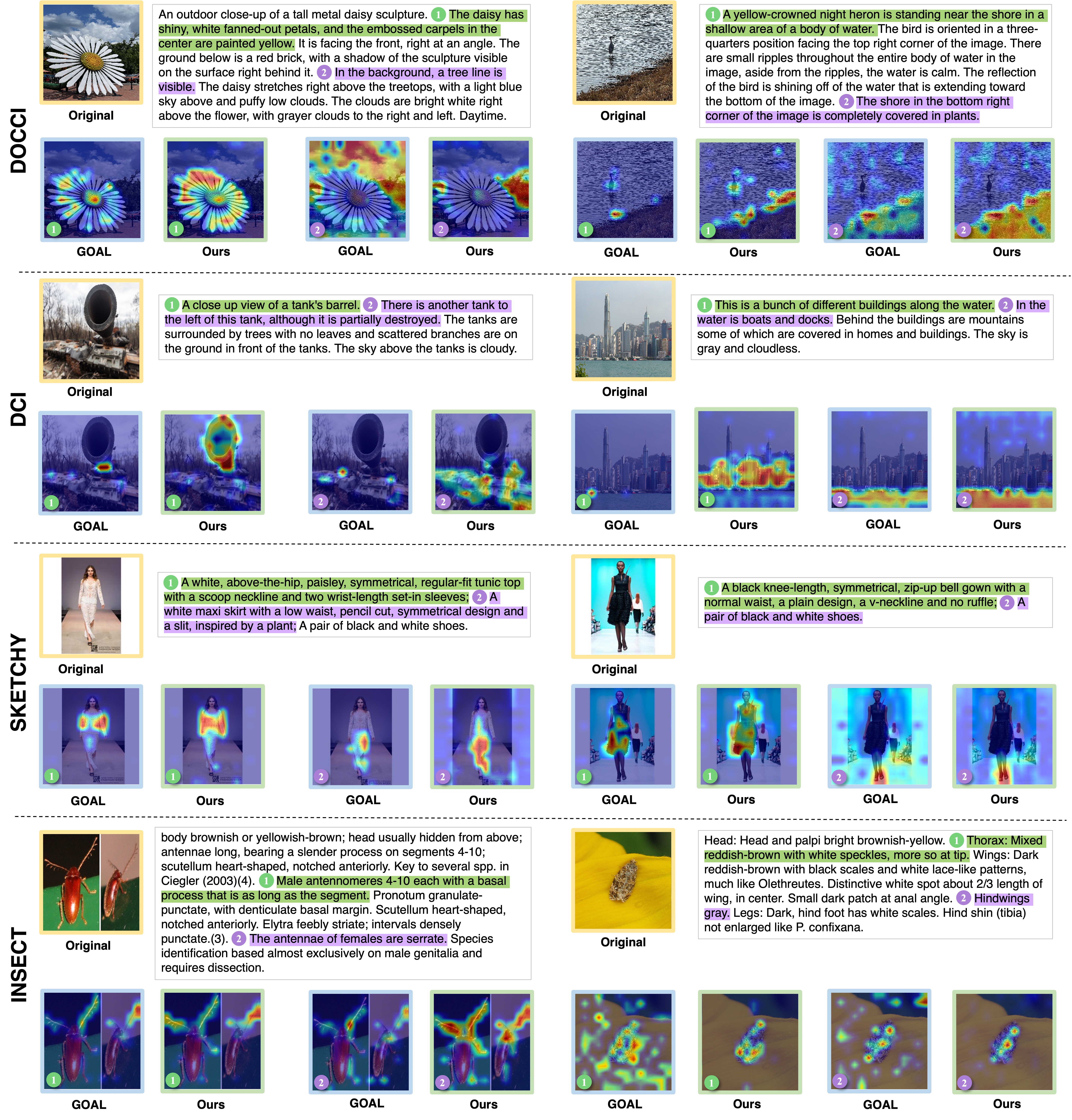}
    %\vspace{-5pt}
    \caption{Qualitative results of \ourmethod and the second-best method GOAL on all four datasets. We present the attention maps between the color-highlighted texts on the visual counterpart. Compared to GOAL, \ourmethod overall shows a better correspondence between the rich textual descriptions and the visual objects.}
    \label{fig:qualitative_result}
\end{figure*}

\begin{figure*}[t]
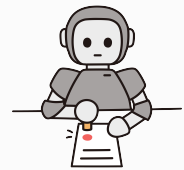

\centering
\begin{tcolorbox}[
    width=0.95\linewidth,
    colback=gray!3,
    colframe=gray,
    boxrule=0.3pt,
    title={(2): Domain-Specific Appearance Lexicon Prompt–Response}
]
\small

% ------------------ USER QUERY ------------------
\begin{minipage}{0.12\linewidth}
\centering
\includegraphics[width=0.8\linewidth]{imgs/supp/user.png}
\end{minipage}
\hfill
\begin{minipage}{0.83\linewidth}
\textbf{USER:}  You are a vision-language expert specializing in \textbf{\textit{fashion}} imagery. Consider a representation of a garment where only edge information is 
preserved (i.e., contours, stitching lines, panel boundaries, silhouettes, 
and geometric layout), with all color, shading, and material cues removed. Please provide a comprehensive list of English words and short phrases 
used in the fashion domain whose meanings cannot be inferred from such an 
edge-only representation. Return the list as comma-separated items.
\end{minipage}

\vspace{0.55em}
\rule{0.9\linewidth}{0.3pt}  % visual separator
\vspace{0.55em}

% ------------------ LLM RESPONSE ------------------
\begin{minipage}{0.80\linewidth}
\textbf{LLM:}  [red, blue, green, yellow, orange, purple, pink, brown, black, white, grey, gray, 
navy, sky blue, baby blue, royal blue, midnight blue, teal, turquoise, aqua, mint, forest green, lime, olive, khaki, sage, emerald, moss, 
maroon, burgundy, wine, cranberry, magenta, fuchsia, rose, blush, salmon, coral, peach, apricot, 
mustard, gold, lemon, amber, 
beige, cream, ivory, off-white, tan, camel, sand, taupe, stone, ecru, 
charcoal, slate, ash, 
lavender, violet, lilac, plum, eggplant, 
silver, bronze, copper, champagne, metallic, 
pastel, neon, bright, dark, light, neutral, 
multicolor, multi-color, multicolored, multi-colored, 
cotton, linen, hemp, ramie, jute, wool, merino, cashmere, mohair, alpaca, angora, silk, mulberry silk, tussah silk, 
denim, chambray, twill, gabardine, canvas, jersey, interlock, french terry, flannel, fleece, felt, 
velvet, velour, corduroy, tweed, boucle, jacquard, brocade, damask, 
viscose, rayon, modal, lyocell, tencel, acetate, cupro, 
polyester, nylon, spandex, elastane, lycra, acrylic, polyamide, microfiber, 
chiffon, georgette, organza, tulle, mesh, crepe, charmeuse, satin, taffeta, batiste, voile, lawn, 
leather, suede, nubuck, shearling, sheepskin, faux leather, vegan leather, faux suede, 
fur, faux fur, mink, fox fur, rabbit fur, 
knit, rib knit, cable knit, fine knit, 
softshell, hardshell, neoprene, Gore-Tex, ripstop, mesh, spacer fabric, 
lamé, bonded fabric, coated fabric, sherpa, denier, pongee, jacquard knit]
\end{minipage}
\hfill
\begin{minipage}{0.18\linewidth}
\centering
\includegraphics[width=0.8\linewidth]{imgs/supp/robot.png}  % robot icon
\end{minipage}
% \vspace{0.95em}
  % visual separator
\vspace{0.55em}
\rule{0.9\linewidth}{0.3pt}
\begin{minipage}{0.12\linewidth}
\centering
\includegraphics[width=0.8\linewidth]{imgs/supp/user.png}  % robot icon % user icon
\end{minipage}
\hfill
\begin{minipage}{0.83\linewidth}
\textbf{USER}: You are an \textit{\textbf{entomology}}-focused vision–language expert. Consider an \textit{\textbf{insect}} image represented only by its edge map, that is, a 
structure-centric depiction containing contours, boundaries, wing venation 
geometry, and overall body segmentation, but with all color, shading, 
pigmentation, and optical surface cues removed. Please produce a comprehensive list of words and short phrases 
used in insect field guides or taxonomic descriptions whose meanings 
cannot be inferred from such an edge-only representation. These should 
strictly correspond to appearance attributes such as color terms, 
pigmentation qualities, optical reflectance (e.g., iridescence, metallic 
sheen), translucency/opacity, and other non-geometric visual properties. Return the list as comma-separated items.
\end{minipage}

\vspace{0.55em}
\rule{0.9\linewidth}{0.3pt}  % visual separator
\vspace{0.55em}

% ------------------ LLM RESPONSE ------------------
\begin{minipage}{0.80\linewidth}
\textbf{LLM:}  [black, brown, dark brown, light brown, tan, beige, cream,
white, off-white, gray, grey, charcoal, slate, ash,
red, reddish, orange, yellow, green, blue, purple,
pink, magenta, violet,
rust, russet, chestnut, mahogany, clay, ochre, umber,
sienna, tawny, fawn,
amber, honey-colored, golden, bronze, coppery,
pale, dusky, washed-out, faded, dim, drab,
rufous, testaceous, fulvous, ferruginous,
castaneous, fuscous, livid, piceous,
violaceous, cyaneous, glaucous,
mottled, blotchy, flecked, freckled, stained, tinged, tinted, smudged,
clouded, diffuse, suffused, irregularly pigmented, unevenly pigmented,
faintly pigmented, deeply pigmented, melanized, depigmented, discolored,
frosted, pruinose, powdery, mealy, chalky, dusty, granular, velvety,
shaded, darkened, lightened, somber, sooty, smoky, smeared,
iridescent, metallic, submetallic, opalescent, pearlescent,
rainbowlike, prismatic, lustrous, holographic,
shiny, glossy, subglossy, dull, matte, satiny, silky, polished,
reflective, non-reflective, sheeny, mirrorlike,
transparent, translucent, semi-translucent, opaque,
hyaline, subhyaline, smoky-hyaline,
warm-toned, cool-toned, earthy, vivid, dull-colored,
bright, pale-colored, dark-colored,
satiny, silken, glassy, resinous, lacquered, oily,
greasy, waxy, glistening, gleaming]
\end{minipage}
\hfill
\begin{minipage}{0.18\linewidth}
\centering
\includegraphics[width=0.8\linewidth]{imgs/supp/robot.png}  % robot icon  % robot icon
\end{minipage}

\end{tcolorbox}

\caption{Prompt and response used in our LLM-based construction of the domain-specific appearance vocabulary lists for lexicon filtering.}
\label{fig:domain-specific}
\end{figure*}

\vspace{-6pt}

%% file: tables/supp_crossdomain.tex
\begin{table}[t!]
\centering
\caption{
\textbf{Cross-domain generalization from General (DOCCI) to Specific (Sketchy).}
Models are trained on the general dense-captioned dataset (DOCCI) and tested on abstract sketches (Sketchy).
Due to the significant domain shift, all models experience a performance drop, but \ourmethod maintains better robustness.
Values are Recall@K (\%). In-domain results (DOCCI$\to$DOCCI) are in \textit{italic} for reference. Best cross-domain results are in \textbf{bold}.
}
\vspace{-5pt}
\label{tab:general_specific_generalization}
\scriptsize
\setlength{\tabcolsep}{3.6pt}
\renewcommand{\arraystretch}{1.1}

\begin{tabular*}{\columnwidth}{@{\extracolsep{\fill}}lccc|ccc@{}}
\toprule
\textbf{Setting} &
\cellcolor{lightpink}\textbf{R@1} &
\cellcolor{lightpink}\textbf{R@5} &
\cellcolor{lightpink}\textbf{R@10} &
\cellcolor{lightblue}\textbf{R@1} &
\cellcolor{lightblue}\textbf{R@5} &
\cellcolor{lightblue}\textbf{R@10} \\
\midrule

\multicolumn{7}{c}{\cellcolor{gray!10}\textbf{Fine-tune on DOCCI (General) → Test on Sketchy (Specific)}} \\
\midrule
% --- Long-CLIP ---
Long-CLIP (In-domain) & \textit{64.49} & \textit{87.67} & \textit{93.43} & \textit{63.08} & \textit{87.45} & \textit{93.14} \\
Long-CLIP (Cross-domain) & 8.12 & 20.03 & 28.24 & 10.71 & 24.27 & 33.68 \\

\lightmidrule
% --- GOAL ---
GOAL (In-domain) & \textit{79.47} & \textit{96.65} & \textit{98.69} & \textit{79.43} & \textit{96.14} & \textit{97.25} \\
GOAL (Cross-domain) & 8.72 & 19.60 & 27.46 & 9.76 & 26.86 & 38.17 \\

\lightmidrule
% --- ours ---
\textbf{\ourmethod} (In-domain) & \textbf{\textit{83.04}} & \textbf{\textit{97.06}} & \textbf{\textit{98.96}} & \textbf{\textit{81.59}} & \textbf{\textit{96.94}} & \textbf{\textit{98.78}} \\
\textbf{\ourmethod} (Cross-domain) & \textbf{8.96} & \textbf{21.85} & \textbf{30.57} & \textbf{12.26} & \textbf{32.21} & \textbf{42.23} \\
\bottomrule
\end{tabular*}
\vspace{-8pt}
\end{table}
%%%%%%%%%%%%%%%%%%%%%%%%%%%%%%%%%%%%%%%%%%%%%%%%%%%%%%%%%%%%

%% file: tables/supp_crossmodal.tex
\begin{table*}[t!]
\centering
\caption{
\textbf{Crossmodal retrieval at full ranks on SKETCHY, INSECT, DOCCI and DCI.}
We report mean Recall@K (\%) $\pm$ standard deviation for K = 1, 5, 10, 25, 50 on both
\inlineColorbox{lightpink}{\emph{Text$\rightarrow$Image}} and
\inlineColorbox{lightblue}{\emph{Image$\rightarrow$Text}}. All results are averaged over three independent runs with random seeds 42, 1337 and 3407. \textbf{Bold} indicates the best result, while \underline{underline} denotes the second-best.
}
\label{tab:deep_recall_all}
\vspace{-5pt}
\footnotesize
\setlength{\tabcolsep}{3pt}

% ===================== SKETCHY =====================
\resizebox{\textwidth}{!}{
\begin{tabular}{@{} l c c c c c | c c c c c @{}}
\toprule
\multicolumn{11}{c}{\cellcolor{panelgray}\textbf{SKETCHY}} \\
\midrule
\multirow{1}{*}{Method} &
\cellcolor{lightpink}\textbf{R@1} &
\cellcolor{lightpink}\textbf{R@5} &
\cellcolor{lightpink}\textbf{R@10} &
\cellcolor{lightpink}\textbf{R@25} &
\cellcolor{lightpink}\textbf{R@50} &
\cellcolor{lightblue}\textbf{R@1} &
\cellcolor{lightblue}\textbf{R@5} &
\cellcolor{lightblue}\textbf{R@10} &
\cellcolor{lightblue}\textbf{R@25} &
\cellcolor{lightblue}\textbf{R@50} \\
\midrule
Long-CLIP{\textcolor{gray}{[ECCV'24]}}
& 54.32$\pm$0.46 & 80.14$\pm$2.27 & 88.43$\pm$1.88 & 95.25$\pm$0.78 & 98.27$\pm$0.75
& 52.76$\pm$1.18 & 80.31$\pm$1.89 & 88.08$\pm$1.50 & 95.16$\pm$1.03 & 97.75$\pm$1.00 \\
FineLIP{\textcolor{gray}{[CVPR'25]}}
& 40.59$\pm$1.91 & 71.16$\pm$1.28 & 81.78$\pm$0.63 & 91.27$\pm$1.72 & 95.94$\pm$0.57
& 40.33$\pm$1.04 & 72.11$\pm$0.05 & 82.38$\pm$0.11 & 91.45$\pm$1.85 & 95.77$\pm$0.69 \\
SmartCLIP{\textcolor{gray}{[CVPR'25]}}
& 50.73$\pm$1.11 & 81.09$\pm$0.93 & \underline{94.56$\pm$1.67} & 96.11$\pm$0.44 & \underline{99.05$\pm$1.28}
& 51.30$\pm$0.52 & 80.83$\pm$1.89 & \underline{94.04$\pm$0.73} & 95.51$\pm$1.21 & 98.96$\pm$0.36 \\
GOAL{\textcolor{gray}{[CVPR'25]}}
& \underline{63.21$\pm$0.47} & \underline{87.13$\pm$1.58} & 93.44$\pm$0.35 & \underline{97.67$\pm$1.09} & \underline{99.05$\pm$0.19}
& \underline{62.44$\pm$1.37} & \underline{87.82$\pm$0.95} & 92.31$\pm$1.44 & \underline{96.98$\pm$0.29} & \underline{99.00$\pm$0.54} \\
\textbf{\ourmethod}
& \textbf{69.86$\pm$0.46} & \textbf{90.85$\pm$0.09} & \textbf{95.42$\pm$0.07} & \textbf{98.61$\pm$0.35} & \textbf{99.22$\pm$0.00}
& \textbf{68.22$\pm$0.45} & \textbf{90.67$\pm$0.12} & \textbf{95.68$\pm$0.13} & \textbf{98.03$\pm$0.39} & \textbf{99.08$\pm$0.20} \\
\bottomrule
\end{tabular}
}
\vspace{0.6em}

% ===================== INSECT =====================
\resizebox{\textwidth}{!}{
\begin{tabular}{@{} l c c c c c | c c c c c @{}}
\toprule
\multicolumn{11}{c}{\cellcolor{panelgray}\textbf{INSECT}} \\
\midrule
\multirow{1}{*}{Method} &
\cellcolor{lightpink}\textbf{R@1} &
\cellcolor{lightpink}\textbf{R@5} &
\cellcolor{lightpink}\textbf{R@10} &
\cellcolor{lightpink}\textbf{R@25} &
\cellcolor{lightpink}\textbf{R@50} &
\cellcolor{lightblue}\textbf{R@1} &
\cellcolor{lightblue}\textbf{R@5} &
\cellcolor{lightblue}\textbf{R@10} &
\cellcolor{lightblue}\textbf{R@25} &
\cellcolor{lightblue}\textbf{R@50} \\
\midrule
Long-CLIP{\textcolor{gray}{[ECCV'24]}}
& 8.20$\pm$0.87 & 23.83$\pm$0.68 & 34.97$\pm$0.25 & 51.21$\pm$0.31 & 63.30$\pm$0.27
& \underline{9.41$\pm$0.84} & 24.78$\pm$0.62 & \underline{37.31$\pm$0.88} & \underline{53.18$\pm$0.47} & 63.39$\pm$0.98 \\
FineLIP{\textcolor{gray}{[CVPR'25]}}
& 8.46$\pm$0.59 & 23.32$\pm$0.81 & 33.59$\pm$0.44 & 51.21$\pm$1.93 & 66.58$\pm$0.69
& 6.86$\pm$0.74 & 23.75$\pm$1.41 & 34.46$\pm$0.53 & 52.42$\pm$1.26 & 65.37$\pm$0.88 \\
SmartCLIP{\textcolor{gray}{[CVPR'25]}}
& 4.84$\pm$0.66 & 16.84$\pm$1.52 & 34.63$\pm$0.37 & 39.03$\pm$1.18 & 57.60$\pm$0.44
& 4.66$\pm$0.91 & 15.46$\pm$0.33 & 34.02$\pm$1.74 & 39.72$\pm$0.85 & 58.38$\pm$1.29 \\
GOAL{\textcolor{gray}{[CVPR'25]}}
& \underline{8.81$\pm$0.07} & \underline{24.35$\pm$0.99} & \underline{35.84$\pm$1.62} & \underline{55.44$\pm$0.72} & \underline{67.46$\pm$0.91}
& 8.55$\pm$0.38 & \underline{25.91$\pm$0.36} & 36.18$\pm$0.66 & 53.02$\pm$1.14 & \underline{66.41$\pm$0.52} \\
\textbf{\ourmethod}
& \textbf{9.93$\pm$0.90} & \textbf{26.60$\pm$1.20} & \textbf{38.34$\pm$1.02} & \textbf{56.99$\pm$0.61} & \textbf{69.34$\pm$0.90}
& \textbf{9.50$\pm$0.80} & \textbf{26.60$\pm$0.29} & \textbf{39.64$\pm$0.64} & \textbf{54.92$\pm$0.46} & \textbf{68.65$\pm$0.67} \\
\bottomrule
\end{tabular}
}
\vspace{0.6em}

% ===================== DOCCI =====================
\resizebox{\textwidth}{!}{
\begin{tabular}{@{} l c c c c c | c c c c c @{}}
\toprule

\multicolumn{11}{c}{\cellcolor{panelgray}\textbf{DOCCI}} \\
\midrule
\multirow{1}{*}{Method} &
\cellcolor{lightpink}\textbf{R@1} &
\cellcolor{lightpink}\textbf{R@5} &
\cellcolor{lightpink}\textbf{R@10} &
\cellcolor{lightpink}\textbf{R@25} &
\cellcolor{lightpink}\textbf{R@50} &
\cellcolor{lightblue}\textbf{R@1} &
\cellcolor{lightblue}\textbf{R@5} &
\cellcolor{lightblue}\textbf{R@10} &
\cellcolor{lightblue}\textbf{R@25} &
\cellcolor{lightblue}\textbf{R@50} \\
\midrule
Long-CLIP{\textcolor{gray}{[ECCV'24]}}
& 64.49$\pm$0.38 & 87.67$\pm$0.43 & 93.43$\pm$0.38 & 97.73$\pm$0.26 & 99.14$\pm$0.11
& 63.08$\pm$0.32 & 87.45$\pm$0.44 & 93.14$\pm$0.30 & 97.45$\pm$0.17 & 99.02$\pm$0.12 \\
FineLIP{\textcolor{gray}{[CVPR'25]}}
& 67.80$\pm$1.28 & 90.22$\pm$0.56 & 94.84$\pm$0.31 & 98.22$\pm$1.49 & 99.45$\pm$0.52
& 66.39$\pm$0.44 & 89.12$\pm$1.22 & 94.47$\pm$0.67 & 97.90$\pm$0.38 & 99.20$\pm$0.93 \\
SmartCLIP{\textcolor{gray}{[CVPR'25]}}
& 74.92$\pm$0.66 & 94.08$\pm$0.29 & 97.31$\pm$1.12 & 99.37$\pm$0.44 & 99.82$\pm$0.18
& 74.91$\pm$0.53 & 94.04$\pm$0.72 & 97.29$\pm$0.36 & 99.32$\pm$0.91 & 99.84$\pm$0.27 \\
GOAL{\textcolor{gray}{[CVPR'25]}}
& \underline{79.47$\pm$0.41} & \underline{96.65$\pm$1.33} & \underline{98.69$\pm$0.02} & \underline{99.69$\pm$0.57} & \underline{99.92$\pm$0.14}
& \underline{79.43$\pm$0.88} & \underline{96.14$\pm$0.38} & \underline{97.25$\pm$0.73} & \underline{99.61$\pm$1.12} & \underline{99.90$\pm$0.19} \\
\textbf{\ourmethod}
& \textbf{83.04$\pm$0.05} & \textbf{97.06$\pm$0.20} & \textbf{98.96$\pm$0.04} & \textbf{99.84$\pm$0.02} & \textbf{99.98$\pm$0.01}
& \textbf{81.59$\pm$0.34} & \textbf{96.94$\pm$0.04} & \textbf{98.78$\pm$0.03} & \textbf{99.76$\pm$0.03} & \textbf{99.92$\pm$0.02} \\
\bottomrule
\end{tabular}
}
\vspace{0.6em}

% ===================== DCI =====================
\resizebox{\textwidth}{!}{
\begin{tabular}{@{} l c c c c c | c c c c c @{}}
\toprule
\multicolumn{11}{c}{\cellcolor{panelgray}\textbf{DCI}} \\
\midrule
\multirow{1}{*}{Method} &
\cellcolor{lightpink}\textbf{R@1} &
\cellcolor{lightpink}\textbf{R@5} &
\cellcolor{lightpink}\textbf{R@10} &
\cellcolor{lightpink}\textbf{R@25} &
\cellcolor{lightpink}\textbf{R@50} &
\cellcolor{lightblue}\textbf{R@1} &
\cellcolor{lightblue}\textbf{R@5} &
\cellcolor{lightblue}\textbf{R@10} &
\cellcolor{lightblue}\textbf{R@25} &
\cellcolor{lightblue}\textbf{R@50} \\
\midrule
Long-CLIP{\textcolor{gray}{[ECCV'24]}}
& 59.23$\pm$0.72 & 80.89$\pm$0.55 & 87.04$\pm$0.48 & 92.60$\pm$0.31 & 95.10$\pm$0.27
& 60.13$\pm$0.69 & 81.44$\pm$0.52 & 87.54$\pm$0.46 & 92.85$\pm$0.33 & 95.60$\pm$0.30 \\
FineLIP{\textcolor{gray}{[CVPR'25]}}
& 66.13$\pm$1.77 & 85.34$\pm$0.44 & 89.79$\pm$1.06 & 94.14$\pm$0.59 & 96.35$\pm$0.52
& 64.58$\pm$0.89 & 84.59$\pm$0.71 & 89.54$\pm$1.68 & 94.00$\pm$0.34 & 96.40$\pm$1.08 \\
SmartCLIP{\textcolor{gray}{[CVPR'25]}}
& 69.88$\pm$1.41 & 86.64$\pm$0.63 & \underline{94.05$\pm$1.18} & 95.00$\pm$0.31 & \underline{97.25$\pm$0.77}
& 70.94$\pm$0.52 & 87.04$\pm$1.29 & 92.77$\pm$0.84 & 95.75$\pm$0.48 & 97.05$\pm$1.03 \\
GOAL{\textcolor{gray}{[CVPR'25]}}
& \underline{72.64$\pm$0.55} & \underline{89.89$\pm$1.22} & 93.70$\pm$0.33 & 95.75$\pm$0.66 & \underline{97.25$\pm$0.41}
& \underline{72.84$\pm$1.11} & \textbf{\underline{90.50$\pm$0.40}} & \underline{93.20$\pm$0.81} & \underline{96.60$\pm$0.57} & \underline{97.60$\pm$0.22} \\
\textbf{\ourmethod}
& \textbf{75.90$\pm$0.50} & \textbf{90.00$\pm$0.40} & \textbf{95.15$\pm$0.39} & \underline{\textbf{95.95$\pm$0.13}} & \textbf{97.85$\pm$0.15}
& \textbf{74.39$\pm$0.16} & \underline{89.90$\pm$0.05} & \textbf{94.30$\pm$0.23} & \textbf{96.75$\pm$0.23} & \textbf{97.75$\pm$0.10} \\
\bottomrule
\end{tabular}
}
\vspace{0.5em}
\end{table*}

%% file: tables/supp_plugandplay.tex
%%%%%PLUG-IN-TABLE
\begin{table*}[t!]
\caption{
\textbf{Plug-and-play enhancement of our $\ourloss$ on CLIP-based finetuning.}
Results on SKETCHY, INSECT, DOCCI, and DCI for \inlineColorbox{lightpink}{\emph{Text$\rightarrow$Image}} and \inlineColorbox{lightblue}{\emph{Image$\rightarrow$Text}} retrieval.
We report R@1, R@5, R@10, R@25, and R@50.
Upper: SKETCHY and INSECT; Lower: DOCCI and DCI.
Best in \textbf{bold}, with gain in \gain{green}.
}
\vspace{-5pt}
\label{tab:finetuning_agnostic_supp}
\centering
\setlength{\tabcolsep}{1.5pt} 

\resizebox{\textwidth}{!}{%
\begin{tabular}{@{} l 
% Group 1 T2I (5 cols)
*{4}{S[table-format=2.2] !{\color{lightlightgray}\vrule width 0.5pt}} S[table-format=2.2] !{\color{lightlightgray}\vrule width 0.5pt}
% Group 1 I2T (5 cols)
*{4}{S[table-format=2.2] !{\color{lightlightgray}\vrule width 0.5pt}} S[table-format=2.2] | 
% Group 2 T2I (5 cols)
*{4}{S[table-format=2.2] !{\color{lightlightgray}\vrule width 0.5pt}} S[table-format=2.2] !{\color{lightlightgray}\vrule width 0.5pt}
% Group 2 I2T (5 cols)
*{4}{S[table-format=2.2] !{\color{lightlightgray}\vrule width 0.5pt}} S[table-format=2.2] 
@{}}

\toprule
% ================= TOP SECTION: SKETCHY & INSECT =================
\multirow{2}{*}{\textbf{Method}} &
\multicolumn{10}{c|}{\cellcolor{gray!15}\textbf{SKETCHY}} &
\multicolumn{10}{c}{\cellcolor{gray!15}\textbf{INSECT}} \\
\cmidrule(lr){2-11}\cmidrule(lr){12-21}
& \multicolumn{1}{c}{\cellcolor{lightpink}{R@1}} & \multicolumn{1}{c}{\cellcolor{lightpink}{R@5}} & \multicolumn{1}{c}{\cellcolor{lightpink}{R@10}} & \multicolumn{1}{c}{\cellcolor{lightpink}{R@25}} & \multicolumn{1}{c}{\cellcolor{lightpink}{R@50}}
& \multicolumn{1}{c}{\cellcolor{lightblue}{R@1}} & \multicolumn{1}{c}{\cellcolor{lightblue}{R@5}} & \multicolumn{1}{c}{\cellcolor{lightblue}{R@10}} & \multicolumn{1}{c}{\cellcolor{lightblue}{R@25}} & \multicolumn{1}{c|}{\cellcolor{lightblue}{R@50}}
& \multicolumn{1}{c}{\cellcolor{lightpink}{R@1}} & \multicolumn{1}{c}{\cellcolor{lightpink}{R@5}} & \multicolumn{1}{c}{\cellcolor{lightpink}{R@10}} & \multicolumn{1}{c}{\cellcolor{lightpink}{R@25}} & \multicolumn{1}{c}{\cellcolor{lightpink}{R@50}}
& \multicolumn{1}{c}{\cellcolor{lightblue}{R@1}} & \multicolumn{1}{c}{\cellcolor{lightblue}{R@5}} & \multicolumn{1}{c}{\cellcolor{lightblue}{R@10}} & \multicolumn{1}{c}{\cellcolor{lightblue}{R@25}} & \multicolumn{1}{c}{\cellcolor{lightblue}{R@50}} \\
\midrule

% --- Long-CLIP ---
Long-CLIP
& 54.32 & 80.14 & 88.43 & 95.25 & 98.27 
& 52.76 & 80.31 & 88.08 & 95.16 & 97.75
& 8.20 & 23.83 & 34.97 & 51.21 & 63.30 
& \textbf{9.41} & 24.78 & 37.31 & 53.18 & 63.39 \\
\textbf{+our $\ourloss$}
& \textbf{59.24} & \textbf{85.32} & \textbf{91.45} & \textbf{96.20} & \textbf{98.53} 
& \textbf{59.59} & \textbf{84.37} & \textbf{91.02} & \textbf{96.29} & \textbf{98.45}
& \textbf{9.24} & \textbf{25.39} & \textbf{36.36} & \textbf{52.33} & \textbf{66.84} 
& 9.38 & \textbf{27.29} & \textbf{38.60} & \textbf{55.35} & \textbf{66.15} \\
\textit{$\Delta$}
& \gain{4.92} & \gain{5.18} & \gain{3.02} & \gain{0.95} & \gain{0.26}
& \gain{6.83} & \gain{4.06} & \gain{2.94} & \gain{1.13} & \gain{0.70}
& \gain{1.04} & \gain{1.56} & \gain{1.39} & \gain{1.12} & \gain{3.54}
& \loss{0.03} & \gain{2.51} & \gain{1.29} & \gain{2.17} & \gain{2.76} \\
\lightmidrule

% --- FineLIP ---
FineLIP
& 40.59 & 71.16 & 81.78 & 91.27 & 95.94 
& 40.33 & 72.11 & 82.38 & 91.45 & 95.77
& 8.46 & 23.32 & 33.59 & 51.21 & 66.58 
& \textbf{6.86} & 23.75 & 34.46 & 52.42 & 65.37 \\
\textbf{+our $\ourloss$}
& \textbf{59.15} & \textbf{85.23} & \textbf{91.28} & \textbf{96.63} & \textbf{99.40}
& \textbf{58.55} & \textbf{84.54} & \textbf{90.07} & \textbf{96.20} & \textbf{99.05}
& \textbf{8.89} & \textbf{23.83} & \textbf{35.75} & \textbf{53.63} & \textbf{67.18}
& 6.56 & \textbf{23.76} & \textbf{36.01} & \textbf{53.45} & \textbf{66.84} \\
\textit{$\Delta$}
& \gain{18.56} & \gain{14.07} & \gain{9.50} & \gain{5.36} & \gain{3.46}
& \gain{18.22} & \gain{12.43} & \gain{7.69} & \gain{4.75} & \gain{3.28}
& \gain{0.43} & \gain{0.51} & \gain{2.16} & \gain{2.42} & \gain{0.60}
& \loss{0.30} & \gain{0.01} & \gain{1.55} & \gain{1.03} & \gain{1.47} \\
\lightmidrule

% --- SmartCLIP ---
SmartCLIP
& 50.73 & 81.09 & 94.56 & 96.11 & 99.05 
& 51.30 & 80.83 & \textbf{94.04} & 95.51 & 98.96
& 4.84 & 16.84 & 34.63 & 39.03 & 57.60 
& 4.66 & 15.46 & 34.02 & 39.72 & 58.38 \\
\textbf{+our $\ourloss$}
& \textbf{52.94} & \textbf{81.26} & \textbf{94.77} & \textbf{96.13} & \textbf{99.20}
& \textbf{52.33} & \textbf{80.92} & \textbf{94.04} & \textbf{95.60} & \textbf{98.96}
& \textbf{5.61} & \textbf{16.89} & \textbf{34.80} & \textbf{40.16} & \textbf{60.61}
& \textbf{5.18} & \textbf{15.80} & \textbf{34.07} & \textbf{39.72} & \textbf{59.20} \\
\textit{$\Delta$}
& \gain{2.21} & \gain{0.17} & \gain{0.21} & \gain{0.02} & \gain{0.15}
& \gain{1.03} & \gain{0.09} & \neutral{0.00} & \gain{0.09} & \neutral{0.00}
& \gain{0.77} & \gain{0.05} & \gain{0.17} & \gain{1.13} & \gain{3.01}
& \gain{0.52} & \gain{0.34} & \gain{0.05} & \neutral{0.00} & \gain{0.82} \\
\lightmidrule

% --- GOAL ---
GOAL
& 63.21 & 87.13 & 93.44 & 97.67 & 99.05 
& 62.44 & 87.82 & 92.31 & 96.98 & 99.00
& \textbf{8.81} & 24.35 & 35.84 & 55.44 & 67.46 
& 8.55 & 25.91 & 36.18 & 53.02 & 66.41 \\
\textbf{+our $\ourloss$}
& \textbf{67.88} & \textbf{90.33} & \textbf{95.16} & \textbf{98.53} & \textbf{99.65}
& \textbf{68.48} & \textbf{89.81} & \textbf{94.82} & \textbf{98.27} & \textbf{99.31}
& \textbf{8.81} & \textbf{28.07} & \textbf{38.36} & \textbf{56.65} & \textbf{68.83}
& \textbf{8.75} & \textbf{27.12} & \textbf{39.21} & \textbf{54.49} & \textbf{68.05} \\
\textit{$\Delta$}
& \gain{4.67} & \gain{3.20} & \gain{1.72} & \gain{0.86} & \gain{0.60}
& \gain{6.04} & \gain{1.99} & \gain{2.51} & \gain{1.29} & \gain{0.31}
& \neutral{0.00} & \gain{3.72} & \gain{2.52} & \gain{1.21} & \gain{1.37}
& \gain{0.20} & \gain{1.21} & \gain{3.03} & \gain{1.47} & \gain{1.64} \\
\lightmidrule

SigLIP2

& 68.91 & 90.85 & 95.16 & \textbf{98.79} & 99.57 & 66.75 & 90.24 & 93.96 & \textbf{98.10} & 99.31
& 6.37 & 21.07 & 31.87 & 48.19 & 60.36  & 6.56 & 21.85 & 31.95 & 49.22 & 60.19  \\
\textbf{+our $\ourloss$}
& \textbf{73.49} & \textbf{91.97} & \textbf{95.77} & \textbf{98.79} & \textbf{99.60} & \textbf{70.38} & \textbf{91.54} & \textbf{95.77} & \textbf{98.10} & \textbf{99.35} & \textbf{7.69} & \textbf{23.49} & \textbf{33.42} & \textbf{50.52} & \textbf{63.04} & \textbf{6.99} & \textbf{23.66} & \textbf{35.06} & \textbf{52.16} & \textbf{63.21} \\
\textit{$\Delta$}
& \gain{4.58} & \gain{1.12} & \gain{0.61} & 0.00 & \gain{0.03} & \gain{3.63} & \gain{1.30} & \gain{1.81} & 0.00 & \gain{0.04} & \gain{1.32} & \gain{2.42} & \gain{1.55} & \gain{2.33} & \gain{2.68} & \gain{0.43} & \gain{1.81} & \gain{3.11} & \gain{2.94} & \gain{3.02} \\
\midrule
% --- LoRA ---
LoRA
& 57.08 & 84.72 & 91.80 & 96.46 & 98.70 
& 56.74 & 85.32 & 91.71 & 96.29 & 98.36
& 5.79 & 16.84 & 25.82 & 42.49 & 56.22 
& 4.84 & 16.49 & 24.96 & 40.33 & 56.30 \\
\textbf{+our $\ourloss$}
& \textbf{62.09} & \textbf{86.79} & \textbf{93.95} & \textbf{97.32} & \textbf{99.05}
& \textbf{59.41} & \textbf{85.92} & \textbf{92.75} & \textbf{97.24} & \textbf{98.97}
& \textbf{5.87} & \textbf{18.31} & \textbf{27.63} & \textbf{44.99} & \textbf{59.67}
& \textbf{5.32} & \textbf{18.65} & \textbf{28.41} & \textbf{44.82} & \textbf{56.99} \\
\textit{$\Delta$}
& \gain{5.01} & \gain{2.07} & \gain{2.15} & \gain{0.86} & \gain{0.35}
& \gain{2.67} & \gain{0.60} & \gain{1.04} & \gain{0.95} & \gain{0.61}
& \gain{0.08} & \gain{1.47} & \gain{1.81} & \gain{2.50} & \gain{3.45}
& \gain{0.48} & \gain{2.16} & \gain{3.45} & \gain{4.49} & \gain{0.69} \\
\lightmidrule

% --- DoRA ---
DoRA
& 61.77 & 86.18 & 91.88 & 97.32 & 98.88 
& 60.94 & 87.33 & 92.31 & 96.46 & 98.46
& \textbf{7.17} & 20.21 & 29.88 & 46.20 & 61.92 
& 6.04 & 20.81 & 31.52 & 46.29 & 60.02 \\
\textbf{+our $\ourloss$}
& \textbf{65.20} & \textbf{90.26} & \textbf{94.91} & \textbf{98.01} & \textbf{99.31}
& \textbf{64.94} & \textbf{88.35} & \textbf{93.52} & \textbf{97.58} & \textbf{98.96}
& 7.08 & \textbf{24.01} & \textbf{35.06} & \textbf{52.59} & \textbf{67.01}
& \textbf{8.29} & \textbf{24.35} & \textbf{35.15} & \textbf{52.76} & \textbf{66.58} \\
\textit{$\Delta$}
& \gain{3.43} & \gain{4.08} & \gain{3.03} & \gain{0.69} & \gain{0.43}
& \gain{4.00} & \gain{1.02} & \gain{1.21} & \gain{1.12} & \gain{0.50}
& \loss{0.09} & \gain{3.80} & \gain{5.18} & \gain{6.39} & \gain{5.09}
& \gain{2.25} & \gain{3.54} & \gain{3.63} & \gain{6.47} & \gain{6.56} \\

\midrule[2pt]
% ================= BOTTOM SECTION: DOCCI & DCI =================
\multirow{2}{*}{\textbf{Method}} &
\multicolumn{10}{c|}{\cellcolor{gray!15}\textbf{DOCCI}} &
\multicolumn{10}{c}{\cellcolor{gray!15}\textbf{DCI}} \\
\cmidrule(lr){2-11}\cmidrule(lr){12-21}
& \multicolumn{1}{c}{\cellcolor{lightpink}{R@1}} & \multicolumn{1}{c}{\cellcolor{lightpink}{R@5}} & \multicolumn{1}{c}{\cellcolor{lightpink}{R@10}} & \multicolumn{1}{c}{\cellcolor{lightpink}{R@25}} & \multicolumn{1}{c}{\cellcolor{lightpink}{R@50}}
& \multicolumn{1}{c}{\cellcolor{lightblue}{R@1}} & \multicolumn{1}{c}{\cellcolor{lightblue}{R@5}} & \multicolumn{1}{c}{\cellcolor{lightblue}{R@10}} & \multicolumn{1}{c}{\cellcolor{lightblue}{R@25}} & \multicolumn{1}{c|}{\cellcolor{lightblue}{R@50}}
& \multicolumn{1}{c}{\cellcolor{lightpink}{R@1}} & \multicolumn{1}{c}{\cellcolor{lightpink}{R@5}} & \multicolumn{1}{c}{\cellcolor{lightpink}{R@10}} & \multicolumn{1}{c}{\cellcolor{lightpink}{R@25}} & \multicolumn{1}{c}{\cellcolor{lightpink}{R@50}}
& \multicolumn{1}{c}{\cellcolor{lightblue}{R@1}} & \multicolumn{1}{c}{\cellcolor{lightblue}{R@5}} & \multicolumn{1}{c}{\cellcolor{lightblue}{R@10}} & \multicolumn{1}{c}{\cellcolor{lightblue}{R@25}} & \multicolumn{1}{c}{\cellcolor{lightblue}{R@50}} \\
\midrule

% --- Long-CLIP ---
Long-CLIP
& 64.49 & 87.67 & 93.43 & 97.73 & 99.14 
& 63.08 & 87.45 & 93.14 & 97.45 & 99.02
& 59.23 & 80.89 & 87.04 & 92.60 & 95.10 
& 60.13 & 81.44 & 87.54 & 92.85 & 95.60 \\
\textbf{+our $\ourloss$}
& \textbf{67.67} & \textbf{90.82} & \textbf{95.59} & \textbf{98.43} & \textbf{99.39}
& \textbf{67.92} & \textbf{90.16} & \textbf{95.10} & \textbf{98.22} & \textbf{99.47}
& \textbf{63.13} & \textbf{84.14} & \textbf{89.69} & \textbf{94.20} & \textbf{96.55}
& \textbf{64.33} & \textbf{86.19} & \textbf{89.14} & \textbf{94.20} & \textbf{96.50} \\
\textit{$\Delta$}
& \gain{3.18} & \gain{3.15} & \gain{2.16} & \gain{0.70} & \gain{0.25}
& \gain{4.84} & \gain{2.71} & \gain{1.96} & \gain{0.77} & \gain{0.45}
& \gain{3.90} & \gain{3.25} & \gain{2.65} & \gain{1.60} & \gain{1.45}
& \gain{4.20} & \gain{4.75} & \gain{1.60} & \gain{1.35} & \gain{0.90} \\
\lightmidrule

% --- FineLIP ---
FineLIP
& 67.80 & 90.22 & 94.84 & 98.22 & 99.45 
& 66.39 & 89.12 & 94.47 & 97.90 & 99.20
& 66.13 & 85.34 & 89.79 & 94.14 & 96.35 
& 64.58 & 84.59 & 89.54 & 94.00 & 96.40 \\
\textbf{+our $\ourloss$}
& \textbf{74.06} & \textbf{94.24} & \textbf{97.35} & \textbf{99.23} & \textbf{99.80}
& \textbf{72.94} & \textbf{93.27} & \textbf{96.55} & \textbf{98.78} & \textbf{99.61}
& \textbf{68.88} & \textbf{86.64} & \textbf{91.10} & \textbf{95.24} & \textbf{96.90}
& \textbf{67.33} & \textbf{86.69} & \textbf{90.75} & \textbf{94.85} & \textbf{97.00} \\
\textit{$\Delta$}
& \gain{6.26} & \gain{4.02} & \gain{2.51} & \gain{1.01} & \gain{0.35}
& \gain{6.55} & \gain{4.15} & \gain{2.08} & \gain{0.88} & \gain{0.41}
& \gain{2.75} & \gain{1.30} & \gain{1.31} & \gain{1.10} & \gain{0.55}
& \gain{2.75} & \gain{2.10} & \gain{1.21} & \gain{0.85} & \gain{0.60} \\
\lightmidrule

% --- SmartCLIP ---
SmartCLIP
& 74.92 & 94.08 & 97.31 & 99.37 & 99.82 
& 74.91 & 94.04 & 97.29 & 99.32 & 99.84
& 69.88 & 86.64 & 94.05 & 95.00 & 97.25 
& 70.94 & 87.04 & 92.77 & 95.75 & 97.05 \\
\textbf{+our $\ourloss$}
& \textbf{77.39} & \textbf{95.57} & \textbf{98.66} & \textbf{99.47} & \textbf{99.94}
& \textbf{77.10} & \textbf{95.49} & \textbf{98.34} & \textbf{99.86} & \textbf{99.89}
& \textbf{69.93} & \textbf{86.94} & \textbf{94.35} & \textbf{95.10} & \textbf{97.30}
& \textbf{71.14} & \textbf{87.64} & \textbf{94.10} & \textbf{95.80} & \textbf{98.60} \\
\textit{$\Delta$}
& \gain{2.47} & \gain{1.49} & \gain{1.35} & \gain{0.10} & \gain{0.12}
& \gain{2.19} & \gain{1.45} & \gain{1.05} & \gain{0.54} & \gain{0.05}
& \gain{0.05} & \gain{0.30} & \gain{0.30} & \gain{0.10} & \gain{0.05}
& \gain{0.20} & \gain{0.60} & \gain{1.33} & \gain{0.05} & \gain{0.55} \\
\lightmidrule

% --- GOAL ---
GOAL
& 79.47 & 96.65 & 98.69 & 99.69 & \textbf{99.92}
& 79.43 & 96.14 & 97.25 & 99.61 & 99.90
& 72.64 & \textbf{89.89} & 93.70 & 95.75 & 97.25 
& 72.84 & \textbf{90.50} & 93.20 & 96.60 & 97.60 \\
\textbf{+our $\ourloss$}
& \textbf{80.96} & \textbf{96.90} & \textbf{98.96} & \textbf{99.76} & \textbf{99.92}
& \textbf{80.31} & \textbf{96.73} & \textbf{98.84} & \textbf{99.75} & \textbf{99.94}
& \textbf{72.89} & 89.79 & \textbf{94.40} & \textbf{96.15} & \textbf{97.65}
& \textbf{73.89} & 89.93 & \textbf{93.50} & \textbf{96.80} & \textbf{98.15} \\
\textit{$\Delta$}
& \gain{1.49} & \gain{0.25} & \gain{0.27} & \gain{0.07} & \neutral{0.00}
& \gain{0.88} & \gain{0.59} & \gain{0.33} & \gain{0.14} & \gain{0.04}
& \gain{0.25} & \loss{0.10} & \gain{0.7} & \gain{0.20} & \gain{0.40}
& \gain{1.05} & \loss{0.57} & \gain{0.30} & \gain{0.20} & \gain{0.25} \\
\lightmidrule

% --- SIGLIP2 ---
SigLIP2
 & 71.80 & 92.53 & 95.88 & 98.78 & 99.43 & 71.51 & 92.41 & 96.06 & 98.51 & 99.39
& 66.13 & 84.49 & 89.34 & 94.00 & 96.40 & 65.08 & 84.54 & 89.84 & 94.65 & \textbf{96.95} \\
\textbf{+our $\ourloss$}
& \textbf{75.47} & \textbf{94.82} & \textbf{97.67} & \textbf{99.30} & \textbf{99.78} & \textbf{73.59} & \textbf{94.33} & \textbf{97.43} & \textbf{99.37} & \textbf{99.84} & \textbf{67.14} & \textbf{86.54} & \textbf{90.65} & \textbf{94.85} & \textbf{96.85} & \textbf{66.78} & \textbf{86.09} & \textbf{90.50} & \textbf{94.75} & 96.90 \\
\textit{$\Delta$}
& \gain{3.67} & \gain{2.29} & \gain{1.79} & \gain{0.52} & \gain{0.35} & \gain{2.08} & \gain{1.92} & \gain{1.37} & \gain{0.86} & \gain{0.45} & \gain{1.01} & \gain{2.05} & \gain{1.31} & \gain{0.85} & \gain{0.45} & \gain{1.7} & \gain{1.55} & \gain{0.66} & \gain{0.10} & \loss{0.05} \\
\midrule

% --- LoRA ---
LoRA
& 77.80 & 96.45 & 98.55 & 99.11 & 99.14 
& 77.00 & 96.02 & 98.05 & 99.32 & 99.14
& 72.04 & 88.69 & 92.35 & 95.50 & 97.15 
& 72.04 & \textbf{89.24} & \textbf{93.20} & 95.00 & 97.40 \\
\textbf{+our $\ourloss$}
& \textbf{79.35} & \textbf{96.61} & \textbf{98.57} & \textbf{99.61} & \textbf{99.40}
& \textbf{78.65} & \textbf{96.31} & \textbf{98.45} & \textbf{99.63} & \textbf{99.86}
& \textbf{74.44} & \textbf{89.44} & \textbf{92.85} & \textbf{95.95} & \textbf{97.20}
& \textbf{73.89} & \textbf{89.24} & 92.60 & \textbf{96.40} & \textbf{97.70} \\
\textit{$\Delta$}
& \gain{1.55} & \gain{0.16} & \gain{0.02} & \gain{0.50} & \gain{0.26}
& \gain{1.65} & \gain{0.29} & \gain{0.40} & \gain{0.31} & \gain{0.72}
& \gain{2.40} & \gain{0.75} & \gain{0.50} & \gain{0.45} & \gain{0.05}
& \gain{1.85} & \neutral{0.00} & \loss{0.60} & \gain{1.40} & \gain{0.30} \\
\lightmidrule

% --- DoRA ---
DoRA
& 66.27 & 90.50 & 95.18 & 98.49 & 99.67 
& 65.65 & 89.55 & 94.78 & 98.41 & 99.36
& 60.33 & 82.19 & 88.09 & 93.45 & 95.75 
& 60.53 & 81.84 & 87.79 & 93.40 & 95.30 \\
\textbf{+our $\ourloss$}
& \textbf{70.65} & \textbf{92.47} & \textbf{96.57} & \textbf{99.02} & \textbf{99.73}
& \textbf{69.22} & \textbf{91.80} & \textbf{95.98} & \textbf{98.98} & \textbf{99.61}
& \textbf{62.58} & \textbf{82.34} & \textbf{88.34} & \textbf{93.75} & \textbf{96.10}
& \textbf{62.78} & \textbf{82.99} & \textbf{88.39} & \textbf{94.25} & \textbf{96.75} \\
\textit{$\Delta$}
& \gain{4.38} & \gain{1.97} & \gain{1.39} & \gain{0.53} & \gain{0.06}
& \gain{3.57} & \gain{2.25} & \gain{1.20} & \gain{0.57} & \gain{0.25}
& \gain{2.25} & \gain{0.15} & \gain{0.25} & \gain{0.30} & \gain{0.35}
& \gain{2.25} & \gain{1.15} & \gain{0.60} & \gain{0.85} & \gain{1.45} \\

\bottomrule
\end{tabular}
}
\vspace{-8pt}
\end{table*}
%%%%%%%%%%%%%%%%%%%%%%%%%%%%%%%%%%%%%%%%%%

%% file: tables/supp_dataefficiency.tex
\begin{table*}[t!]
\centering
\caption{
\textbf{Data Efficiency Analysis.}
We report mean Recall@K (\%) on SKETCHY, INSECT, DOCCI, and DCI using \textbf{5\%}, \textbf{20\%}, \textbf{50\%} respectively of the training data.
Results are reported for K = 1, 5, 10, 25, 50 on both
\inlineColorbox{lightpink}{\emph{Text$\rightarrow$Image}} and
\inlineColorbox{lightblue}{\emph{Image$\rightarrow$Text}}.
\textbf{Bold} indicates the best result, while \underline{underline} denotes the second-best.
}
\label{tab:data_efficiency_5}
\vspace{-5pt}

\setlength{\tabcolsep}{3.5pt}

% ===================== SKETCHY & INSECT (5%) =====================
\resizebox{\textwidth}{!}{
\begin{tabular}{@{} l c c c c c | c c c c c | c c c c c | c c c c c @{}}

\toprule
\textbf{Setting} & \cellcolor{lightpink}R@1 & \cellcolor{lightpink}R@5 & \cellcolor{lightpink}R@10 & \cellcolor{lightpink}R@25 & \cellcolor{lightpink}R@50
& \cellcolor{lightblue}R@1 & \cellcolor{lightblue}R@5 & \cellcolor{lightblue}R@10 & \cellcolor{lightblue}R@25 & \cellcolor{lightblue}R@50
& \cellcolor{lightpink}R@1 & \cellcolor{lightpink}R@5 & \cellcolor{lightpink}R@10 & \cellcolor{lightpink}R@25 & \cellcolor{lightpink}R@50
& \cellcolor{lightblue}R@1 & \cellcolor{lightblue}R@5 & \cellcolor{lightblue}R@10 & \cellcolor{lightblue}R@25 & \cellcolor{lightblue}R@50 \\ 
\midrule

\cellcolor{yellow!10}\textbf{5\% Data} & \multicolumn{10}{c|}{\cellcolor{panelgray}\textbf{SKETCHY}} 
& \multicolumn{10}{c}{\cellcolor{panelgray}\textbf{INSECT}}\\

\lightmidrule
Long-CLIP{\textcolor{gray}{[ECCV'24]}}
&21.42 & 47.58 & 59.59 & 76.77 & 86.18 & 23.40 & 49.14 & 62.44 & 78.24 & 87.31 & 2.50 & 6.99 & 10.88 & 18.83 & 27.98 & 2.68 & 8.03 & 12.61 & 20.55 & 30.66 \\
FineLIP{\textcolor{gray}{[CVPR'25]}}
&30.92 & 60.28 & 71.68 & 85.49 & 91.88 & 31.00 & 59.50 & 70.12 & 84.97 & 92.57 & 2.50 & \underline{8.12} & \underline{11.87} & \underline{22.13} & \underline{32.99} & 3.28 & 8.33 & 13.04 & \underline{22.31} & 32.06 \\
SmartCLIP{\textcolor{gray}{[CVPR'25]}}
&25.22 & 53.02 & \underline{77.12} & 81.35 & 90.59 & 27.98 & 54.15 & 75.76 & 82.90 & 91.97 & \underline{2.68} & 7.43 & 11.66 & 19.86 & 30.14 & \underline{3.37} & \underline{8.44} & \underline{13.15} & 21.66 & \underline{32.25} \\
GOAL{\textcolor{gray}{[CVPR'25]}}
& \underline{33.42} & \underline{63.99} & 74.18 & \underline{86.23} & \underline{92.49} & \underline{35.05} & \underline{64.08} & \underline{75.82} & \underline{88.69} & \underline{93.53} & \underline{2.68} & 7.51 & 11.83 & 22.11 & 31.35 & 3.34 & 8.20 & 12.33 & 21.24 & 31.78  \\
\textbf{\ourmethod}
& \textbf{36.70} & \textbf{68.05} & \textbf{78.07} & \textbf{88.43} & \textbf{94.39} & \textbf{35.75} & \textbf{68.05} & \textbf{77.46} & \textbf{89.12} & \textbf{94.91} & \textbf{3.33} & \textbf{8.46} & \textbf{12.87} & \textbf{22.28} & \textbf{33.68} & \textbf{3.57} & \textbf{8.98} & \textbf{13.56} & \textbf{22.54} & \textbf{33.16}  \\
\lightmidrule
\textit{$\Delta$}
& \gain{3.28} & \gain{4.06} & \gain{0.95} & \gain{2.20} & \gain{1.90} & \gain{0.70} & \gain{3.97} & \gain{1.64} & \gain{0.43} & \gain{1.38} & \gain{0.65} & \gain{0.34} & \gain{1.00} & \gain{0.15} & \gain{0.69} & \gain{0.20} & \gain{0.54} & \gain{0.41} & \gain{0.23} & \gain{0.91}  \\
									
\midrule

% ===================== DOCCI & DCI (5%) =====================

\cellcolor{yellow!10}\textbf{5\% Data}& \multicolumn{10}{c|}{\cellcolor{panelgray}\textbf{DOCCI}} & \multicolumn{10}{c}{\cellcolor{panelgray}\textbf{DCI}}\\

\lightmidrule
Long-CLIP{\textcolor{gray}{[ECCV'24]}}
& 61.20 & 86.33 & 92.61 & 97.43 & 99.08 & 61.57 & 86.51 & 92.51 & 97.16 & 98.98 & 53.73 & 75.34 & 82.24 & 88.79 & 92.25 & 53.98 & 76.69 & 83.74 & 90.05 & 93.85 \\
FineLIP{\textcolor{gray}{[CVPR'25]}}
& 66.00 & 89.73 & 94.75 & 97.92 & 99.15 & 63.65 & 87.39 & 92.20 & 97.06 & 98.20 & 58.08 & 79.74 & 85.79 & 91.45 & \textbf{95.15} & 60.88 & 80.29 & 86.64 & 93.35 & 95.80 \\
SmartCLIP{\textcolor{gray}{[CVPR'25]}}
& 65.12 & 89.33 & 94.37 & \underline{98.25} & 99.31 & 65.71 & 89.76 & 95.20 & \underline{98.27} & \underline{99.00} & 57.78 & 78.14 & 87.33 & 90.60 & 93.78 & 56.13 & 78.84 & 88.14 & 91.05 & 95.30 \\
GOAL{\textcolor{gray}{[CVPR'25]}}
& \underline{66.45} & \underline{91.29} & \underline{95.00} & 98.17 & \underline{99.57} & \underline{67.85} & \underline{91.47} & \underline{95.44} & 98.20 & \underline{99.00} & \underline{60.88} & \underline{81.54} & \underline{87.59} & \underline{92.25} & 94.90 & \underline{63.13} & \underline{83.14} & \underline{88.94} & \underline{93.55} & \underline{96.15}  \\
\textbf{\ourmethod}
& \textbf{68.92} & \textbf{91.84} & \textbf{95.84} & \textbf{98.49} & \textbf{99.61} & \textbf{69.76} & \textbf{91.61} & \textbf{95.76} & \textbf{98.43} & \textbf{99.49} & \textbf{62.53} & \textbf{82.99} & \textbf{88.20} & \textbf{93.15} & \underline{95.10} & \textbf{64.03} & \textbf{84.34} & \textbf{89.19} & \textbf{93.80} & \textbf{96.60}  \\
\lightmidrule
\textit{$\Delta$}
& \gain{2.47} & \gain{0.55} & \gain{0.84} & \gain{0.24} & \gain{0.04} & \gain{1.91} & \gain{0.14} & \gain{0.32} & \gain{0.16} & \gain{0.49} & \gain{1.65} & \gain{1.45} & \gain{0.61} & \gain{0.90} & \loss{0.05} & \gain{0.9} & \gain{1.20} & \gain{0.25} & \gain{0.25} & \gain{0.45}  \\

\midrule

\cellcolor{yellow!10}\textbf{20\% Data}& \multicolumn{10}{c|}{\cellcolor{panelgray}\textbf{SKETCHY}} 
& \multicolumn{10}{c}{\cellcolor{panelgray}\textbf{INSECT}}\\

% ===================== SKETCHY & INSECT (20%) =====================

\lightmidrule
Long-CLIP{\textcolor{gray}{[ECCV'24]}}
& 35.75 & 64.85 & 75.73 & 85.92 & 93.96
& 36.01 & 65.72 & 76.86 & 88.08 & 94.39 
& \underline{4.05} & 11.05 & 18.74 & 30.92 & 41.54
& 4.15 & \textbf{14.68} & 20.29 & 32.82 & 43.70 \\
FineLIP{\textcolor{gray}{[CVPR'25]}}
& 38.17 & 69.17 & 78.84 & 89.81 & 95.51
& 35.75 & 67.18 & 78.41 & 88.32 & 94.99 
& 3.63 & 12.44 & \underline{19.08} & 30.31 & 43.09
& 4.23 & 13.64 & 20.81 & 31.69 & 43.52\\
SmartCLIP{\textcolor{gray}{[CVPR'25]}}
& 38.00 & 68.48 & \underline{88.08} & 90.85 & \underline{97.24}
& 38.95 & 69.86 & \underline{88.43} & 90.85 & 96.29 
& 3.45 & 9.54 & 15.28 & 28.14 & 40.39
& 3.20 & 12.15 & 19.39 & 29.74 & 40.01\\
GOAL{\textcolor{gray}{[CVPR'25]}}
& \underline{46.63} & \underline{77.37} & 85.06 & \underline{92.45} & 96.90
& \underline{46.46} & \underline{76.94} & 86.10 & \underline{92.66} & \underline{96.43} 
& 3.21 & \underline{12.80} & 18.83 & \underline{33.51} & \underline{45.37}
& \underline{4.33} & 13.99 & \underline{21.16} & \underline{33.15} & \underline{46.06} \\
\textbf{\ourmethod}
& \textbf{53.20} & \textbf{80.22} & \textbf{88.26} & \textbf{94.39} & \textbf{97.50}
& \textbf{49.48} & \textbf{79.45} & \textbf{88.51} & \textbf{94.73} & \textbf{97.15} & \textbf{4.58} & \textbf{13.30} & \textbf{19.60} & \textbf{34.11} & \textbf{47.58}
& \textbf{4.92} & \underline{14.16} & \textbf{22.12} & \textbf{34.22} & \textbf{46.72}\\
\lightmidrule
\textit{$\Delta$}
& \gain{6.57} & \gain{2.85} & \gain{0.18} & \gain{1.94} & \gain{0.26}
& \gain{3.02} & \gain{2.51} & \gain{0.08} & \gain{2.07} & \gain{0.72} 
& \gain{0.53} & \gain{0.50} & \gain{0.52} & \gain{0.60} & \gain{2.21}
& \gain{0.59} & \loss{0.52} & \gain{0.96} & \gain{1.07} & \gain{0.66} \\
									
\midrule

% ===================== DOCCI & DCI (20%) =====================

\cellcolor{yellow!10}\textbf{20\% Data}& \multicolumn{10}{c|}{\cellcolor{panelgray}\textbf{DOCCI}} & \multicolumn{10}{c}{\cellcolor{panelgray}\textbf{DCI}}\\

\lightmidrule
Long-CLIP{\textcolor{gray}{[ECCV'24]}}
& 62.16 & 86.75 & 92.61 & 97.14 & 99.00
& 61.61 & 86.67 & 92.00 & 97.37 & 99.02 
& 56.53 & 79.64 & 85.54 & 91.05 & 94.00
& 57.33 & 79.14 & 85.84 & 91.00 & 94.40 \\
FineLIP{\textcolor{gray}{[CVPR'25]}}
& 67.18 & 89.92 & 94.00 & 98.10 & 99.22
& 63.96 & 87.80 & 92.40 & 97.41 & 98.37 
& 63.98 & 83.79 & 88.69 & 93.05 & 95.85
& 62.83 & 83.79 & 88.69 & 93.05 & 95.85\\
SmartCLIP{\textcolor{gray}{[CVPR'25]}}
& 70.00 & 91.69 & 95.31 & 98.80 & 99.41
& 72.12 & 92.64 & 96.24 & \underline{99.00} & 99.38 
& \underline{64.78} & 82.84 & 89.00 & 92.89 & 95.97
& 62.13 & 83.09 & 89.44 & 93.45 & 96.15\\
GOAL{\textcolor{gray}{[CVPR'25]}}
& \underline{71.43} & \underline{93.12} & \underline{96.90} & \underline{99.18} & \underline{99.75}
& \underline{72.94} & \underline{93.63} & \underline{96.84} & \underline{99.00} & \underline{99.67} 
& 64.63 & \underline{84.84} & \underline{90.05} & \underline{93.90} & \underline{96.20}
& \underline{65.73} & \underline{85.29} & \underline{90.65} & \underline{94.65} & \underline{97.00}\\
\textbf{\ourmethod}
& \textbf{77.18} & \textbf{95.61} & \textbf{97.82} & \textbf{99.47} & \textbf{99.82}
& \textbf{76.47} & \textbf{95.31} & \textbf{97.86} & \textbf{99.39} & \textbf{99.82} 
& \textbf{67.03} & \textbf{86.24} & \textbf{90.60} & \textbf{94.59} & \textbf{96.50}
& \textbf{69.03} & \textbf{87.34} & \textbf{91.35} & \textbf{95.45} & \textbf{97.50} \\
\lightmidrule
\textit{$\Delta$}
& \gain{5.75} & \gain{2.49} & \gain{0.92} & \gain{0.29} & \gain{0.07}
& \gain{3.53} & \gain{1.68} & \gain{1.02} & \gain{0.39} & \gain{0.15} 
& \gain{2.25} & \gain{1.40} & \gain{0.55} & \gain{0.69} & \gain{0.30}
& \gain{3.30} & \gain{2.05} & \gain{0.70} & \gain{0.80} & \gain{0.50}\\

\midrule

\cellcolor{yellow!10}\textbf{50\% Data}& \multicolumn{10}{c|}{\cellcolor{panelgray}\textbf{SKETCHY}} 
& \multicolumn{10}{c}{\cellcolor{panelgray}\textbf{INSECT}}\\

\lightmidrule
Long-CLIP{\textcolor{gray}{[ECCV'24]}}
&47.93 & 74.01 & 82.99 & 92.23 & 96.29 & 43.96 & 73.58 & 82.56 & 91.88 & 96.03 & 5.18 & 15.80 & 25.47 & 40.76 & 53.28 & 4.75 & 18.13 & \underline{27.63} & 40.59 & 53.11 \\
FineLIP{\textcolor{gray}{[CVPR'25]}}
&39.98 & 69.44 & 79.97 & 90.15 & 95.60 & 37.65 & 68.91 & 79.10 & 88.95 & 94.99 & 5.61 & 17.10 & 24.96 & \underline{41.11} & 54.75 & 5.01 & 17.44 & 26.34 & 41.02 & 54.06 \\
SmartCLIP{\textcolor{gray}{[CVPR'25]}}
&45.34 & 76.34 & \underline{90.93} & 93.01 & 97.25 & 46.46 & 74.70 & \underline{90.16} & 92.06 & 97.58 & 3.57 & 11.92 & 19.74 & 33.31 & 43.71 & 3.63 & 13.31 & 21.67 & 32.97 & 42.06 \\
GOAL{\textcolor{gray}{[CVPR'25]}}
& \underline{55.79} & \underline{82.73} & 88.35 & \underline{96.20} & \underline{98.10} & \underline{55.27} & \underline{82.04} & 89.12 & \underline{95.42} & \underline{98.36} & \underline{5.87} & \underline{18.13} & \underline{25.51} & 41.02 & \underline{55.22} & \underline{5.35} & \underline{18.77} & 26.84 & \underline{43.05} & \underline{56.55}  \\
\textbf{\ourmethod}
& \textbf{60.97} & \textbf{87.48} & \textbf{92.49} & \textbf{96.89} & \textbf{98.70} & \textbf{59.41} & \textbf{85.58} & \textbf{91.80} & \textbf{95.55} & \textbf{98.70} & \textbf{6.56} & \textbf{20.12} & \textbf{29.17} & \textbf{43.78} & \textbf{57.77} & \textbf{6.22} & \textbf{20.64} & \textbf{29.84} & \textbf{44.13} & \textbf{57.94}\\
\lightmidrule
\textit{$\Delta$}
& \gain{5.18} & \gain{4.75} & \gain{1.56} & \gain{0.69} & \gain{0.60} & \gain{4.14} & \gain{3.54} & \gain{1.64} & \gain{0.13} & \gain{0.34} & \gain{0.69} & \gain{1.99} & \gain{3.66} & \gain{2.67} & \gain{2.55} & \gain{0.87} & \gain{1.87} & \gain{2.21} & \gain{1.08} & \gain{1.39}\\
									
\midrule

% ===================== DOCCI & DCI (50%) =====================

\cellcolor{yellow!10}\textbf{50\% Data}& \multicolumn{10}{c|}{\cellcolor{panelgray}\textbf{DOCCI}} & \multicolumn{10}{c}{\cellcolor{panelgray}\textbf{DCI}}\\

\lightmidrule
Long-CLIP{\textcolor{gray}{[ECCV'24]}}
& 63.49 & 87.49 & 93.20 & 97.33 & 99.06 & 62.20 & 87.41 & 92.90 & 97.40 & 99.02 & 58.63 & 80.79 & 85.99 & 91.65 & 94.65 & 59.04 & 80.89 & 85.69 & 92.20 & 95.35 \\
FineLIP{\textcolor{gray}{[CVPR'25]}}
& 67.20 & 90.06 & 94.75 & 98.22 & 99.31 & 64.29 & 87.94 & 93.10 & 97.72 & 98.86 & 65.43 & 84.44 & 88.10 & 94.00 & 96.05 & 63.83 & 84.44 & 89.19 & 93.14 & 95.00 \\
SmartCLIP{\textcolor{gray}{[CVPR'25]}}
& 73.75 & 94.24 & 97.41 & 99.31 & \underline{99.77} & 74.00 & 94.04 & 97.17 & 98.84 & \underline{99.65} & 66.98 & 84.79 & 90.77 & 93.40 & 96.00 & 66.20 & 84.84 & 91.15 & 94.65 & \underline{97.12} \\
GOAL{\textcolor{gray}{[CVPR'25]}}
& \underline{75.25} & \underline{94.61} & \underline{97.59} & \underline{99.51} & \textbf{99.88} & \underline{75.73} & \underline{94.53} & \underline{97.37} & \underline{99.45} & \textbf{99.80} & \underline{69.03} & \underline{86.84} & \underline{91.10} & \underline{94.95} & \underline{96.50} & \underline{67.63} & \underline{86.69} & \underline{91.20} & \underline{95.65} & 97.05 \\
\textbf{\ourmethod}
& \textbf{79.78} & \textbf{96.37} & \textbf{98.31} & \textbf{99.61} & \textbf{99.88} & \textbf{78.45} & \textbf{95.98} & \textbf{98.43} & \textbf{99.57} & \textbf{99.80} & \textbf{71.19} & \textbf{87.84} & \textbf{91.30} & \textbf{95.55} & \textbf{97.05} & \textbf{71.64} & \textbf{87.39} & \textbf{92.65} & \textbf{96.90} & \textbf{97.50}  \\
\lightmidrule
\textit{$\Delta$}
& \gain{4.53} & \gain{1.76} & \gain{0.72} & \gain{0.10} & 0.00 & \gain{2.72} & \gain{1.45} & \gain{1.06} & \gain{0.12} & 0.00 & \gain{2.16} & \gain{1.00} & \gain{0.20} & \gain{0.60} & \gain{0.55} & \gain{4.01} & \gain{0.70} & \gain{1.45} & \gain{1.25} & \gain{0.38}  \\

\bottomrule

\end{tabular}
}
\vspace{0.6em}
\end{table*}